\newcommand{\ind}{\mathbb{I}}
\DeclareFontShape{T1}{ptm}{m}{scit}{<-> ssub * ptm/m/sc}{}
\newcommand{\Appendix}[1]{the full version for}
\newtheorem{theorem}{Theorem}[section]
\newtheorem{lemma}[theorem]{Lemma}
\newtheorem{proposition}[theorem]{Proposition}
\newtheorem{remark}{Remark}
\newtheorem{definition}{Definition}
\newcommand{\R}{\mathbb{R}}
\newcommand{\1}{\mathbf{1}}
\newcommand{\cF}{\mathcal{F}}
\newcommand{\eps}{\epsilon}
\DeclareMathOperator*{\argmax}{argmax}
\title{Tuning Algorithmic and Architectural Hyperparameters in Graph-Based Semi-Supervised Learning with Provable Guarantees}
\author[1]{Ally Yalei Du}
\author[1]{Eric Huang}
\author[2]{Dravyansh Sharma\footnote{Most of the work was done while DS was at CMU.}}
\affil[1]{Carnegie Mellon University
}
\affil[2]{Toyota Technological Institute at Chicago}
\date{}
\begin{document}
\maketitle

\begin{abstract}
Graph-based semi-supervised learning is a powerful paradigm in machine learning for modeling and exploiting the underlying graph structure that captures the relationship between labeled and unlabeled data. A large number of classical as well as modern deep learning based algorithms have been proposed for this problem, often having tunable hyperparameters. 
We initiate a formal study of tuning algorithm hyperparameters from parameterized algorithm families for this problem.
We obtain novel $O(\log n)$ pseudo-dimension upper bounds for hyperparameter selection in three classical label propagation-based algorithm families, where $n$ is the number of nodes, implying bounds on the amount of data needed for learning provably good parameters.
We further provide matching $\Omega(\log n)$ pseudo-dimension lower bounds, thus asymptotically characterizing the learning-theoretic complexity of the parameter tuning problem.
We extend our study to selecting architectural hyperparameters in modern graph neural networks. We bound the Rademacher complexity for tuning the self-loop weighting in recently proposed Simplified Graph Convolution (SGC) networks.
We further propose a tunable architecture that interpolates graph convolutional neural networks (GCN) and graph attention networks (GAT) in every layer, and provide Rademacher complexity bounds for tuning the interpolation coefficient. 
\end{abstract}

\section{Introduction}
Semi-supervised learning is a powerful paradigm in machine learning which reduces the dependence on expensive and hard-to-obtain labeled data, by using a combination of labeled and unlabeled data. This has become increasingly relevant in the era of large language models, where an extremely large amount of labeled training data is needed. A large number of techniques have been proposed in the literature to exploit the structure of unlabeled data, including popularly used graph-based semi-supervised learning algorithms \citep{Blum2001LearningFL,zhu2003semi,zhou2003learning,delalleau2005efficient,chapelle2009semi}. More recently, there has been an increasing interest in developing effective neural network architectures for graph-based learning \citep{kipf2016semi,velivckovic2017graph,iscen2019label}. However, different algorithms, architectures, and values of hyperparameters perform well on different datasets \citep{dwivedi2023benchmarking}, and there is no principled way of selecting the best approach for the data at hand. In this work, we initiate the study of theoretically principled techniques for learning hyperparameters from infinitely large semi-supervised learning algorithm families.

In graph-based semi-supervised learning, the graph nodes consist of labeled and unlabeled data points, and the graph edges denote feature similarity between the nodes. There are several classical ways of defining a graph-based regularization objective that depend on the available and predicted labels as well as the graph structure. Optimizing this objective yields the predicted labels and the accuracy of the predictions depends on the chosen objective. The performance of the same objective may vary across datasets. By studying parameterized families of objectives, we can learn to design the objective that works best on a given domain-specific data. Similarly, modern deep learning based techniques often have several candidate architectures and choices for hyperparameters, often manually optimized for each application domain.
Recent work has considered the problem of learning the graph hyperparameter used in semi-supervised learning \citep{balcan2021data,fatemi2021slaps} but leaves the problem of selecting the hyperparameter wide open.
In this paper, we take important initial steps to build the theoretical foundations of algorithm hyperparameter selection in graph-based semi-supervised learning. 

Note that we focus specifically on algorithm hyperparameters, such as self-loop weights, leaving optimization hyperparameters like learning rates outside the scope of this study.\looseness-1

\subsection{Contributions}

\begin{itemize}[leftmargin=*]
    \item We study hyperparameter tuning in three canonical label propagation-based semi-supervised learning algorithms: the local and global consistency~\citep{zhou2003learning}, the smoothing-based~\citep{delalleau2005efficient}, and a novel normalized adjacency matrix-based algorithm. We prove new $O\left(\log n\right)$ pseudo-dimension upper bounds for all three families, where $n$ is the number of graph nodes. Our proofs rely on a unified template based on determinant evaluation and root-counting, which may be of independent interest.
    \item We provide matching $\Omega\left(\log n\right)$ pseudo-dimension lower bounds for all three aforementioned families. Our proof involves novel constructions of a class of partially labeled graphs that exhibit fundamental limitations in tuning the label propagation algorithms. We note that our lower bound proofs are particularly subtle and technically challenging, and involve the design of a carefully constructed set of problem instances and hyperparameter thresholds that shatter these instances.
    \item Next, we consider the modern graph neural networks (GNNs). We prove a new Rademacher complexity bound for tuning the weight of self-loops for a popular architecture proposed in \cite{wu19simplifying}, the Simplified Graph Networks (SGC). 
    \item We propose an architecture (GCAN) where a hyperparameter $\eta$ is introduced to interpolate two canonical GNN architectures: graph convolutional neural networks (GCNs) and graph attention neural networks (GATs). We bound the Rademacher complexity of tuning $\eta$. Because the parameter dimension is different, the Rademacher complexity of SGC and GCAN has different dependencies on the feature dimension $d$: $\sqrt{d}$ for SGC while $d$ for GCAN. 
    \item We conducted experiments to demonstrate the effectiveness of our hyperparameter selection framework.
\end{itemize}

\subsection{Related Work}
\paragraph{Graph Based Semi-supervised Learning} Semi-supervised learning is a popular machine learning paradigm with significant theoretical interest~\citep{zhou2003learning,delalleau2005efficient,Balcan2010ADM,garg2020generaliz}. Classical algorithms focus on label-propagation based techniques, such as \citet{zhou2003learning}, \citet{zhu2003semi}, and many more. 
In recent years, graph neural networks (GNNs) have become increasingly popular in a wide range of application domains
~\citep{kipf2016semi,velivckovic2017graph,iscen2019label}.
A large number of different architectures have been proposed, including graph convolution networks, graph attention networks, message passing, and so on \citep{dwivedi2023benchmarking}. 
Both label propagation-based algorithms and neural network-based algorithms are practically useful~\citep{Balcan2005PersonII,kipf2016semi}. For example, although GNN-based algorithms are more predominant in applications, \cite{huang2020combininglabelpropagationsimple} show that modifications to label propagation-based algorithms can outperform GNN. For node classification in GNN, many work study generalization guarantees for tuning network weights in GNNs ~\citep{oono2021optimizationgeneralizationanalysistransduction, esser2021learningtheorysometimesexplain, tang2023understandinggeneralizationgraphneural}. In contrast, we study the tuning of the \textit{hyperparameters} related to the GNN architecture. 

\paragraph{Hyperparameter Selection} Hyper-parameters, such as the weight for self-loop, play important roles in the performance of both classical methods and GNNs. 
In general, hyperparameter tuning is performed on a validation dataset, and follows the same procedure: determine which hyperparameters to tune and then search within their domain for the combination of parameter values with best performance \citep{yu2020hyperparameteroptimizationreviewalgorithms}. Many methods are proposed to efficiently search within the parameter space, such as grid search, random search \citep{JMLR:v13:bergstra12a}, and Bayesian optimization (\cite{Mockus1974bayesian}; \cite{Mockus1978application}; \cite{jones1998efficient}). A few existing works investigate the theoretical aspects of these methods, such as through generalization guarantees and complexities of the algorithms. 

A recently introduced paradigm called data-driven algorithm design is useful for obtaining formal guarantees for hyperparameter tuning~\citep{balcan2020data,sharma2024data}, and has found several applications (for instance, \citealt{blum2021learning,bartlett2022generalization,balcan2023analysis,balcan2024subsidy}). In particular, \citet{balcan2024provablytuningelasticnetinstances,Balcan2023NewBF,balcan2025distribution} study the regularization hyperparameter in the ElasticNet  and \cite{balcan2024learning} study learning decision tree algorithms. For unsupervised learning, \citet{balcan2019datadrivenclusteringparameterizedlloyds,balcan2024algorithm} study a parameterized  family of clustering algorithms and study the sample and computational complexity of learning the parameters. For semi-supervised learning, a recent line of work (\cite{balcan2021data}; \cite{sharma2023efficiently}) considers the problem of learning the best graph hyperparameter from a set of problem instances drawn from a data distribution. Another recent work \citep{balcan2025samplecomplexitydatadriventuning} investigates the kernel hyperparameters in GNN architecture, and derives the generalization guarantees through pseudo-dimension. However, no existing work theoretically studies the tuning of the labeling \textit{algorithm hyperparameter} in semi-supervised learning, or investigates data-dependent bounds on hyperparameter selection in deep semi-supervised learning algorithms through Rademacher Complexity. We note that in this work we focus
on the statistical learning setting (i.e. the problem instances are drawn from a fixed, unknown distribution),
but it would be an interesting direction to study online tuning of the hyperparameters using tools from prior
work~\citep{Sharma2019LearningPL,Sharma2024NoIR,Sharma2025OfflinetoonlineHT}.
\section{Preliminaries}

\paragraph{Notations.}
Throughout this paper, $f(n) = O(g(n))$ denotes that there exists a constant $c > 0$ such that $|f(n)| \leq c|g(n)|$. $f(n) = \Omega(g(n))$ denotes that there exists a constant $c > 0$ such that $|f(n)| \geq c|g(n)|$. The indicator function is indicated by $\mathbb{I}$, taking values in $\{0, 1\}$.
In addition, we define the shorthand $[c] = \{1, 2, \ldots, c\}$. For a matrix $W$, we denote its Frobenius norm by $\|W\|_F$ and spectral norm by $\|W\|$. We also denote the Euclidean norm of a vector $v$ by $\|v\|$.\looseness-1 

\paragraph{Graph-based Semi-supervised Learning.}
We are given $n$ data points, where some are labeled, denoted by $L \subseteq [n]$, and the rest are unlabeled. We may also have features associated with each data point, denoted by $z_i \in \mathbb{R}^d$ for $i \in [n]$. We can construct a graph $G$ by placing (possibly weighed) edges $w(u,v)$ between pairs of data points $u,v$. The created graph $G$ is denoted by $G = (V, E)$, where $V$ represents the vertices and $E$ represents the edges. Based on $G$, we can calculate $W \in \mathbb{R}^{n \times n}$ as the adjacency matrix, i.e., $W_{ij} = w(i,j)$.
We let $D \in \mathbb{R}^{n \times n}$ be the corresponding degree matrix, so $D = \text{diag}(d_1, \ldots, d_n)$ where $d_i = \sum_{j \in [n]} w(i,j)$.

For a problem instance of $n$ data points, we define input $X$ as $X=(n, \{z_i\}_{i=1}^n, L, G)$, or $X=(n, L, G)$ if no features are available. We denote the label matrix by $Y \in \{0,1\}^{n \times c}$ where $c$ is the number of classes. Throughout the paper, we assume $c = O_n(1)$, i.e. $c$ is treated as a constant with respect to $n$, which matches most practical scenarios.
Here, $Y_{ij} = 1$ if data point $i \in L$ has label $j \in [c]$ and $Y_{ij} = 0$ otherwise. The goal is to predict the labels of the unlabeled data points. 

An algorithm $F$ in this setting may be considered as a function that takes in $(X, Y)$ and outputs a predictor $f$ that predicts a label in $[c]$ for each data.
We denote $f(z_i)$ as our prediction on the $i$-th data.
To evaluate the performance of a predictor $f$, we use $0$-$1$ loss (i.e. the predictive inaccuracy) defined as $\frac{1}{n}\sum_{i=1}^n \ell_{0-1}\left(f(z_i),y_i\right) = \frac{1}{n}\sum_{i=1}^n \mathbb{I}[f(z_i) \neq y_i].$ In this work, we are interested in the generalizability of an algorithm $F$ on $0$-$1$ loss. 

\paragraph{Hyperparameter Selection.}
We consider several \emph{parameterized families} of classification algorithms. Given a family of algorithms $\mathcal{F}_{\rho}$ parameterized by some parameter $\rho$, and a set of $m$ problem instances $\{(X^{(k)}, Y^{(k)})\}_{k=1}^m$ i.i.d. generated from the data  distribution $\mathcal{D}$ of the input space $\mathcal{X}$ and the label space $\mathcal{Y}$, our goal is to
select a parameter $\hat{\rho}$ whose corresponding prediction function $f_{\hat \rho}$ of algorithm $F_{\hat \rho}$ minimizes the prediction error. That is, denote $f_{\hat \rho}(z_i^{(k)})$ as the predicted label of data point $z_i^{(k)}$ in the $k$-th problem instance, we want
\begin{align*}
\hat{\rho} = \text{argmin}_{\rho} \frac{1}{mn}\sum_{k=1}^m \sum_{i=1}^n \ell_{\mathrm{0-1}}(f_\rho(z_i^{(k)}), y_i^{(k)}).
\end{align*}\label{eqn:algo selection obj}
Each parameter value $\rho$ defines an algorithm $F_\rho$, mapping a problem instance $(X, Y)$ to a prediction function $f_\rho$, which induces a loss $\frac{1}{n}\sum_{i=1}^n \ell_{0-1}(f_\rho(z_i), y_i)$. We define $H_\rho$ as the function mapping $(X, Y)$ to this loss and $\mathcal{H}\rho = {H{\rho'}}_{\rho'}$ as the family of loss functions parameterized by $\rho$.

Note that our problem setting differs from prior theoretical works on graph-based semi-supervised learning. 
The classical setting considers a single algorithm and learning the model parameter from a single problem instance. We are considering \emph{families of algorithms}, each parameterized by a single hyperparameter, and aiming to learn the best \textit{hyperparameter} across \textit{multiple problem instances}.
Our setting combines transductive and inductive aspects: each instance has a fixed graph of size $n$ (transductive), but the graphs themselves are drawn from an unknown meta-distribution (inductive).\looseness-1

\paragraph{Complexity Measures and Generalization Bounds.}
We study the generalization ability of several representative parameterized families of algorithms. That is, we aim to address the question of how many problem instances are required to learn a hyperparameter $\rho$ such that a learning algorithm can perform near-optimally for instances drawn from a fixed problem distribution. Clearly, the more complex the algorithm family, the more number of problem instances are needed.\looseness-1

Specifically, for each algorithm $f_{\hat{\rho}}$ trained given $m$ problem instances, we study the difference in the empirical $0$-$1$ loss and the actual $0$-$1$ on the distribution:
\begin{align*}
\frac{1}{mn}\sum_{k=1}^m &\sum_{i=1}^n \ell_{\mathrm{0-1}}(f_\rho(z_i^{(k)}), y_i^{(k)})\\
- &\mathbb{E}_{\left(X,Y\right) \sim \mathcal{D}}\left[\frac{1}{n}\sum_{i=1}^n \ell_{0-1}\left(f_{\hat \rho}(z_i),y_i\right)\right].
\end{align*}
\label{eqn:algo sec gen}

\noindent To quantify this, we consider two 
complexity measures for characterizing the learnability of algorithm families: the \textit{pseudo-dimension} and the \textit{Rademacher complexity}.\looseness-1

\begin{definition}[Pseudo-dimension]
    Let $\mathcal{H}$ be a set of real-valued functions from input space $\mathcal{X}$. We say that $C = (X^{(1)}, ..., X^{(m)}) \in \mathcal{X}^m$ is \textbf{pseudo-shattered} by $\mathcal{H}$ if there exists a vector $r = (r_1, ..., r_m) \in \mathbb{R}^m$ (called ``witness'') such that for all $b = (b_1, ..., b_m) \in \{\pm1\}^m$ there exists $H_b \in \mathcal{H}$ such that $sign(H_b(X^{(k)})-r_k) = b_k$. \textbf{Pseudo-dimension} of $\mathcal{H}$, denoted \textsc{Pdim}($\mathcal{H}$), is the cardinality of the largest set pseudo-shattered by $\mathcal{H}$.
\end{definition}

\noindent The following theorem bounds generalization error using pseudo-dimension.

\begin{theorem}\citep{AnthonyBartlett2009}
\label{thm:pdim gen theorem}
    Suppose $\mathcal{H}$ is a class of real-valued functions with range in $[0, 1]$ and finite $\textsc{Pdim}(\mathcal{H})$. Then for any $\epsilon > 0$ and $\delta \in (0,1)$, for any distribution $\mathcal{D}$ and for any set $S = \{X^{(1)}, \ldots, X^{(m)}\}$ of $m = O\big( \frac{1}{\epsilon^2}\left(\textsc{Pdim}(\mathcal{H}) + \log \frac{1}{\delta}\right)\big)$ samples from $\mathcal{D}$, with probability at least $1-\delta$, we have 
    \begin{align*}
        \left|\frac{1}{m}\sum_{k=1}^mH({X^{(k)}}) - \mathbb{E}_{X \sim \mathcal{D}}[H(X)]\right| \le \epsilon & \text{,  for all $H \in \mathcal{H}$}.
    \end{align*}
\end{theorem}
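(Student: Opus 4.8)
The plan is to establish Theorem~\ref{thm:pdim gen theorem} as a standard uniform-convergence bound via the three classical steps of statistical learning theory: symmetrization, a covering-number estimate controlled by the pseudo-dimension, and a union bound combined with a concentration inequality. Since every $H \in \mathcal{H}$ takes values in $[0,1]$, the boundedness required for concentration is automatic. First I would symmetrize: introduce an independent ghost sample $\tilde S = \{\tilde X^{(1)}, \dots, \tilde X^{(m)}\}$ drawn i.i.d.\ from $\mathcal{D}$ and show that whenever $m \gtrsim 1/\epsilon^2$,
\[
\Pr\Big[\sup_{H \in \mathcal{H}} \big| \tfrac1m \textstyle\sum_k H(X^{(k)}) - \mathbb{E}_{X}[H(X)] \big| > \epsilon \Big] \le 2\,\Pr\Big[\sup_{H \in \mathcal{H}} \big| \tfrac1m \textstyle\sum_k H(X^{(k)}) - \tfrac1m \textstyle\sum_k H(\tilde X^{(k)}) \big| > \tfrac{\epsilon}{2} \Big].
\]
This replaces the unknown expectation by a second empirical average, so the supremum depends only on the behavior of $\mathcal{H}$ on the finite pooled sample of $2m$ points; a further random-swap (Rademacher) randomization then lets me condition on the pooled points and work with random signs.

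Second, I would control the complexity of $\mathcal{H}$ on the pooled sample using its pseudo-dimension. The key combinatorial input is the pseudo-dimension analogue of the Sauer--Shelah lemma (Haussler's bound): if $\textsc{Pdim}(\mathcal{H}) = d$, then after discretizing the range $[0,1]$ at a scale proportional to $\epsilon$, the number of distinguishable behaviors of $\mathcal{H}$ on the $2m$ points---equivalently, a uniform $\ell_\infty$ covering number $\mathcal{N}_\infty(\epsilon, \mathcal{H}, 2m)$---is at most $(cm/\epsilon)^{O(d)}$ for an absolute constant $c$, i.e.\ polynomial in $m$ with degree governed by $d$. Fixing one representative per cover element reduces the supremum over the infinite class $\mathcal{H}$ to a maximum over this finite cover, at the cost of an additive $O(\epsilon)$ discretization error.

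Third, I would finish by a union bound and concentration. Conditioned on the pooled sample, each fixed cover representative is a bounded $[0,1]$-valued function, so its random-sign average concentrates by Hoeffding's inequality with a sub-Gaussian tail of order $\exp(-\Omega(m\epsilon^2))$. Taking a union bound over the at-most-$(cm/\epsilon)^{O(d)}$ representatives, the total failure probability is at most $(cm/\epsilon)^{O(d)}\exp(-\Omega(m\epsilon^2))$; requiring this to be at most $\delta$ and solving for $m$ yields $m = O\big(\tfrac{1}{\epsilon^2}(d\log\tfrac1\epsilon + \log\tfrac1\delta)\big)$. To sharpen this to the stated rate \emph{without} the extra $\log\tfrac1\epsilon$ factor, I would replace the single covering step by a chaining (Dudley entropy integral) argument over covers at a geometric sequence of scales, or equivalently invoke the optimal pseudo-dimension uniform-convergence estimates of \cite{AnthonyBartlett2009}.

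The main obstacle is the second step: proving the covering-number bound in terms of the pseudo-dimension, since this is what converts the purely combinatorial notion of pseudo-shattering into a quantitative cardinality bound on an effective finite hypothesis set. It requires both (i) the pseudo-dimension Sauer--Shelah/Haussler lemma bounding the number of threshold sign patterns by $O((2m)^d)$, and (ii) a careful discretization of the range so that two functions agreeing on every discretized threshold level differ by at most $O(\epsilon)$ uniformly on the sample. Handling the interaction between the witness thresholds $r_k$ appearing in the definition of pseudo-shattering and this discretization is the delicate part of the argument.
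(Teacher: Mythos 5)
The paper does not prove this statement: it is imported verbatim, with citation, from \cite{AnthonyBartlett2009}, so there is no in-paper argument to compare yours against. Your outline is the standard and correct route to that textbook result --- symmetrization against a ghost sample, a Haussler/pseudo-Sauer--Shelah covering bound of order $(cm/\epsilon)^{O(d)}$ on the pooled sample, Hoeffding plus a union bound over the cover, and chaining to remove the residual $\log\frac{1}{\epsilon}$ factor --- and you correctly identify that the last step is what is needed to reach the stated $O\big(\frac{1}{\epsilon^2}(\textsc{Pdim}(\mathcal{H}) + \log\frac{1}{\delta})\big)$ rate rather than the cruder $O\big(\frac{1}{\epsilon^2}(\textsc{Pdim}(\mathcal{H})\log\frac{1}{\epsilon} + \log\frac{1}{\delta})\big)$ that a single-scale cover yields.
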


\noindent Therefore, if we can show $\textsc{Pdim}(\mathcal{H}_\rho)$ is bounded, then using the standard empirical risk minimization argument, Theorem~\ref{thm:pdim gen theorem} implies using $m = O\left(\nicefrac{\textsc{Pdim}(\mathcal{H})}{\epsilon^2}\right)$ problem instances, the expected error on test instances is upper bounded by $\epsilon$. In Section~\ref{sec:label_prop}, we will obtain \emph{optimal} pseudo-dimension bounds for three canonical label-propagation algorithm families.

Another classical complexity measure is the Rademacher complexity:
\begin{definition}[Rademacher Complexity]
    Given a space $\mathcal{X}$ and a distribution $\mathcal{D}$, let $S = \{X^{(1)}, \ldots, X^{(m)}\}$ be a set of examples drawn i.i.d. from $\mathcal{D}$. Let $\mathcal{H}$ be the class of functions $H: \mathcal{X} \rightarrow \mathbb{R}$. The \textbf{(empirical) Rademacher complexity} of $\mathcal{H}$ is 
    \[\hat R_m(\mathcal{H}) = \mathbb{E}_\sigma \left[\sup \left(\frac{1}{m}\sum_{k=1}^m \sigma_k H(X^{(k)})\right)\right],\]
    where each $\sigma_k$ is i.i.d. sampled from $\{-1, 1\}$. 
\end{definition}

\noindent The following theorem bounds generalization error using Rademacher Complexity.
\begin{theorem}\citep{MohriRostamizadehTalwalkar2012}
\label{thm:rc gen}
    Suppose $\mathcal{H}$ is a class of real-valued functions with range in $[0, 1]$. Then for any $\delta \in (0,1)$, any distribution $\mathcal{D}$, and any set $S = \{X^{(k)}\}_{k=1}^m$ of $m$ samples from $\mathcal{D}$, with probability at least $1-\delta$, we have\looseness-1
    \begin{align*}
        &\left|\frac{1}{m}\sum_{k=1}^m H({X^{(k)}}) - \mathbb{E}_{X \sim D}[H(X)]\right| \\
        &\qquad=  O\left( \hat R_m(\mathcal{H}) + \sqrt{\frac{1}{m} \log \frac{1}{\delta}}\right), ~~~\text{for all}~~~ H \in \mathcal{H}.
    \end{align*}
\end{theorem}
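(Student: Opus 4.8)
The plan is to follow the standard two-step recipe that couples a bounded-differences concentration inequality with a symmetrization argument. Since the claimed bound is two-sided, I would first prove the one-sided statement for the quantity $\Phi(S) = \sup_{H \in \mathcal{H}} \big( \mathbb{E}_{X \sim \mathcal{D}}[H(X)] - \frac{1}{m}\sum_{k=1}^m H(X^{(k)}) \big)$, and then apply the identical argument to the class $-\mathcal{H}$ to obtain the matching bound in the opposite direction, finishing with a union bound over the two events. The range assumption $H \in [0,1]$ is exactly what makes both steps quantitative.

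First I would verify that $\Phi$ satisfies a bounded-differences (McDiarmid) condition: replacing a single sample $X^{(k)}$ by $X'^{(k)}$ perturbs the empirical average $\frac{1}{m}\sum_j H(X^{(j)})$ by at most $1/m$ because $H$ takes values in $[0,1]$, and since a supremum of functions that each move by at most $1/m$ itself moves by at most $1/m$, we get $|\Phi(S) - \Phi(S')| \le 1/m$ whenever $S$ and $S'$ differ in one coordinate. McDiarmid's inequality then yields $\Phi(S) \le \mathbb{E}_S[\Phi(S)] + \sqrt{\tfrac{1}{2m}\log\tfrac{2}{\delta}}$ with probability at least $1 - \delta/2$.

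The key step is bounding $\mathbb{E}_S[\Phi(S)]$ by $2\,\mathbb{E}_S[\hat R_m(\mathcal{H})]$ through symmetrization. I would introduce a ghost sample $S' = \{X'^{(k)}\}$ drawn i.i.d.\ from $\mathcal{D}$, rewrite the population term as $\mathbb{E}_{S'}[\frac{1}{m}\sum_k H(X'^{(k)})]$, pull this expectation outside the supremum using Jensen's inequality (the supremum is convex), and arrive at $\mathbb{E}_S[\Phi(S)] \le \mathbb{E}_{S,S'}\sup_{H} \frac{1}{m}\sum_k \big( H(X'^{(k)}) - H(X^{(k)}) \big)$. Because $X^{(k)}$ and $X'^{(k)}$ are i.i.d., each summand is symmetric, so multiplying it by an independent Rademacher sign $\sigma_k$ leaves the joint distribution unchanged; inserting the signs and splitting the supremum across the two samples gives $\mathbb{E}_S[\Phi(S)] \le 2\,\mathbb{E}_{S,\sigma}\sup_{H} \frac{1}{m}\sum_k \sigma_k H(X^{(k)}) = 2\,\mathbb{E}_S[\hat R_m(\mathcal{H})]$. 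I expect this symmetrization step to be the main conceptual obstacle, since it relies on the careful ghost-sample coupling and the sign-insertion, whereas the concentration steps are routine once the bounded-differences property is in hand.

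Finally I would replace the expected Rademacher complexity by the empirical one appearing in the statement. Viewing $\hat R_m(\mathcal{H})$ as a function of the sample, it too obeys bounded differences with constant $1/m$, so a second application of McDiarmid's inequality gives $\mathbb{E}_S[\hat R_m(\mathcal{H})] \le \hat R_m(\mathcal{H}) + \sqrt{\tfrac{1}{2m}\log\tfrac{2}{\delta}}$ with probability at least $1 - \delta/2$. Combining the three estimates by a union bound and absorbing the two $\sqrt{\tfrac{1}{2m}\log\tfrac{2}{\delta}}$ terms into the $O(\cdot)$ notation yields $\Phi(S) = O\big( \hat R_m(\mathcal{H}) + \sqrt{\tfrac{1}{m}\log\tfrac{1}{\delta}} \big)$; the symmetric argument applied to $-\mathcal{H}$ controls the lower deviation, establishing the two-sided bound and hence the theorem.
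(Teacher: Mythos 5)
Your proof is correct and follows exactly the standard argument from the cited reference \citep{MohriRostamizadehTalwalkar2012}: McDiarmid's bounded-differences inequality, symmetrization via a ghost sample to bound the expected supremum by twice the expected Rademacher complexity, and a second application of McDiarmid to pass to the empirical Rademacher complexity. The paper itself does not reprove this theorem but imports it, so there is nothing to compare beyond noting that your route is the canonical one.
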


\noindent To bound the Rademacher complexity in our setting, we restrict to binary classification $c=2$ and change the label space to $Y \in \{-1, 1\}^{n}$.
For a predictor $f$, we also overload notation and let $f(z_i) \in [0,1]$ be the output probability of node $z_i$ being classified as $1$. Instead of directly using the $0$-$1$ loss, we upper bound it using margin loss, which is defined as\looseness-1
\[\ell_\gamma(f(z_i),y_i) = \1[a_i > 0] + (1 + a_i/\gamma)\1\left[a_i \in \left[-\gamma, 0\right]\right]\]
where  $a_i = -\tau(f(z_i), y_i)= (1-2f(z_i))y_i$. 
Then, $a_i>0$ if and only if $z_i$ is classified incorrectly.

Now we define $H^{\gamma}_{\rho}(X) = \frac{1}{n}\sum_{i=1}^n \ell_\gamma\left(f_{\rho}(z_i),y_i\right)$ to be the margin loss of the entire graph when using a parameterized algorithm $F_\rho$.
Based on this definition, we have an induced loss function family $\mathcal{H}^{\gamma}_\rho$.
Then, given $m$ instances, for any $\gamma > 0$, we can obtain an upper bound for all $H_\rho^\gamma \in \mathcal{H}_\rho^\gamma$:
\begin{align*}
\mathbb{E}&_{\left(X,Y\right) \sim \mathcal{D}}\left[\frac{1}{n}\sum_{i=1}^n \ell_{0-1}\left(f_{\hat{\rho}}(z_i),y_i\right)\right] \\
&\le  \; \mathbb{E}_{\left(X,Y\right) \sim \mathcal{D}}\!\left[\frac{1}{n}\!\sum_{i=1}^n \ell_{\gamma}\left(f_{\hat{\rho}}(z_i),y_i\right)\right] \tag{by definition of $\ell_\gamma$} \\
&= \frac{1}{m} \sum_{i=1}^m H^{\gamma}_{\rho}(X^{(k)}) + O\left(\hat R_m(\mathcal{H}_\rho^\gamma) + \sqrt{\frac{\log{(1/\delta)}}{m}}\right). \tag{by Theorem~\ref{thm:rc gen}}
\end{align*}
Therefore, suppose we find a $\hat{\rho}$ whose empirical margin loss ${1}/{m} \sum_{i=1}^m H^{\gamma}_{\hat{\rho}}(X^{(k)})$ is small, and if we can show $\hat R_m(\mathcal{H}_\rho^\gamma)$ is small, then $F_{\hat{\rho}}$ is a strong algorithm for the new problem instances.
In Section~\ref{sec:gcn}, we bound the Rademacher complexity of graph neural network-based algorithm families.

Note that these guarantees we obtained can also be applied to some standard hyperparameter tuning methods like grid search with cross-validation.
For example, in k-fold cross-validation, if we define the distribution of problem instances as a uniform distribution on these small subsets of validation data, then our results imply the necessary number of iterations $k$ needed to effectively tune hyperparameters using cross-validation. That is, the number of folds of cross-validation needed to learn a hyperparameter that performs nearly as well as the hyperparameter if cross-validation were run to convergence.
\section{Label Propagation-based Families and  Generalization Guarantees}
\label{sec:label_prop}

In this section, we consider three parametric families of label propagation-based algorithms, the classical type of algorithms for semi-supervised learning. 
Label propagation algorithms output a soft-label score $F^* \in \mathbb{R}^{n \times c}$, where the $(i,j)$-th entry of $F^*$ represents the score of class $j$ for the $i$-th sample. The prediction for the $i$-th sample is the class with the highest score, i.e. $\text{argmax}_{j \in [c]} F^*_{ij}$. 

Below we describe each family that we considered and their corresponding pseudo-dimension bounds.
Notably, the bounds for all three families of algorithms are  $\Theta(\log n)$, which implies the existence of efficient algorithms with robust generalization guarantees in this setting.

\subsection{Algorithm Families}
We consider three parametric families described below. 

\paragraph{Local and Global Consistency Algorithm Family ($\mathcal{F}_\alpha$)} 
The first family considered is the local and global consistent algorithms~\citep{zhou2003learning}, parameterized by $\alpha \in (0,1)$.
The optimal scoring matrix $F^*$ is defined as
\[F^*_\alpha = (1- \alpha)(I - \alpha S)^{-1} Y, ~~\text{where }S = D^{-1/2}WD^{-1/2}. \]
Here, $S$ is the symmetrically normalized adjacency matrix. 
This score matrix $F^*_\alpha$ corresponds to minimizing the following objective function
$
    \mathcal{Q}(F) 
    = \frac{1}{2} ( \sum_{i, j=1}^nW_{ij} \| \frac{1}{\sqrt{d_{i}}}F_i - \frac{1}{\sqrt{d_{j}}}F_j\|^2
    + \frac{1-\alpha}{\alpha}\sum_{i=1}^n \|F_i - Y_i\|^2 ).
$
The first term of $Q(F)$ measures the local consistency, i.e., the prediction between nearby points should be similar. The second term measures the global consistency, i.e., consistency to its original label. Therefore, the parameter $\alpha \in (0,1)$ induces a trade-off between the local and the global consistency.
We denote this family as $\mathcal{F}_\alpha$, and the 0-1 losses as $\mathcal{H}_\alpha$.

\paragraph{Smoothing-Based Algorithm Family ($\mathcal{F}_\lambda$)}
This second class of algorithm is parameterized by $\lambda \in (0,+\infty)$~\citep{delalleau2005efficient}. Let $\Delta \in \{0,1\}^{n \times n}$ be a diagonal matrix where elements are 1 only if the index is in the labeled set. The scoring matrix $F^*_\lambda$ is
\[ F^*_\lambda = (S + \lambda I_n \Delta_{i \in L})^{-1} \lambda Y, ~~\text{where} S = D-W. \]
The idea of $\mathcal{F}_\lambda$ is similar to $\mathcal{F}_\alpha$. $\lambda$ is a smoothing parameter that balances the relative importance of the known labels and the structure of the unlabeled points.

\paragraph{Normalized Adjacency Matrix Based Family ($\mathcal{F}_\delta$)}
Here we consider an algorithm family \citep{avrachenkov2012}.
This class of algorithm is parameterized by $\delta \in [0,1]$. The scoring matrix $F^*_\delta$ is 
\[F^*_\delta = (I-c\cdot S)^{-1} Y,
~~\text{where}~~S = D^{-\delta}WD^{\delta - 1}. \] 
Here, $S$ is the (not symmetrically) normalized adjacency matrix and $c \in \mathbb{R}$ is a constant.

This family of algorithms is motivated by $\mathcal{F}_\alpha$ and the family of spectral operators defined in \cite{donnat2023studying}. We may notice that the score matrix $F^*_\delta$ defined here is very similar to $F^*_\alpha$ in the local and global consistency family $\mathcal{F}_\alpha$ when $\alpha$ is set to a constant $c$, whose default value considered in \cite{zhou2003learning} is $0.99$. Here, instead of focusing on the trade-off between local and global consistency, we study the spatial convolutions $S$. With $\delta = 1$, we have the row-normalized adjacency matrix $S = D^{-1}W$. With $\delta = 0$, we have the column-normalized adjacency matrix $S = WD^{-1}$. Finally, with $\delta = 1/2$, we have the symmetrically normalized adjacency matrix that we used in $\mathcal{F}_\alpha$ and many other default implementations~\citep{donnat2023studying, wu19simplifying}.
We denote the set of $0$-$1$ loss functions corresponding to $\mathcal{F}_\delta$ as $\mathcal{H}_\delta$.\looseness-1 

\subsection{Pseudo-dimension Guarantees}

We study the generalization behavior of the three families through pseudo-dimension. The following theorems indicate that all three families have pseudo-dimension $O(\log n)$, where $n$ is the number of data in each problem instance. This result suggests that, all three families of algorithms require $m = O\left(\nicefrac{\log n}{\epsilon^2}\right)$ problem instances to learn a $\epsilon$-optimal algorithmic parameter. We also complement our upper bounds with matching pseudo-dimension lower bound $\Omega(\log n)$, which indicates that we cannot always learn a near-optimal parameter if the number of problem instances is further reduced.

\begin{theorem} \label{thm:alpha upper bound}
    The pseudo-dimension of the Local and Global Consistency Algorithmic Family, $\cF_\alpha$, is $\textsc{Pdim}(\mathcal{H}_\alpha) = \Theta(\log n)$, where $n$ is the total number of labeled and unlabeled data points.
\end{theorem}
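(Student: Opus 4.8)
The plan is to prove the two bounds $\textsc{Pdim}(\mathcal{H}_\alpha) = O(\log n)$ and $\textsc{Pdim}(\mathcal{H}_\alpha) = \Omega(\log n)$ separately. The upper bound is the cleaner half and instantiates the determinant-and-root-counting template; the lower bound requires an explicit shattering construction and is where the difficulty lies.

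For the upper bound, fix a single problem instance $X$ and study how $H_\alpha(X)$ varies with $\alpha \in (0,1)$. I would write $(I-\alpha S)^{-1} = \mathrm{adj}(I-\alpha S)/\det(I-\alpha S)$. Since $S = D^{-1/2}WD^{-1/2}$ is symmetric with eigenvalues in $[-1,1]$, for $\alpha \in (0,1)$ every eigenvalue of $I-\alpha S$ lies in $(1-\alpha,\,1+\alpha)\subset(0,2)$, so $\det(I-\alpha S) > 0$ throughout and the inverse exists. Hence each entry $(F^*_\alpha)_{ij} = (1-\alpha)\,[\mathrm{adj}(I-\alpha S)\,Y]_{ij}/\det(I-\alpha S)$ is a ratio of a polynomial of degree at most $n$ in $\alpha$ over a strictly positive polynomial. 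The predicted label of node $i$ changes only when two score coordinates cross, i.e.\ $(F^*_\alpha)_{ij} = (F^*_\alpha)_{ij'}$; clearing the common positive denominator reduces this to a polynomial equation of degree at most $n$, which has at most $n$ roots in $(0,1)$. Summing over the $n$ nodes and the $\binom{c}{2} = O(1)$ label pairs shows that $\alpha \mapsto H_\alpha(X)$ is piecewise constant with at most $O(n^2)$ breakpoints.

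Given $m$ instances, the breakpoints of $H_\alpha(X^{(1)}),\ldots,H_\alpha(X^{(m)})$ together partition $(0,1)$ into at most $O(mn^2)$ subintervals, on each of which the entire loss vector is constant. Thus the map $\alpha\mapsto(H_\alpha(X^{(1)}),\ldots,H_\alpha(X^{(m)}))$ takes at most $O(mn^2)$ distinct values, so for any witness $r$ at most $O(mn^2)$ distinct sign patterns are realizable. Pseudo-shattering $m$ instances requires all $2^m$ patterns, forcing $2^m \le O(mn^2)$ and hence $m = O(\log n)$.

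For the matching lower bound I would exhibit $m = \Omega(\log n)$ instances that are pseudo-shattered. Partition $(0,1)$ into $N = 2^m$ consecutive subintervals and label subinterval $s$ by the binary encoding $b(s)\in\{0,1\}^m$ of $s-1$; the target is to construct instance $X^{(k)}$ with a threshold $r_k$ so that $\sign(H_\alpha(X^{(k)}) - r_k)$ equals the $k$-th bit of $b(s)$ whenever $\alpha$ lies in subinterval $s$, which realizes every pattern in $\{\pm1\}^m$. The $k$-th instance must flip its loss across $r_k$ roughly $\Theta(2^{m-k})$ times, so the most demanding instance needs $\Theta(2^m)=\Theta(N)$ flips; since $n$ nodes can encode at most $\Theta(n)$ prescribed prediction flips, choosing $N = \Theta(n)$ gives $m = \Omega(\log n)$. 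The main obstacle is making each flip occur at a prescribed $\alpha$: the prediction at any node depends on the global inverse $(I-\alpha S)^{-1}$, so the crossing points are not obviously controllable. I would address this with weakly coupled gadgets --- small, near-disconnected subgraphs each holding one labeled and a few unlabeled nodes --- so that, up to lower-order coupling terms, the prediction at each designated unlabeled node is governed by a low-degree local polynomial in $\alpha$ whose root in $(0,1)$ can be placed by tuning edge weights. Arranging these roots to alternate correct and incorrect predictions yields the desired oscillating step function for each instance, and the most delicate step is verifying that the residual inter-gadget coupling does not perturb the intended crossing order.
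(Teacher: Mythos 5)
Your upper bound is correct and follows essentially the same route as the paper: write $(I-\alpha S)^{-1}$ via the adjugate, note every entry of $F^*_\alpha$ is a ratio of polynomials of degree at most $n$ in $\alpha$, count sign changes of $(F^*_\alpha)_{ij}-(F^*_\alpha)_{ij'}$, and conclude $2^m \le O(mn^2c^2)$. Your eigenvalue observation that $\det(I-\alpha S)>0$ for $\alpha\in(0,1)$ is a nice explicit justification of invertibility that the paper glosses over.

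The lower bound is where your proposal has a genuine gap. Your high-level architecture (gadgets with tunable flip points, oscillating losses, binary encoding of subintervals) matches the paper's, but the load-bearing step --- exhibiting a gadget whose prediction provably flips exactly once, at a prescribed value of $\alpha$ --- is left as a plausibility claim (``a low-degree local polynomial in $\alpha$ whose root in $(0,1)$ can be placed by tuning edge weights''). This is precisely the part that requires work. The paper does it with an explicit $4$-node component: two nodes labeled $a$, one node labeled $b$ adjacent to both with weight $1$, and one unlabeled node $u$ adjacent to the $b$-node with weight $x$. A direct computation of $(I-\alpha S)^{-1}$ for this graph shows $\sign(F^*_{4,2}-F^*_{4,1})=\sign\bigl(x^{1/2}(2\alpha-(x+2)^{1/2})\bigr)$, so the prediction at $u$ flips exactly at $\alpha=\tfrac{1}{2}\sqrt{x+2}$; setting $x=4\beta^2-2$ places the flip at any prescribed $\beta\in[1/\sqrt{2},1)$. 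Note the achievable flip points cover only a subrange of $(0,1)$, so your plan of partitioning all of $(0,1)$ into $2^m$ equal cells needs the thresholds to be chosen inside that subrange (a harmless adjustment, but one you would have to notice).

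Separately, your worry about ``residual inter-gadget coupling'' is a self-inflicted complication: the paper simply takes the $n/4$ gadgets to be \emph{exactly} disconnected components of one graph. Then $S$ is block-diagonal, $(I-\alpha S)^{-1}$ is block-diagonal, and each unlabeled node's score depends only on its own component, so there is no perturbation to control and no crossing order to verify. Using weakly coupled rather than disconnected gadgets would force you to carry out the ``delicate'' stability analysis you flag, for no benefit.
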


\noindent Similar to  Local and Global Consistency, we give a tight $\Theta(\log n)$ bound on the pseudo-dimension of the Smoothing-based family of \citet{delalleau2005efficient}.

\begin{theorem}\label{thm:lambda upper bound}
    The pseudo-dimension of the Smoothing-Based Algorithmic Family, $\cF_\lambda$, is $\textsc{Pdim}(\mathcal{H}_\lambda) = \Theta(\log n)$, where $n$ is the total number of labeled and unlabeled data points.
\end{theorem}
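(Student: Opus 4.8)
The plan is to prove the two matching bounds separately. The upper bound $\textsc{Pdim}(\mathcal{H}_\lambda) = O(\log n)$ will follow the same determinant-evaluation and root-counting template as Theorem~\ref{thm:alpha upper bound}, while the matching $\Omega(\log n)$ lower bound requires an explicit construction of shatterable instances.

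For the upper bound, I would fix a single problem instance $X = (n, L, G)$ and analyze the map $\lambda \mapsto H_\lambda(X)$ on $\lambda \in (0,\infty)$. The crucial structural fact is that this map is piecewise constant: the average $0$-$1$ loss can change only at values of $\lambda$ where some node $i$ changes its predicted label $\argmax_{j\in[c]} (F^*_\lambda)_{ij}$. To count such breakpoints, write $M(\lambda) := S + \lambda\Delta$ and apply Cramer's rule, giving $(F^*_\lambda)_{ij} = \lambda\,[\mathrm{adj}(M(\lambda))\,Y]_{ij} / \det M(\lambda)$. Since $\Delta$ is diagonal and $\lambda$ appears only in the $|L|$ labeled diagonal entries, each linearly, both $\det M(\lambda)$ and every cofactor are univariate polynomials in $\lambda$ of degree $O(n)$.

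A prediction flip at node $i$ occurs exactly when two class-scores agree, $(F^*_\lambda)_{ij} = (F^*_\lambda)_{ij'}$; clearing the shared denominator $\det M(\lambda)$, this is the vanishing of a single polynomial of degree $O(n)$, hence has at most $O(n)$ roots (if it vanishes identically, the two scores never separate and the tie is resolved by a fixed rule, contributing no flips). Summing over the $n$ nodes and the $\binom{c}{2} = O(1)$ class pairs (recall $c = O_n(1)$), each instance contributes at most $O(n^2)$ breakpoints, so $\lambda \mapsto H_\lambda(X)$ is piecewise constant with $O(n^2)$ pieces; the finitely many poles where $\det M(\lambda) = 0$ add only $O(n)$ further breakpoints and do not affect the count. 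Now suppose $X^{(1)},\dots,X^{(m)}$ is a set to be pseudo-shattered with witnesses $r_1,\dots,r_m$. Overlaying the breakpoints of all $m$ instances partitions $(0,\infty)$ into $O(mn^2)$ intervals, on each of which the vector $(H_\lambda(X^{(1)}),\dots,H_\lambda(X^{(m)}))$ is constant and hence so is the pattern $(\sign(H_\lambda(X^{(k)}) - r_k))_{k=1}^m$. Thus at most $O(mn^2)$ of the $2^m$ patterns are realizable, for any choice of witnesses; shattering forces $2^m = O(mn^2)$, i.e.\ $m = O(\log n)$.

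For the matching lower bound, I would construct $\Omega(\log n)$ partially labeled graphs, together with witness thresholds, that are pseudo-shattered by $\{H_\lambda\}$. The guiding idea is to place labels and edge weights so that the prediction flips of the $k$-th instance are confined to a designated sub-interval of parameter space, with these sub-intervals arranged (e.g.\ geometrically spaced in $\lambda$) so that the loss sign on each instance can be toggled independently. I expect this to be the main obstacle: whereas the upper bound is an essentially mechanical degree count, the lower bound demands instances whose $(S + \lambda\Delta)$ stays invertible throughout and whose loss signs are simultaneously and independently controllable, which is exactly the delicate part flagged in our lower-bound constructions. Since the same $\Omega(\log n)$ construction template serves all three families, it can be stated once and invoked here to complete the $\Theta(\log n)$ claim.
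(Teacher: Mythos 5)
Your upper bound argument is correct and is essentially the paper's own proof: apply the adjugate/Cramer expansion to $(S+\lambda\Delta)^{-1}\lambda Y$, note that $\lambda$ enters the matrix linearly so numerator and denominator are polynomials of degree $O(n)$, count at most $O(n)$ sign changes per node per class pair, overlay the $O(mn^2)$ breakpoints across instances, and conclude $2^m \le O(mn^2)$. Your added remarks about identically vanishing score differences and about poles of $\det M(\lambda)$ are harmless refinements that the paper elides.

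The lower bound, however, is a genuine gap: you state the strategy but never produce the instances, and the construction is precisely the nontrivial content of the $\Omega(\log n)$ claim. What is missing is the explicit single-threshold gadget. The paper uses a $4$-node graph with two nodes labeled $a$, one node labeled $b$, and one unlabeled node $u$ adjacent to the $a$-nodes with weight $1$ and to the $b$-node with weight $x$; a direct computation of $(S+\lambda\Delta)^{-1}\lambda Y$ shows the prediction on $u$ is $\sign(-2\lambda - x + \lambda x)$, which flips exactly once, at $\lambda = x/(x-2)$. Inverting this, $x = 2\lambda'/(\lambda'-1)$ places the flip at any prescribed $\lambda' \in (1,\infty)$. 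With this gadget in hand one assembles $n/4$ disjoint copies with thresholds $\lambda_1 < \cdots < \lambda_{n/4}$ and \emph{alternating} true labels on the unlabeled nodes, so that the $0$-$1$ loss oscillates above and below a witness value as $\lambda$ sweeps through the thresholds; shattering of $m = \log(n/4)$ instances is then obtained by assigning to the $k$-th instance the subsequence of thresholds at which the $k$-th bit of the binary index flips. Your phrase ``confined to a designated sub-interval \ldots toggled independently'' gestures at this, but without the gadget you cannot verify that a single threshold per component is achievable, that the flip direction is the right one, or that $S+\lambda\Delta$ remains invertible (it does here since $\det$ works out to a nonvanishing expression for $\lambda>0$, $x\ge 0$) --- so the $\Theta(\log n)$ statement is only half proved as written.
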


\noindent Finally, and perhaps more surprisingly, we give the same $\Theta(\log n)$ bound on the pseudo-dimension of the normalized adjacency-matrix based family.

\begin{theorem}\label{thm:delta upper bound}
    The pseudo-dimension of the Normalized Adjacency Matrix-Based Algorithmic Family, $\cF_\delta$, is $\textsc{Pdim}(\mathcal{H}_\delta) = \Theta(\log n)$, where $n$ is the total number of labeled and unlabeled data points.
\end{theorem}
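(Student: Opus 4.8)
The plan is to establish the two matching bounds separately: since the statement asserts $\textsc{Pdim}(\mathcal{H}_\delta)=\Theta(\log n)$, I must prove both an $O(\log n)$ upper bound and an $\Omega(\log n)$ lower bound. For the upper bound I will follow the general recipe for tuning a one-dimensional parameter: show that for every fixed instance $X$ the \emph{dual} loss function $\delta \mapsto H_\delta(X)$ is piecewise constant with only polynomially many pieces, and then invoke the standard counting argument (as in the proofs of Theorems~\ref{thm:alpha upper bound} and~\ref{thm:lambda upper bound}) that turns ``at most $N$ pieces per instance'' into $\textsc{Pdim}=O(\log N)$. For the lower bound I will exhibit $m=\Omega(\log n)$ instances together with a witness vector that is pseudo-shattered.

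The crux of the upper bound is a structural simplification of $F^*_\delta$. Writing $M=WD^{-1}$ for the column-normalized adjacency matrix, observe that $S=D^{-\delta}WD^{\delta-1}=D^{-\delta}MD^{\delta}$, so $S$ is a diagonal conjugate of the $\delta$-independent matrix $M$. Consequently $(I-cS)^{-1}=D^{-\delta}(I-cM)^{-1}D^{\delta}$, and writing $A:=(I-cM)^{-1}$ (a constant matrix), the score entries become
\[
(F^*_\delta)_{ij}=d_i^{-\delta}\sum_{k=1}^n A_{ik}\,Y_{kj}\,d_k^{\delta}.
\]
The prediction at node $i$ changes only when two class scores cross, i.e. when $\sum_k A_{ik}(Y_{kj}-Y_{kj'})\,d_k^{\delta}=0$ for some pair $j\ne j'$ (after clearing the positive factor $d_i^{-\delta}$). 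This is an exponential sum $\sum_k b_k e^{(\ln d_k)\delta}$ in the single variable $\delta$ with at most $n$ distinct exponents $\ln d_k$; by the classical fact that such a generalized Dirichlet polynomial has at most $n-1$ real roots (a Rolle/Chebyshev-system argument), each of the $n\binom{c}{2}=O(n)$ node--class-pair boundaries contributes $O(n)$ breakpoints. Hence $\delta\mapsto H_\delta(X)$ is piecewise constant with $N=O(n^2)$ pieces, and the counting lemma gives $\textsc{Pdim}(\mathcal{H}_\delta)=O(\log N)=O(\log n)$. This realizes the ``determinant/score evaluation plus root-counting'' template, with the determinant $\det(I-cS)=\det(I-cM)$ being constant in $\delta$ and the root-counting carried out on exponential sums rather than polynomials.

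For the $\Omega(\log n)$ lower bound I will construct partially labeled graphs whose dual loss functions oscillate enough to be shattered. The basic building block is a small \emph{gadget}: an unlabeled probe node attached to a few labeled nodes whose degrees and edge weights are chosen so that, using the explicit form of $(F^*_\delta)_{ij}$ above, the probe's predicted class flips at a single prescribed threshold $\delta=\tau$ (solving $A_{up}d_p^{\delta}=A_{uq}d_q^{\delta}$ yields $\tau=\ln(A_{uq}/A_{up})/\ln(d_p/d_q)$, which is tunable through the chosen degrees). Placing $\Theta(n)$ such gadgets with interleaved thresholds and alternating correct/incorrect orientations produces a single instance whose loss is a square wave with $\Omega(n)$ sign changes across $\delta\in[0,1]$. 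Allocating gadgets across $m=\lfloor\log_2 n\rfloor$ instances so that instance $k$ flips exactly at the transitions of the $k$-th bit of a binary counter on a common grid of parameter cells, and taking the witness $r_k$ to separate the two levels of each square wave, lets every target sign pattern $b\in\{\pm1\}^m$ be realized by a common $\delta$ lying in the cell whose index has bits $b$; this pseudo-shatters the $m$ instances.

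The main obstacle is the lower bound, specifically controlling the \emph{global} behavior of the scores: because $A=(I-cM)^{-1}$ mixes the whole graph, I cannot treat the gadgets as fully independent, so the work is to design each graph (degrees, weights, and the labeled/unlabeled split) so that the off-gadget entries of $A$ are negligible or exactly cancel, the flip thresholds $\tau$ land at the prescribed interleaved locations, and the flip directions alternate to yield a genuine square wave. Verifying that the common refinement of all instances' breakpoints yields $2^m$ nonempty cells ordered as a binary counter---so that all sign vectors are simultaneously realizable by a single $\delta$ against one witness---is the delicate, instance-engineering heart of the argument; the upper bound, by contrast, is essentially forced once the diagonal-conjugation identity and the exponential-sum root count are in place.
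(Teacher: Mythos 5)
Your proposal follows essentially the same route as the paper: the upper bound reduces each score difference to an exponential sum $\sum_k b_k e^{(\ln d_k)\delta}$ with at most $n-1$ real roots and then applies the standard $2^m \le O(mn^2c^2)$ counting argument, and the lower bound builds small gadgets whose probe node flips at a prescribed threshold, interleaves $\Theta(n)$ thresholds into an oscillating loss, and shatters $\log n$ instances via a binary-counter alignment --- exactly the paper's plan. Two small remarks. First, your diagonal-conjugation identity $S=D^{-\delta}(WD^{-1})D^{\delta}$, hence $(I-cS)^{-1}=D^{-\delta}(I-cM)^{-1}D^{\delta}$ with $M=WD^{-1}$ constant, is a cleaner derivation of the structural form $(F^*_\delta)_{ij}=d_i^{-\delta}\sum_k A_{ik}Y_{kj}d_k^{\delta}$ than the paper's adjugate/cofactor expansion (its Lemma~\ref{lem: delta F form}), and it makes transparent that $\det(I-cS)$ is independent of $\delta$, which the paper only observes by direct computation on its instance. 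Second, the ``main obstacle'' you flag for the lower bound --- that $A=(I-cM)^{-1}$ couples the gadgets --- dissolves immediately if you place the gadgets in separate connected components, as the paper does with $n/4$ disjoint four-node blocks: then $A$ is block-diagonal and the gadgets are exactly independent, so no cancellation engineering is needed. What does remain to be checked, and what you leave implicit, is that a single gadget can realize any threshold in a sufficiently rich range of $[0,1]$; the paper's explicit gadget yields $\tau=\ln(2c)/\ln(x+2)$, which sweeps $[\ln(2c)/\ln 2,1)$ as the edge weight $x$ varies (requiring $c\ge 1/2$), and some such explicit range computation is needed to complete your argument.
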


\noindent The proofs of the above three theorems follow a similar template. Here, we give an overview of the proof idea. The full proof is in \Cref{appendix:label prop}.

\paragraph{Upper Bound}
First, we investigate the function structure of each index in $F^*$.
For the function classes $\cF_\alpha$ and $\cF_\lambda$, the following lemma is useful.
\begin{lemma}\label{lem:degree of determinant}
     Let $A, B \in \mathbb{R}^{n \times n}$, and $C(x) = (A + xB)^{-1}$ for some $x \in \mathbb{R}$. Each entry of $C(x)$ is a rational polynomial $P_{ij}(x)/Q(x)$ for $i, j \in [n]$ with each $P_{ij}$ of degree at most $n-1$ and $Q$ of degree at most $n$.
\end{lemma}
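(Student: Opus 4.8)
The plan is to invert $A+xB$ explicitly via the adjugate (Cramer) formula and then read off the degrees from the permutation expansion of the determinant. Write $M(x) := A + xB$, so that every entry $M(x)_{k\ell} = A_{k\ell} + x\,B_{k\ell}$ is affine in $x$, i.e.\ a polynomial of degree at most $1$. Wherever $M(x)$ is invertible, Cramer's rule gives
\[
C(x) \;=\; M(x)^{-1} \;=\; \frac{1}{\det M(x)}\,\operatorname{adj} M(x),
\]
so I would set $Q(x) := \det M(x)$ and let $P_{ij}(x)$ denote the $(i,j)$ entry of $\operatorname{adj} M(x)$; it then only remains to bound the two degrees.

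For the denominator, I would expand by the Leibniz formula $\det M(x) = \sum_{\sigma \in S_n}\operatorname{sgn}(\sigma)\prod_{k=1}^{n} M(x)_{k,\sigma(k)}$. Each summand is a product of $n$ affine polynomials and therefore has degree at most $n$; since a sum of polynomials of degree at most $n$ is again of degree at most $n$, this yields $\deg Q \le n$.

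For the numerators, I would recall that the $(i,j)$ entry of the adjugate equals $(-1)^{i+j}$ times the $(j,i)$ minor of $M(x)$, that is, the determinant of the $(n-1)\times(n-1)$ matrix obtained by deleting row $j$ and column $i$. Applying the same Leibniz argument to this smaller matrix --- now a product of only $n-1$ affine factors per permutation --- shows every such minor is a polynomial of degree at most $n-1$, so $\deg P_{ij} \le n-1$, as claimed.

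There is no serious obstacle here: the argument is pure degree bookkeeping on the cofactor expansion. The only point deserving a word of care is the domain of validity --- the identity $C(x)_{ij} = P_{ij}(x)/Q(x)$ holds as an equality of rational functions exactly on $\{x : Q(x)\neq 0\}$, which (assuming $A+xB$ is invertible for the $x$ of interest, as in the applications $I-\alpha S$ and $S+\lambda I_n\Delta$) is all but finitely many $x$. This is all that the subsequent root-counting arguments require, since there each entry of $F^*$ is a fixed linear combination of such bounded-degree ratios, and the number of sign changes of the corresponding loss is controlled by the number of roots of numerator and denominator polynomials of degree $O(n)$.
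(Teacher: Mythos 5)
Your proof is correct and follows essentially the same route as the paper's: invert via the adjugate, expand $\det(A+xB)$ by the Leibniz formula over $S_n$ to get degree at most $n$ (each entry being affine in $x$), and bound each cofactor by the same argument on $(n-1)\times(n-1)$ minors to get degree at most $n-1$. Your added remark about the domain of validity where $Q(x)\neq 0$ is a sensible clarification but not a departure from the paper's argument.
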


\noindent This lemma reduces each index in the matrix of form $C(x) = (A+xB)^{-1}$ into a polynomial of parameter $x$ with degree at most $n$. By definition, we can apply this lemma to $F^*_\alpha$ and $F^*_\lambda$ and conclude that each index of these matrices is a degree-$n$ polynomial of variable $\alpha$ and $\lambda$, respectively. 

For the algorithm family $\cF_\delta$, the following lemma is helpful:

\begin{lemma}\label{lem: delta F form}
    Let $S = D^{-x} W D^{x-1} \in \mathbb{R}^{n \times n}$, and $C(x) = (I - c \cdot S)^{-1}$ for some constant $c \in (0,1)$ and variable $x \in [0,1]$. For any $i,j \in [n]$, the $i,j$-the entry of $C(x)$ is an exponential
    $C(x)_{ij} = a_{ij} \exp(b_{ij}x)$
    for some constants $a_{ij}, b_{ij}$.
\end{lemma}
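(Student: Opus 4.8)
The plan is to exploit a diagonal similarity transformation that isolates the entire $x$-dependence of $S$ into a conjugation, leaving a fixed (parameter-free) matrix in the middle. First I would rewrite $S = D^{-x} W D^{x-1}$ by inserting $D^{\pm 1/2}$ factors: setting $\tilde S = D^{-1/2} W D^{-1/2}$ (the symmetrically normalized adjacency matrix, independent of $x$) and $P = D^{x-1/2}$ (a diagonal matrix), one checks directly that $S = P^{-1} \tilde S P$, since $D^{-x+1/2} D^{-1/2} = D^{-x}$ and $D^{-1/2} D^{x-1/2} = D^{x-1}$. Thus $S$ is similar to the $x$-independent matrix $\tilde S$ through the diagonal conjugator $P$.

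With this factorization in hand, the inverse in $C(x)$ collapses. Because conjugation commutes with forming $I - c(\cdot)$ and with matrix inversion,
\[
C(x) = (I - cS)^{-1} = \bigl(P^{-1}(I - c\tilde S)P\bigr)^{-1} = P^{-1}(I - c\tilde S)^{-1}P = P^{-1} M P,
\]
where $M := (I - c\tilde S)^{-1}$ is a constant matrix with no dependence on $x$. Here I would pause to justify that $M$ is well defined: $\tilde S$ is symmetric with spectrum in $[-1,1]$, and $c \in (0,1)$, so $I - c\tilde S$ is positive definite and hence invertible; since $S$ and $\tilde S$ are similar they share this spectrum, so $(I - cS)$ is invertible for every $x$ and $C(x)$ is genuinely defined on $[0,1]$. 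This tacitly uses $d_i > 0$ for all $i$ (no isolated vertices), so that the fractional powers of $D$ make sense, which I would state as a standing assumption.

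The final step is a direct entrywise read-off using that $P$ is diagonal. Writing $P_{ii} = d_i^{x-1/2}$, we get
\[
C(x)_{ij} = \bigl(P^{-1} M P\bigr)_{ij} = P_{ii}^{-1} M_{ij} P_{jj} = M_{ij}\, d_i^{1/2 - x}\, d_j^{x-1/2}
= \Bigl(M_{ij}\sqrt{d_i/d_j}\Bigr)\exp\!\bigl((\ln d_j - \ln d_i)\,x\bigr),
\]
which is exactly the claimed form $a_{ij}\exp(b_{ij}x)$ with $a_{ij} = M_{ij}\sqrt{d_i/d_j}$ and $b_{ij} = \ln(d_j/d_i)$.

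I do not expect any serious obstacle; the entire content is the observation that the $x$-dependence of $S$ is a pure diagonal conjugation, after which everything reduces to bookkeeping. The one point deserving care is the well-definedness of $M$ and of $C(x)$ across the whole range $x \in [0,1]$, which I would handle via the similarity-invariance of the spectrum as above, and I would flag the positivity-of-degrees assumption needed for the fractional powers of $D$.
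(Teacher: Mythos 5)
Your proof is correct, and it takes a genuinely different route from the paper's. The paper works entrywise through the adjugate: it expands each cofactor of $I - c\cdot S$ via the Leibniz permutation formula and observes that in every permutation product the degree powers telescope, so that each term carries the same factor $(d_i^{-1}d_j)^{\delta}$ and the whole cofactor collapses to a single exponential $a_{ij}\exp(\delta\ln(d_i^{-1}d_j))$ (with the determinant constant in $\delta$ by the same telescoping). You instead observe that $S = P^{-1}\tilde S P$ with $P = D^{x-1/2}$ diagonal and $\tilde S = D^{-1/2}WD^{-1/2}$ fixed, so that $C(x) = P^{-1}(I-c\tilde S)^{-1}P$ and the entrywise form $M_{ij}\sqrt{d_i/d_j}\,\exp\bigl(x\ln(d_j/d_i)\bigr)$ is immediate; the two arguments land on the same constants $b_{ij}=\ln(d_j/d_i)$. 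Your conjugation argument is cleaner and buys something the paper's lemma proof does not supply: a justification that $I-cS$ is invertible for every $x\in[0,1]$, via the spectrum of the symmetric matrix $\tilde S$ lying in $[-1,1]$ and similarity preserving eigenvalues (the paper only checks invertibility later, for the specific $4$-node instance in its lower bound). The paper's expansion, by contrast, is self-contained linear algebra that never needs the spectral fact, and it makes visible exactly why every permutation term shares one exponential base, which is the same structural point your conjugation encodes. Your flagged standing assumption $d_i>0$ is indeed needed in both arguments for the fractional powers of $D$ to make sense.
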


\noindent By definition of $F^*_\delta$, this lemma indicates that each index of $F^*_\delta$ is a weighted sum of $n$ exponentials of the hyperparameter $\delta$.\looseness-1  

For $F^*$ being a prediction matrix of any of the above three algorithmic family, recall that the prediction on each node $i \in [n]$ is $\hat y_i = \text{argmax}_{j \in [c]}([F^*]_{ij})$, so the prediction on a node can change only when $\text{sign}([F^*]_{ij} - [F^*]_{ik})$ changes for some classes $j,k \in [c]$. For the families $\cF_\alpha$ and $\cF_\lambda$, $[F^*]_{ij} - [F^*]_{ik}$ is a rational polynomial $(P_{ij}(\alpha) - P_{ik}(\alpha)) / Q(\alpha)$, where $(P_{ij}(\alpha) - P_{ik}(\alpha))$ and $Q(\alpha)$ are degree of at most $n$ (we can simply replace $\alpha$ with $\lambda$ for $\cF_\lambda$). Therefore, its sign can only change at most $O(n)$ times. For the family $\cF_\delta$, we refer to the following lemma and conclude that the sign of $F^*_{ij}-F^*_{ik}$ can only change at most $O(n)$ times as well.
\begin{lemma} \label{lem:roots of exp sum}
    Let $a_1, \ldots, a_n \in \mathbb{R}$ be not all zero, $b_1, \ldots, b_n \in \mathbb{R}$, and $f(x) = \sum_{i=1}^n a_i e^{b_i x}$. The number of roots of $f$ is at most $n-1$.
\end{lemma}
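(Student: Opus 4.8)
The plan is to prove this by induction on the number of \emph{distinct} exponents appearing in $f$, using Rolle's theorem to pass from $f$ to its derivative while eliminating one exponential term at each step.

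First I would normalize the representation of $f$. Since several $b_i$ may coincide and some $a_i$ may vanish, I would combine all terms sharing a common exponent and discard any term with zero coefficient, rewriting $f(x) = \sum_{k=1}^m c_k e^{\beta_k x}$ where $\beta_1 < \cdots < \beta_m$ are distinct and every $c_k \neq 0$. Because the $a_i$ are not all zero, this leaves $1 \le m \le n$ terms, and it suffices to prove that $f$ has at most $m-1$ roots, since $m - 1 \le n - 1$.

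The induction is on $m$. For the base case $m=1$, the function $f(x) = c_1 e^{\beta_1 x}$ with $c_1 \neq 0$ never vanishes, so it has $0 = m-1$ roots. For the inductive step I would first factor out a strictly positive exponential: set $g(x) = e^{-\beta_1 x} f(x) = c_1 + \sum_{k=2}^m c_k e^{(\beta_k - \beta_1)x}$. Since $e^{-\beta_1 x} > 0$ everywhere, $f$ and $g$ have exactly the same zeros, so it is enough to bound the roots of $g$. Differentiating, $g'(x) = \sum_{k=2}^m c_k(\beta_k - \beta_1) e^{(\beta_k - \beta_1)x}$ is again a sum of exponentials, but now with only $m-1$ terms: the exponents $\beta_k - \beta_1$ remain distinct, and each coefficient $c_k(\beta_k - \beta_1)$ is nonzero because $c_k \neq 0$ and $\beta_k \neq \beta_1$. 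The inductive hypothesis then shows $g'$ has at most $(m-1)-1 = m-2$ roots.

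Finally I would invoke Rolle's theorem: if $g$ had $r$ distinct real roots, then between each consecutive pair $g'$ would vanish at least once, yielding at least $r-1$ roots of $g'$. Combined with the bound on $g'$ this gives $r - 1 \le m - 2$, i.e.\ $r \le m-1$, as claimed. The one point requiring care---and the crux of the argument---is that factoring out $e^{-\beta_1 x}$ \emph{before} differentiating is precisely what forces the number of exponential terms to strictly decrease, since the resulting constant term $c_1$ is annihilated by $\tfrac{d}{dx}$, while multiplication by a strictly positive function leaves the zero set unchanged. Without this normalization the derivative would retain $m$ terms and the induction would fail to close.
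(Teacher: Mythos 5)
Your proof is correct and follows essentially the same route as the paper's: induct, factor out one exponential so that differentiation kills a term, and apply Rolle's theorem to transfer the root bound from the derivative back to the function. Your preliminary normalization (merging equal exponents and discarding zero coefficients so that the derivative's coefficients $c_k(\beta_k-\beta_1)$ are guaranteed nonzero) is a slightly more careful handling of the degenerate cases that the paper's induction on $n$ glosses over, but it is a refinement of the same argument rather than a different one.
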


\noindent Therefore, for all three families, the prediction on a single node can change at most $\binom{c}{2}O(n) \in O(nc^2)$ times as the hyperparameter varies.
For $m$ problem instances, each of $n$ nodes, this implies we have at most $O(mn^2c^2)$ distinct values of the loss function. The pseudo-dimension $m$ then satisfies $2^m \leq O(mn^2c^2)$, which implies $\textsc{Pdim}(\mathcal{H}_{\alpha}) = \textsc{Pdim}(\mathcal{H}_{\lambda}) = \textsc{Pdim}(\mathcal{H}_{\delta}) = O(\log n)$. 

\paragraph{Lower Bound}
Our proof relies on a collection of parameter thresholds and well-designed labeling instances that are shattered by the thresholds. Here we present the proof idea of pseudo-dimension lower bound of the family $\cF_\alpha$. The analysis for $\cF_\lambda$ and $\cF_\delta$ depends on a similar construction. 

We first describe a hard instance of $4$ nodes, using binary labels $a$ and $b$. We have two points labeled $a$ (namely $a_1, a_2$), and one point labeled $b$ (namely $b_1$) connected with both $a_1$ and $a_2$ with edge weight $1$. We also have an unlabeled point $u$ connected to $b_1$ with edge weight $x \geq 0$. That is, the affinity matrix and initial labels are $$W = \begin{bmatrix}
    0 & 1 & 1 & x\\
    1 & 0 & 0 & 0\\
    1 & 0 & 0 & 0\\
    x & 0 & 0 & 0
    \end{bmatrix}, Y = \begin{bmatrix}
    1 & 0 \\
    0 & 1 \\
    0 & 1 \\
    0 & 0 
    \end{bmatrix}.$$
With this construction, the prediction on node $u$ changes and only change when $\alpha = \frac{(x+2)^{1/2}}{2}$. For any $\beta \in [0,1]$ and let $x = 4\beta^2 - 2 \geq 0$, then $\hat y_4 = 0$ when $\alpha < \beta$ and $\hat y_4 = 1$  when $\alpha \geq \beta$.

Now we can create a large graph of $n$ nodes, consisting of $n/4$ connected components as described above. We assume $4$ divides $n$ for simplicity. Given a sequence of $\alpha$'s such that $0 < \alpha_0 < 1/\sqrt{2} \leq  \alpha_1 < \alpha_2 < ... < \alpha_{n/4} < 1$, we can create the $i$-th connected component with $x = 4\alpha_i^2 -2$. Now the predicted label of the unlabeled node in the $i$-th connected component is $0$ when $\alpha < \alpha_i$ and $1$ when $\alpha \geq \alpha_i$. By alternatively labeling these unlabeled nodes, the $0$-$1$ loss of this problem instance fluctuates as $\alpha$ increases.
    
Finally, by precisely choosing the subsequences so that the oscillations align with the bit flips in the binary digit sequence, we can construct m instances that satisfy the $2^m$ shattering constraints.
\begin{figure}[ht]
    \centering
    \includegraphics[width=0.5\textwidth]{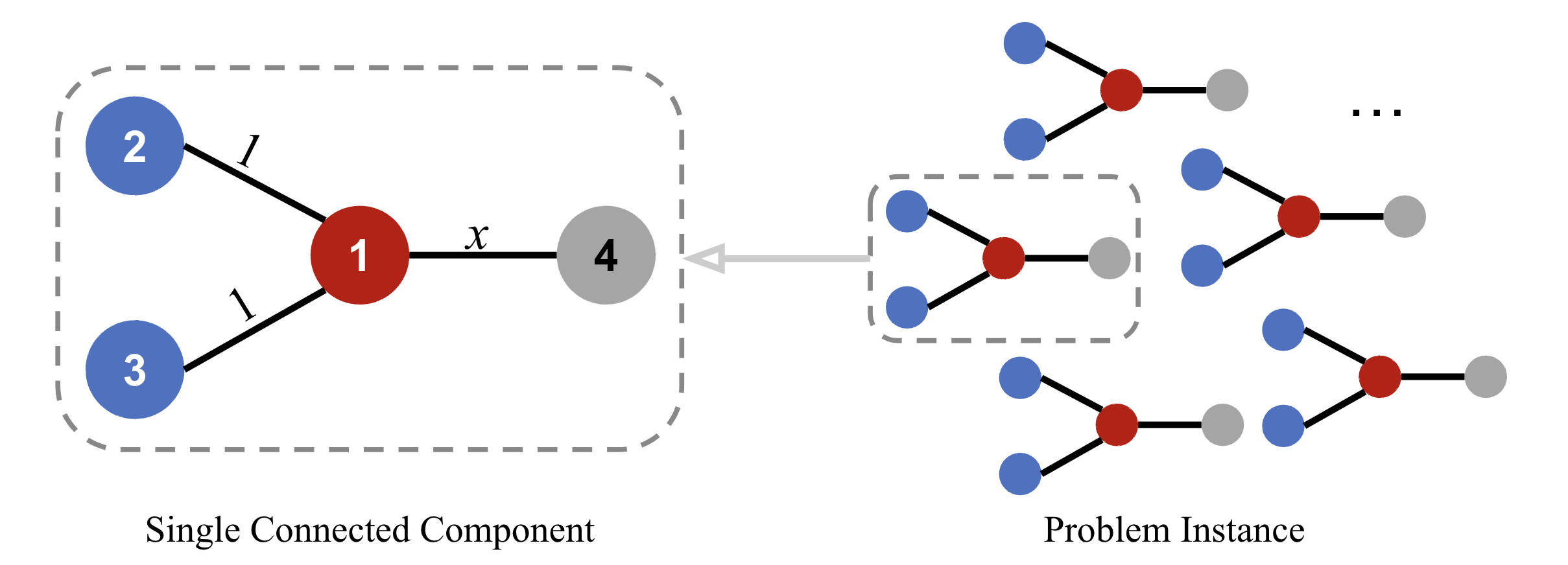}
    \caption{An illustration of the construction of the problem instance in the lower bound proof. 
    }
    \label{fig:instance_graph}
\end{figure}
\begin{remark}
    We reiterate the implications of the above three theorems. All three families have pseudo-dimension $\Theta(\log n)$. This indicates that all three families of algorithms require $m = O\left(\nicefrac{\log n}{\epsilon^2}\right)$ problem instances to learn an $\epsilon$-optimal hyperparameter. 
\end{remark}

\section{GNN Families and  Generalization Guarantees}\label{sec:gcn}
In this section, we study hyperparameter selection for Graph Neural Networks (GNNs)~\citep{kipf2016semi,velivckovic2017graph,iscen2019label}, which excel in tasks involving graph-structured data like social networks, recommendation systems, and citation networks. To understand generalization in hyperparameter selection for GNNs, we analyze Rademacher complexity. 

To the best of our knowledge, we are the first to provide generalization guarantees for hyperparameter selection. Prior work~\citep{garg2020generaliz} focused on Rademacher complexity for graph classification with fixed hyperparameters, whereas we address node classification across multiple instances, optimizing hyperparameters.

In Section~\ref{sec:sgc}, we examine the Rademacher complexity bound of a basic Simplified Graph Convolutional Network~\citep{wu19simplifying} family, as a foundation for the more complex family. 

In Section~\ref{sec:interpolated GCAN}, we introduce a novel architecture, which we call GCAN, that uses a hyperparameter $\eta \in [0,1]$ to interpolate two popular GNNs: the graph convolutional neural networks (GCN) and graph attention neural networks (GAT). GCAN selects the optimal model for specific datasets: $\eta = 0$ corresponds to GCN, $\eta = 1$ to GAT, and intermediate values explore hybrid architectures that may outperform both. We also establish a Rademacher complexity bound for the GCAN family.

Our proofs for SGC and GCAN share a common strategy: modeling the $0$-$1$ loss of each problem instance as an aggregation of single-node losses, reducing the problem to bounding the Rademacher complexity of computation trees for individual nodes. Specifically, we upper bound the $0$-$1$ loss with a margin loss, then relate the complexity of problem instances to the computation trees of nodes. Using a covering argument, we bound the complexity of these trees by analyzing margin loss changes due to parameter variations.\looseness-1

For each node $z_i$, we define its computation tree of depth $L$ to represent the structured $L$-hop neighborhood of $v$, where the children of any node $u$ are the neighbors of $u$, $\mathcal{N}_u$. Denote the computation tree of $z_i$ as $t_i$, and the learned parameter as $\theta$, then $l_\gamma(z_i) = l_\gamma(t_i, \theta)$. 
We can now rewrite $l_\gamma(Z)$ as an expectation over functions applied to computation trees. Let $t_1, ..., t_t$ be the set of all possible computation trees of depth $L$, and $w_i(Z)$ the number of times $t_i$ occurs in $Z$. Then, we have
\[l_\gamma(Z) = \sum_{i=1}^t \frac{w_i(Z)}{\sum_{j=1}^t w_j(Z)} l_\gamma(t_i, \theta) = \mathbb{E}_{t\sim w^\prime(Z)} l_\gamma(t,\theta).\]

\noindent The following proposition indicates that it suffices to bound the Rademacher Complexity of single-node computation trees.\looseness-1 

\begin{proposition}[Proposition 6 from \cite{garg2020generaliz}.]
    Let $S = \{Z_1, ..., Z_m\}$ be a set of i.i.d. graphs, and let $\mathcal{T} = \{t_1, ..., t_m\}$ be such that $t_j \sim w^\prime (Z_j), j \in [m]$. Denote by $\hat R_{S}$ and $\hat R_{\mathcal{T}}$ the empirical Rademacher complexity of $\mathcal{H}_\rho^\gamma$s for graphs $S$ and trees $\mathcal{T}$. Then, $\hat R_{S} = \mathbb{E}_{t_1, ..., t_m} \hat R_{\mathcal{T}}$.
\end{proposition}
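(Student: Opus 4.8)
The plan is to expand both sides of the claimed identity from the definition of empirical Rademacher complexity, insert the tree-decomposition identity $H^\gamma_\rho(Z_k) = \mathbb{E}_{t_k\sim w'(Z_k)}[l_\gamma(t_k,\rho)]$ for the graph-level loss, and then reduce the equality to a single interchange-of-operations statement. Taking the supremum over the hyperparameter $\rho$ (equivalently over members of $\mathcal{H}_\rho^\gamma$), the left-hand side is
\[
\hat R_S = \mathbb{E}_\sigma\left[\sup_\rho \frac{1}{m}\sum_{k=1}^m \sigma_k\, \mathbb{E}_{t_k\sim w'(Z_k)}\big[l_\gamma(t_k,\rho)\big]\right],
\]
whereas the right-hand side is $\mathbb{E}_{t_1,\dots,t_m}\big[\mathbb{E}_\sigma[\sup_\rho \frac{1}{m}\sum_k \sigma_k l_\gamma(t_k,\rho)]\big]$, the outer expectation sampling one computation tree $t_k$ per graph independently. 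The goal is to show these two nested expectations coincide.

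First I would move the Rademacher sign inside the per-graph tree expectation: since $\sigma_k$ is constant with respect to the draw of $t_k$, linearity gives $\sigma_k\mathbb{E}_{t_k}[l_\gamma(t_k,\rho)] = \mathbb{E}_{t_k}[\sigma_k l_\gamma(t_k,\rho)]$. Because $t_1,\dots,t_m$ are drawn independently, the sum of per-graph expectations becomes a single joint expectation, so the signed average inside the supremum equals $\mathbb{E}_{t_1,\dots,t_m}[\frac{1}{m}\sum_k \sigma_k l_\gamma(t_k,\rho)]$. Next I would apply Fubini/Tonelli (the margin loss is bounded, so this is justified) to swap $\mathbb{E}_\sigma$ with $\mathbb{E}_{t_1,\dots,t_m}$ on the right-hand side. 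After these routine steps, the claim reduces exactly to the interchange
\[
\sup_\rho\ \mathbb{E}_{t_1,\dots,t_m}\!\Big[\tfrac{1}{m}\sum_{k} \sigma_k l_\gamma(t_k,\rho)\Big] \;=\; \mathbb{E}_{t_1,\dots,t_m}\Big[\sup_\rho\ \tfrac{1}{m}\sum_{k} \sigma_k l_\gamma(t_k,\rho)\Big],
\]
required to hold inside the outer $\mathbb{E}_\sigma$.

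The main obstacle is precisely this exchange of $\sup_\rho$ with the tree-sampling expectation. The inequality $\le$ is immediate from Jensen (the supremum of an expectation is at most the expectation of the supremum), and this alone yields $\hat R_S \le \mathbb{E}_{\mathcal{T}}[\hat R_{\mathcal{T}}]$, the direction actually needed for the downstream generalization bound. To upgrade to the stated equality, the plan is to argue that a maximizing hyperparameter can be chosen independently of the realized tree sample: the map $\rho \mapsto \frac{1}{m}\sum_k \sigma_k l_\gamma(t_k,\rho)$ is, in expectation over the draws, a linear functional of the fixed mixtures $w'(Z_1),\dots,w'(Z_m)$, so it would suffice to exhibit a single $\rho^\ast$ attaining the supremum simultaneously across the support of the tree sampling. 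I expect establishing this invariance of the optimizer to be the delicate, instance-dependent heart of the argument, and the step most likely to require the structural properties of $\mathcal{H}_\rho^\gamma$ (such as the monotonicity or piecewise structure of $l_\gamma$ in $\rho$) rather than a purely measure-theoretic manipulation.
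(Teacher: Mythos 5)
The paper itself gives no proof of this proposition --- it is imported verbatim from \cite{garg2020generaliz} --- so there is no in-paper argument to compare your attempt against; I can only assess the proposal on its own terms. Your expansion of both sides, the reduction to interchanging $\sup_\rho$ with $\mathbb{E}_{t_1,\dots,t_m}$, and the observation that Jensen immediately yields $\hat R_S \le \mathbb{E}_{\mathcal{T}}[\hat R_{\mathcal{T}}]$ are all correct, and you are also right that this one-sided bound is the only direction consumed by the downstream Rademacher-complexity bounds for SGC and GCAN.

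The gap is in your final step. The interchange cannot be upgraded to an equality by exhibiting a single maximizer $\rho^\ast$ valid across the whole support of the tree sampling, because no such maximizer need exist; in fact the stated equality is false as a general fact about $\mathcal{H}^\gamma_\rho$. Take $m=1$, one graph whose tree mixture $w'(Z_1)$ puts mass $1/2$ on each of two trees $t,t'$, and two hyperparameter values with $l_\gamma(t,\rho_1)=1$, $l_\gamma(t',\rho_1)=0$ and $l_\gamma(t,\rho_2)=0$, $l_\gamma(t',\rho_2)=1$. Then
\[
\mathbb{E}_\sigma\Bigl[\sup_\rho \sigma_1\,\mathbb{E}_{t\sim w'(Z_1)}[l_\gamma(t,\rho)]\Bigr]=\tfrac12\cdot\tfrac12+\tfrac12\cdot\bigl(-\tfrac12\bigr)=0,
\qquad
\mathbb{E}_{t_1}\,\mathbb{E}_\sigma\Bigl[\sup_\rho \sigma_1\, l_\gamma(t_1,\rho)\Bigr]=\tfrac12 ,
\]
so the optimizer is genuinely sample-dependent and Jensen is strict. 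No structural property of $l_\gamma$ in $\rho$ (monotonicity, piecewise behavior) can rescue a statement that already fails for two trees and two hypotheses. The proposition should therefore be read, and is only ever used, as the inequality $\hat R_S \le \mathbb{E}_{t_1,\dots,t_m}[\hat R_{\mathcal{T}}]$; your proof of that half is complete, and the attempt to establish the reverse direction should be dropped rather than repaired.
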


\subsection{Simplified Graph Convolutional Network Family}\label{sec:sgc}

Simplified Graph Convolution Network (SGC) is introduced by \citet{wu19simplifying}. By removing nonlinearities and collapsing weight matrices between consecutive layers, SGC reduces the complexity of GCN while maintaining high accuracy in many applications.

Consider input data $X = (n, Z, L, G)$, where the feature is written as a matrix $Z \in \R^{n \times d}$. For any value of the hyperparameter $\beta \in [0,1]$, let $\tilde{W} = W + \beta I$ be the augmented adjacency matrix, $\tilde{D} = D + \beta I$ be the corresponding degree matrix, and $S = \tilde{D}^{-1/2}\tilde{W}\tilde{D}^{-1/2}$ be the normalized adjacency matrix. Let $\theta \in \mathbb{R}^{d}$ be the learned parameter. The SGC classifier of depth $L$ is
\[\hat Y = \text{softmax}(S^L Z \theta).\]

\noindent We focus on learning the algorithm hyperparameter $\beta \in [0,1]$ and define the SGC algorithm family as $\mathcal{F}_{\beta}$. We denote the class of margin losses induced by $\cF_\beta$ as $\mathcal{H}_{\beta}^{\gamma}$.
To study the generalization ability to tune $\beta$, we bound the Rademacher complexity of $\mathcal{H}_{\beta}^{\gamma}$. The proof is detailed in Appendix~\ref{appendix:SGC}.

\begin{theorem}\label{thm:rc of SGC}
    Assuming $D, W,$ and $Z$ are bounded (the assumptions in \cite{bartlett2017spectrally,garg2020generaliz}), i.e. $d_i \in [C_{dl}, C_{dh}] \subset \mathbb{R}^+$, $w_{ij}\in [0,C_w]$, and $\|Z\|\leq C_z$, we have that the 
    Rademacher complexity of $\mathcal{H}^{\gamma}_{\beta}$ is bounded:
    \begin{align*}
    \hat R_{m}(\mathcal{H}^{\gamma}_{\beta})
    = O\left( \frac{\sqrt{dL\log\frac{C_{dh}}{C_{dl}} + d\log{\frac{mC_zC_\theta}{\gamma}}}}{\sqrt{m}}\right).
    \end{align*}
\end{theorem}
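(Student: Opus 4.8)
The plan is to follow the covering-number route for margin-based Rademacher complexity, using the computation-tree reduction to localize the analysis to individual nodes. First I would invoke the computation-tree proposition (Proposition 6 of \cite{garg2020generaliz}) to replace the graph-level empirical Rademacher complexity $\hat R_{S}$ by the tree-level quantity $\mathbb{E}_{t_1,\dots,t_m}\hat R_{\cT}$, so that it suffices to bound the Rademacher complexity of the class of single-node losses $t\mapsto \ell_\gamma(f_{\beta,\theta}(t),y)$, where the root score is $g_{\beta,\theta}(t)=\e_{\text{root}}^\top S^L Z\theta$ and $f_{\beta,\theta}$ is its softmax. Throughout I would use the key structural fact that $S=\tilde D^{-1/2}\tilde W\tilde D^{-1/2}$ is a (self-loop-augmented) normalized adjacency matrix, so $\|S\|\le 1$ and hence $\|S^L Z\theta\|\le C_z C_\theta$, which keeps every score uniformly bounded.

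Next, rather than pulling the Lipschitz constant of $\ell_\gamma$ out as a $1/\gamma$ prefactor, I would keep it inside a covering argument: since $\ell_\gamma$ is $1/\gamma$-Lipschitz in its argument (and the softmax is $O(1)$-Lipschitz), any $\gamma\eps$-cover of the real-valued score class $\mathcal{G}=\{g_{\beta,\theta}:\beta\in[0,1],\ \|\theta\|\le C_\theta\}$ in the empirical $\ell_\infty$ norm over the sample trees induces an $O(\eps)$-cover of the margin-loss class $\mathcal{H}^{\gamma}_{\beta}$. This is exactly what produces the $\gamma$-inside-the-logarithm behaviour of the final bound.

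The technical core is therefore bounding the metric entropy of $\mathcal{G}$, which I would construct as a product of a cover over $\theta$ and a cover over $\beta$. For fixed $\beta$, the map $\theta\mapsto g_{\beta,\theta}(t)$ is linear with input vector $\e_{\text{root}}^\top S^L Z$ of norm at most $C_z$, so a Euclidean $\eps'$-net of the ball $\{\|\theta\|\le C_\theta\}$ (of log-size $d\log(C_\theta/\eps')$) yields a $C_z\eps'$-cover of the scores; choosing $\eps'\asymp \gamma\eps/(mC_z)$ gives the $d\log(mC_zC_\theta/\gamma)$ term. For the $\beta$-direction I would control how fast the propagated feature matrix $S^L Z\in\R^{n\times d}$ moves with $\beta$: writing $\partial_\beta(S^L)=\sum_{k=0}^{L-1}S^k(\partial_\beta S)S^{L-1-k}$ and bounding $\partial_\beta S$ entrywise through $\partial_\beta\tilde W=\partial_\beta\tilde D=I$ and the degree range $\tilde d_i\in[C_{dl},C_{dh}+1]$, each of the $L$ propagation layers contributes a multiplicative sensitivity governed by the degree ratio $C_{dh}/C_{dl}$; covering the $\beta$-trajectory of each of the $d$ feature columns across the $L$ layers then accounts for the $dL\log(C_{dh}/C_{dl})$ term. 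Combining the two covers and feeding $\log N(\gamma\eps)$ into Dudley's entropy integral (with the chaining cutoff at scale $1/m$ supplying the $m$ inside the logarithm) yields $\hat R_m(\mathcal{H}^{\gamma}_{\beta})=O\big(\sqrt{\,dL\log(C_{dh}/C_{dl})+d\log(mC_zC_\theta/\gamma)\,}/\sqrt m\big)$, and taking the expectation over the random trees (which does not affect the uniform bound) completes the argument via Theorem~\ref{thm:rc gen}.

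The main obstacle I anticipate is the $\beta$-sensitivity analysis of the $L$-fold normalized propagation $S^L$: because $\beta$ enters both the numerator $\tilde W=W+\beta I$ and, through the normalization $\tilde D^{-1/2}$, the denominators, the derivative $\partial_\beta S$ couples all entries, and one must show that the compounding over $L$ layers stays controlled by $C_{dh}/C_{dl}$ rather than blowing up. Using $\|S\|\le1$ to prevent the telescoping sum from amplifying is the crucial step; obtaining the clean degree-ratio dependence, as opposed to an unnormalized and potentially much larger bound, is what makes the $\sqrt{d}$-in-front, logarithmic-in-everything-else form of the theorem possible and is the part demanding the most care.
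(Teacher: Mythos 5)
Your proposal follows essentially the same route as the paper's proof: the computation-tree reduction via Proposition 6 of \cite{garg2020generaliz}, a Lipschitz/perturbation bound on the margin loss in $(\beta,\theta)$ that puts the $1/\gamma$ factor into the covering radius, a product cover of the one-dimensional $\beta$-interval and the $d$-dimensional $\theta$-ball giving $\log P \lesssim (d+1)\log(\max\{k_2,k_3\}/\epsilon)$, and Dudley's entropy integral with cutoff $\alpha=1/\sqrt{m}$. The only substantive difference is in how the $\beta$-sensitivity of the $L$-fold propagation is controlled: you differentiate $S^L$ and invoke $\|S\|\le 1$, whereas the paper unrolls a node-level recursion $\Delta_{L,i}\le O(1)\cdot\|T_{L-1,i}\|\,|\beta-\beta'| + \frac{1+C_{dh}}{C_{dl}}\Delta_{L-1,i}$ whose geometric compounding (ratio possibly exceeding $1$) is harmless because it only enters inside the logarithm of the covering number, so both variants land on the same $dL\log(C_{dh}/C_{dl})$ term.
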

\noindent This theorem indicates that the number of problem instances needed to learn a near-optimal hyperparameter only scale polynomially with the input feature dimension $d$ and the number of layers $L$ of the neural networks, and only scales logarithmically with the norm bounds $C$'s and the margin $\gamma$. 

\subsection{GCAN Interpolation and  Rademacher Complexity Bounds} \label{sec:interpolated GCAN}
In practice, GCN and GAT outperform each other in different problem instances \citep{dwivedi2023benchmarking}. To effectively choose the better algorithm, we introduce a family of algorithms that \emph{interpolates} GCN and GAT, parameterized by $\eta \in [0,1]$. 
This family includes both GCN and GAT, so by choosing the best algorithm within this family, we can automatically select the better algorithm of the two, specifically for each input data. 
Moreover, GCAN could potentially outperform both GAT and GCN by taking $\eta$ as values other than $0$ and $1$.
We believe such an interpolation technique could potentially be used to select between other algorithms that share similar architecture. 

Recall that in both GAT and GCN, the update equation has the form of activation and a summation over the feature of all neighboring vertices in the graph (a brief description of GAT and GCN is given in \Cref{appendix:GAT_GCN}). Thus, we can interpolate between the two update rules by introducing a hyperparameter $\eta \in [0,1]$, where $\eta = 0$ corresponds to GCN and $\eta=1$ corresponds to GAT. Formally, given input $X = (n, \{z_i\}_{i=1}^n, L, G)$, we initialize $h_i^0 = z_i$ and update at a level $\ell$ by
\begin{align*}
    &h_i^{\ell} = \sigma\left(\sum_{j \in \mathcal{N}_i} \left(\eta \cdot e_{ij}^{\ell} + (1 - \eta) \cdot \frac{1}{\sqrt{d_i d_j }} \right)  U^{\ell}  h_j^{\ell} \right),
\end{align*}
{where}
\begin{align*}
    &e_{ij}^{\ell} = \frac{\exp(\hat{e}_{ij}^{\ell})}{\sum_{j' \in \mathcal{N}_i} \exp(\hat{e}_{ij'}^{\ell})}, ~ \hat{e}_{ij}^{\ell} = \sigma(V^{\ell} [U^{\ell}h_i^{\ell}, U^{\ell}h_j^{\ell}]).
\end{align*}
Here $e_{ij}^{\ell}$ is the attention score of node $j$ for node $i$. $V^{\ell}$ and $U^{\ell}$ are learnable parameters. $\sigma(\cdot)$ is a $1$-Lipschitz activation function (e.g. ReLU, sigmoid, etc.).
$[U^{\ell}h_i^{\ell}, U^{\ell}h_j^{\ell}]$ is the concatenation of $U^{\ell}h_i^{\ell}$ and $U^{\ell}h_j^{\ell}$. 
We denote this algorithm family by $\mathcal{F}_\eta$ and the induced margin loss class by $\mathcal{H}_\eta^\gamma$. 

While our primary focus is not the comparative performance of GCAN against GAT or GCN, our curiosity led us to conduct additional experiments, presented in \Cref{appendix:experiments}. The results consistently show that GCAN matches or exceeds the performance of both GAT and GCN.

\begin{theorem}\label{thm:rc of GCAN}
    Assume the parameter $U^\ell$ is shared over all layers, i.e. $U^\ell = U$ for all $\ell \in [L]$ (the assumption used in~\cite{garg2020generaliz}), and the parameters are bounded: $\|U\|_F \leq C_U$, $\|V^\ell\|_2 \leq C_V$, $\|z_i\|\leq C_z$, and $d_i \in [C_{dl}, C_{dh}]$. Denoting the branching factor by $r = \max_{i \in [n]} |\sum_{j \in [n]}\ind[w_{ij} \neq 0]|$, we have that 
    the Rademacher complexity of $\mathcal{H}^\gamma_{\eta}$ is bounded:
    \begin{align*}
        \hat R_{m}(\mathcal{H}^\gamma_{\eta})
        = O\left( \frac{d\sqrt{L\log \frac{rC_U}{C_{dl} + C_U} + \log\frac{mdC_z}{\gamma}}}{\sqrt{m}} \right).
    \end{align*}
\end{theorem}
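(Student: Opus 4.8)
The plan is to mirror the template used for Theorem~\ref{thm:rc of SGC}, adapted to the richer parameterization of GCAN. First I would pass from the $0$-$1$ loss to the margin loss $\ell_\gamma$ as set up in the preliminaries, and then invoke the computation-tree Proposition to replace the empirical Rademacher complexity $\hat R_S$ of the graph loss class by $\mathbb{E}[\hat R_{\mathcal{T}}]$, the complexity of the margin-loss class evaluated on single-node depth-$L$ computation trees. After this reduction, the object to control is the class of tree functions $\theta \mapsto l_\gamma(t,\theta)$, where now the full parameter is $\theta = (\eta, U, V^1, \dots, V^L)$ ranging over $\eta \in [0,1]$, $\|U\|_F \le C_U$, and $\|V^\ell\|_2 \le C_V$. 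I would bound $\hat R_{\mathcal{T}}$ by a covering-number (Dudley) argument over this joint parameter space, using that the single-node margin loss is $1/\gamma$-Lipschitz in the network output.

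The technical core is a pair of recursive estimates on the computation tree. First I would establish a uniform norm bound on the layer-$\ell$ embeddings $h_i^\ell$: using that $\sigma$ is $1$-Lipschitz with $\sigma(0)=0$, that the attention weights $e_{ij}^\ell$ are nonnegative and sum to one over the $\le r$ children, and that the GCN coefficients $\tfrac{1}{\sqrt{d_i d_j}}$ are at most $1/C_{dl}$, the aggregated coefficient mass $\sum_{j\in\mathcal{N}_i}\bigl(\eta e_{ij}^\ell + (1-\eta)\tfrac{1}{\sqrt{d_i d_j}}\bigr)$ is $O(r/C_{dl})$, so $\|h_i^\ell\|$ grows by a per-layer factor $\rho$ of order $\tfrac{rC_U}{C_{dl}+C_U}$ and $\|h_i^L\| \lesssim C_z\,\rho^L$. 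Second, and more importantly, I would bound how much the tree output, and hence the margin loss, changes under a perturbation of each coordinate of $\theta$. Perturbing $\eta$ moves only the convex combination of the two aggregation kernels and is easy to control. Perturbing $U$ or a $V^\ell$ is harder: $U$ enters both directly, through the messages $U h_j^{\ell-1}$, and indirectly, through the attention logits $\hat e_{ij}^\ell = \sigma(V^\ell[U h_i,U h_j])$, while each $V^\ell$ enters through the softmax. Using Lipschitzness of the softmax and of $\sigma$, I would propagate these perturbations through the recursion to obtain a Lipschitz constant for $l_\gamma$ in $\theta$ that scales like $\rho^L/\gamma$ times the relevant input norms.

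Given these Lipschitz estimates, the covering number follows from covering each factor of the parameter space at a scale matched to its Lipschitz constant: the matrix $U\in\mathbb{R}^{d\times d}$ dominates, contributing a term of order $d^2$ times $L\log\rho$ together with a lower-order logarithmic term in the norms, dimension, and $1/(\gamma\epsilon)$, while the $V^\ell$ and $\eta$ contribute only subdominant $O(dL)$ and $O(1)$ terms. A standard Dudley entropy-integral bound then yields $\hat R_{\mathcal{T}} = O\bigl(\sqrt{\log N(\epsilon)/m} + \epsilon\bigr)$; optimizing $\epsilon \sim 1/m$ replaces $\log(1/\epsilon)$ by $\log m$ and collects the residual dependence into $\log\tfrac{mdC_z}{\gamma}$. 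Taking the expectation over trees preserves the bound, and the $d^2$ prefactor inside $\log N$ becomes the $d$ outside the square root, giving the claimed
\[
\hat R_m(\mathcal{H}^\gamma_\eta) = O\!\left(\frac{d\sqrt{L\log\frac{rC_U}{C_{dl}+C_U}+\log\frac{mdC_z}{\gamma}}}{\sqrt{m}}\right).
\]

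I expect the main obstacle to be the perturbation analysis of the attention mechanism in the second recursive estimate. Unlike SGC, where the aggregation is linear and parameter-free apart from the final classifier weights, here the coefficients $e_{ij}^\ell$ depend nonlinearly, through a softmax of $\sigma$-activated logits, on $U$, on $V^\ell$, and on the lower-layer embeddings, which themselves depend on all earlier parameters. Ensuring this compounding dependence does not inflate the per-layer amplification beyond $\tfrac{rC_U}{C_{dl}+C_U}$ requires carefully exploiting that the attention weights form a probability distribution over the $\le r$ children, so their total mass is controlled independently of $U$ and $V$, and that the softmax and $\sigma$ each contribute only a bounded Lipschitz factor. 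Getting the branching factor $r$ and the norm bounds to enter exactly as stated, rather than with spurious extra powers of $r$ or $L$, is the delicate part.
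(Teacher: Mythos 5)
Your proposal follows essentially the same route as the paper's proof: reduce to the margin loss on single-node depth-$L$ computation trees via the computation-tree proposition, establish recursive norm and perturbation bounds on the layer embeddings (with per-layer amplification governed by $r$, $C_U$, $C_{dl}$), convert the resulting Lipschitz estimates in the parameters into a covering number whose exponent is dominated by the $d^2$ entries of $U$, and finish with a Dudley entropy integral at scale $\alpha \sim 1/\sqrt{m}$ so that the $d^2$ inside the log becomes the $d$ outside the square root. The only substantive difference is that you propose to also cover the attention parameters $V^\ell$ and propagate the softmax perturbation explicitly, whereas the paper's cover is taken only over $(\eta, U)$ after coarsely absorbing the attention-weight differences; your additional $O(dL)$ entropy contribution is subdominant to the $d^2$ term, so both routes arrive at the stated bound.
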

\noindent The proof of Theorem~\ref{thm:rc of GCAN} is similar to that of Theorem~\ref{thm:rc of SGC}. See Appendix~\ref{appendix:GCAN} for details.
\begin{figure*}[ht]
    \centering
    \includegraphics[width=0.99\linewidth]{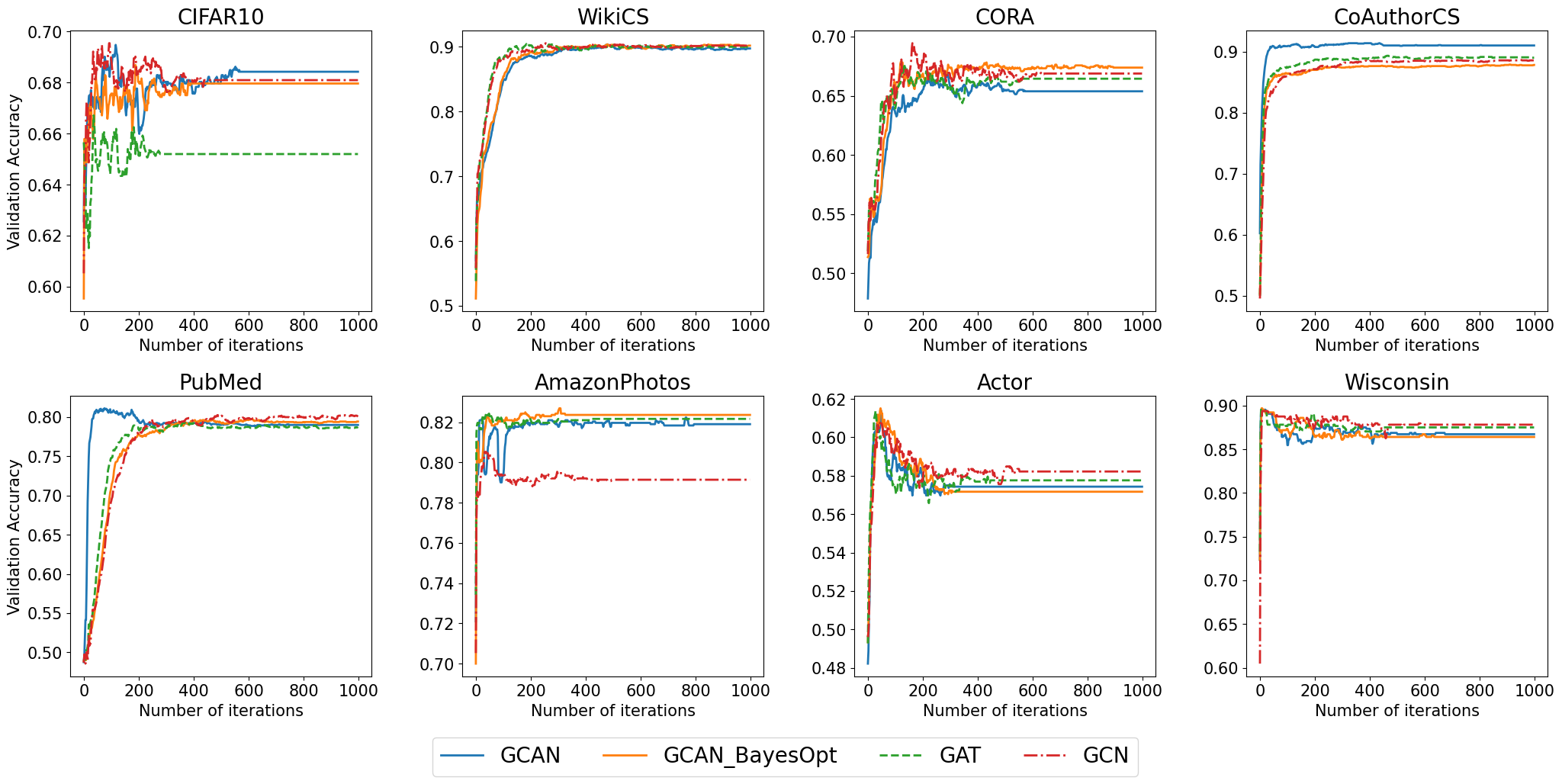}
    \caption{Validation Accuracy (computed on the unlabeled nodes across 20 testing graphs) vs. iterations. GCAN competes with the better accuracy between GAT and GCN across datasets. }
    \label{fig:gcan_multi}
\end{figure*}
\begin{remark}
    The main difference between the Rademacher Complexity of Simplified Graph Convolution Network (\Cref{thm:rc of SGC}) and GCAN (\Cref{thm:rc of GCAN}) is the dependency on feature dimension $d$: $\sqrt{d}$ for SGC and $d$ for GCAN. This difference arises from the dimensionality of the parameters. The parameter $\theta$ in SGC has dimension $d$, but the parameter $U$ and $V$ in GCAN have dimension $d \times d$ and $1 \times 2d$, respectively. As GCAN is a richer model, it requires more samples to learn, but this is not a drawback; its complexity allows it to outperform SGC in many scenarios. 
\end{remark}
\begin{remark}
    There are no direct dependencies on $n$ in \Cref{thm:rc of SGC} and \Cref{thm:rc of GCAN}, but the dependency is implicitly captured by the more fine-grained value $C_{dl}, C_{dh}$, and $C_Z$. Here, $C_{dl}$ and $C_{dh}$ are the lower and upper bounds of the degree (number of neighbors) of the nodes, which generally increase with $n$.
    $C_Z$ is the Frobenius norm of the feature matrix $Z \in \mathbb{R}^{n \times d}$. Since the size of $Z$ scales with $n$, the value of $C_Z$ is generally larger for larger $n$.
\end{remark}
\section{Experiments}
In this section, we empirically verify the effectiveness of our hyperparameter selection method.

We focus on our GCAN architecture, aiming to demonstrate our approach's effectiveness for selecting algorithm hyperparameters in our setup. To illustrate this, we compare the performance of GCAN with tuned hyperparameters against GAT and GCN.

For each dataset, we sample 20 random sub-graphs of 100 nodes to learn the optimal hyperparameter $\eta$ via backpropagation. A large disconnected graph is formed by combining these sub-graphs, allowing parameter values to vary across graphs while sharing a unified learnable $\eta$. The optimized hyperparameter is then tested on another 20 test sub-graphs from the same dataset.

We also compare our backpropagation-based approach with Bayesian Optimization (see e.g. \cite{frazier2018tutorialbayesianoptimization}). Using the same 20 training sub-graphs, we perform Bayesian Optimization to select the hyperparameter $\eta$, ensuring both methods use an equal number of forward passes. The selected $\eta$ is then evaluated on a separate set of 20 test sub-graphs from the same dataset.

The results on the test set are shown in \Cref{fig:gcan_multi}. Note that GCN outperforms GAT on some datasets (e.g.\ CORA, CoAuthorCS) and GCN performs better on others (e.g.\ CIFAR10, see also \cite{dwivedi2023benchmarking}). With GCAN, we can achieve the best performance on most datasets.
Indeed, as seen in \Cref{fig:gcan_multi}, GCAN consistently achieves higher or comparable accuracy compared to both GAT and GCN across all datasets. Notably, GCAN demonstrates significant improvements in CIFAR10 and CoAuthorCS, highlighting its effectiveness in these scenarios. Also, comparing backpropagation with Bayesian Optimization, backpropagation achieves better performance on more datasets (e.g. \ CIFAR10, CoAuthorCS, Actor), but Bayesian Optimization is more effective in certain datasets (e.g. \ CORA, AmazonPhotos).

In \Cref{appendix:experiments}, we also conduct experiments to empirically verify the results in \Cref{sec:label_prop}. We show that by selecting the number of problem instances $m = O(\log n / \epsilon^2)$, the empirical generalization error is within $O(\epsilon)$, matching our theoretical results. We also have further details on the empirical setup and the variation of the accuracy of GCAN with the hyperparameter $\eta$ in the Appendix. 
\section{Conclusion}
We study the problem of hyperparameter tuning in graph-based semi-supervised learning for both classical label-propagation based techniques as well as modern deep learning algorithms. For the former, we obtain tight learning guarantees by bounding the pseudo-dimension of the relevant loss function classes. For the latter, we study a novel interpolation of convolutional and attention based graph neural network architectures and provide data-dependent bounds on the complexity of tuning the  hyperparameter the interpolates the two architectures. We obtain a sharper generalization error bound for tuning the hyperparameter in the simplified graph convolutional networks proposed in prior work. Our experiments indicate that we can achieve consistently good empirical performance across datasets by tuning the interpolation parameter.

An interesting direction for further investigation involves improving computational efficiency. \citet{sharma2023efficiently} introduced techniques for approximating loss functions to reduce the cost of tuning graph kernel hyperparameters. It would be worthwhile to explore whether their methods can be adapted to our setting to alleviate computational burdens.

Another natural extension is the tuning of multiple hyperparameters. Although this increases analytical complexity, we anticipate that our techniques remain applicable. For GNN-based algorithms, Rademacher Complexity may still provide a suitable capacity measure. Our approach would aim to bound the variation in predicted values as hyperparameters change and then apply a covering argument. For label propagation methods, we would instead analyze how the scoring matrix evolves with respect to hyperparameter shifts. These directions offer a compelling foundation for extending our techniques to more complex tuning scenarios.

\section*{Acknowledgments}

We thank Nina Balcan and anonymous reviewers for helpful comments. This work was supported in part by the National Science Foundation under grants ECCS-2216899 and ECCS-2216970.

\bibliographystyle{plainnat}
\bibliography{references}

\newpage
\appendix
\onecolumn
\section*{Appendix}
\section{Proofs in Section~\ref{sec:label_prop}} \label{appendix:label prop}
We provide additional proof details from Section~\ref{sec:label_prop} below. 
\subsection{Proof of Lemma~\ref{lem:degree of determinant}}\label{appendix:proof of determinant}

\begin{proof}
    Using the adjugate matrix, we have
    $$C(x) = \frac{1}{\det(A+xB)}\text{adj}(A+xB).$$
    The determinant of $A+xB$ can be written as
    $$\det(A+xB) = \sum_{\sigma \in S_n}\left(\text{sgn}(\sigma) \prod_{i=1}^n [A+xB]_{i\sigma_i}\right),$$
    where $S_n$ represents the symmetric group and $\text{sgn}(\sigma) \in \{\pm 1\}$ is the signature of permutation $\sigma$. Thus $\det(A+xB)$ is a polynomial of $x$ with a degree at most $n$.  The adjugate of $A+xB$ is  $$\text{adj}(A+xB) = C^\top,$$ where $C$ is the cofactor matrix of $A+xB$. By definition, each entry of $C$ is $C_{ij} = (-1)^{i+j}k_{ij}$ where $k_{ij}$ is the determinant of the $(n-1) \times (n-1)$ matrix that results from deleting $i$-th row and $j$-th column of $A+xB$. This implies that each entry of $C$ (and thus $\text{adj}(A+xB)$) is a polynomial of degree at most $n-1$. Letting $Q(x) = \det (A+xB)$ and $P_{ij}(x) = [\text{adj}(A+xB)]_{ij}$ concludes our proof.
\end{proof}

\subsection{Proof of Lemma~\ref{lem: delta F form}}
\begin{proof}
    The $ij$-th element of $I - c \cdot S$ is 
\[[I - c \cdot S]_{ij} = \begin{cases}
    -c \cdot d_{i}^{-\delta}W_{ij}d_{j}^{\delta-1} = -(d_{i}^{-1}d_{j})^{\delta}(c \cdot W_{ij}d_{j}^{-1}) & \text{, if $i \neq j$}\\
    1=(d_{i}^{-1}d_{i})^\delta & \text{, otherwise.}
\end{cases}\]
Using adjugate matrix, we have
    $$(I-c\cdot S)^{-1} = \frac{1}{\det(I-c \cdot S)}\text{adj}(I-c\cdot S).$$
Note that the determinant of any $k \times k$ matrix $A$ can be written as 
$$\det(A) = \sum_{\sigma \in S_k}\left(\text{sgn}(\sigma) \prod_{i=1}^k [A]_{i\sigma_i}\right),$$
where $S_k$ represents the symmetric group and $\text{sgn}(\sigma) \in \{\pm 1\}$ is the signature of permutation $\sigma$. 

Now consider $\text{adj}(I-c \cdot S)$. Let $M_{ij}$ be the $(n-1) \times (n-1)$ matrix resulting from deleting $i$-th row and $j$-th column from $[I- c \cdot S]$. Then, 
$$[\text{adj}(I - c \cdot S)]_{ij} = (-1)^{i+j}\det(M_{ji}) = \sum_{\sigma \in S_{n-1}}\left(\text{sgn}(\sigma) \prod_{k=1}^{n-1} [M_{ji}]_{k\sigma_k}\right) = \sum_{\sigma \in S_{n-1}}\left(a_\sigma \exp(\delta\ln b_\sigma)\right),$$
for some constants $a_\sigma, b_\sigma$ that satisfies

$$b_\sigma = (\prod_{k \in [n]\backslash \{j\}} d_{k}^{-1})(\prod_{k \in [n]\backslash\{i\}}d_{k}) = d_{i}^{-1}d_{j}.$$

We can then rewrite $[\text{adj}(I- c \cdot S)]_{ij}$ as
$$[\text{adj}(I- c \cdot S)]_{ij} = \sum_{\sigma \in S_{n-1}} (a_\sigma \exp(\delta \ln(d_{i}^{-1}d_{j}))) = a_{ij}\exp(\delta \ln(d_{i}^{-1}d_{j})),$$
where $a_{ij} = \sum_{\sigma \in S_{n-1}} a_\sigma$.
\end{proof}

\subsection{Proof of Lemma~\ref{lem:roots of exp sum}}
\begin{proof}
We prove by induction on $n$. If $n=1$, then $f(x) = ae^{bx}$ and $a \neq 0$, so $f(x)$ has $0=n-1$ root. Now assume that the statement holds for some $n=m$ and consider when $n=m+1$. That is, we have
$$f(x) = \sum_{i=1}^{m+1} a_i e^{b_i x}.$$
Assume for the sake of contradiction that $f$ has $n = m+1$ roots. Define  
$$g(x) = \frac{f(x)}{e^{b_{m+1}x}} = \sum_{i=1}^m a_ie^{(b_i - b_{m+1})x} + a_{m+1},$$
then $g$ also has $m+1$ roots. Since $g$ is continuous, 
$$g^\prime (x) = \sum_{i=1}^m (b_i - b_{m+1})a_ie^{(b_i - b_{m+1})x}$$
must have $m$ roots. However, using our induction hypothesis, it should have at most $m-1$ roots. This means our assumption is incorrect, i.e. $f$ must have at most $m = n-1$ roots. 

We conclude that $f$ must have at most $n-1$ roots. 
\end{proof}

\subsection{Proof of Theorem~\ref{thm:alpha upper bound}}\label{appendix:alpha}
\paragraph{Upper Bound.} Proof is given in \Cref{sec:label_prop}.

\paragraph{Lower Bound.} We first construct the small connected component of $4$ nodes:
\begin{lemma} \label{lem:alpha connected components}
Given $x\in [1/\sqrt{2},1)$, there exists a labeling instance $(G, L)$ with $4$ nodes, such that the predicted label of the unlabeled points changes only at $\alpha = x$ as $\alpha$ varies in $(0,1)$.
\end{lemma}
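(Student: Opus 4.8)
The plan is to verify the explicit four-node star construction already exhibited in the proof sketch, with the single free edge weight reparametrized so that the prediction threshold lands exactly at the prescribed value $x$. Label the nodes $b_1, a_1, a_2, u$ in the order used for $W$ and $Y$, let $w \ge 0$ denote the weight of the edge between $u$ and $b_1$ (to be fixed at the end), and record the degree matrix $D = \diag(2+w, 1, 1, w)$ and the symmetrically normalized matrix $S = D^{-1/2} W D^{-1/2}$. Writing $p = 1/\sqrt{2+w}$ for the normalized $b_1$--$a_i$ weights and $q = \sqrt{w/(2+w)}$ for the normalized $b_1$--$u$ weight, the key algebraic fact I would exploit is the normalization identity $2p^2 + q^2 = 1$.

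First I would solve the linear system $(I - \alpha S)v = b$ directly rather than inverting the matrix. Because the graph is a star centered at $b_1$, the three leaf equations give $v_{a_i} = b_{a_i} + \alpha p\, v_{b_1}$ and $v_u = b_u + \alpha q\, v_{b_1}$; substituting these into the center equation and using $2p^2 + q^2 = 1$ collapses everything to the scalar relation $v_{b_1}(1-\alpha^2) = b_{b_1} + \alpha p(b_{a_1}+b_{a_2}) + \alpha q\, b_u$. Plugging in the two relevant right-hand sides $Y_{\cdot,1} = e_{b_1}$ and $Y_{\cdot,2} = e_{a_1}+e_{a_2}$ then yields closed forms for the two class scores on the unlabeled node $u$: up to the common positive factor $(1-\alpha)$, one gets $[F^*]_{u,1} \propto \alpha q/(1-\alpha^2)$ and $[F^*]_{u,2} \propto 2\alpha^2 p q/(1-\alpha^2)$.

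Next I would analyze the prediction, which is $\argmax$ over the two classes and hence flips precisely when the score difference changes sign. The difference $[F^*]_{u,2} - [F^*]_{u,1}$ factors as a sign-definite quantity times $(2\alpha p - 1)$ on the whole interval $\alpha \in (0,1)$ (the factors $\alpha$, $q$, $1-\alpha$, and $1+\alpha$ are all positive there), so the sign changes exactly once, at $\alpha = 1/(2p) = \sqrt{2+w}/2$, and the predicted label of $u$ is constant on each side of this value. Finally I would set $w = 4x^2 - 2$, so that the threshold equals $\sqrt{2 + (4x^2-2)}/2 = x$; for $x \in [1/\sqrt2, 1)$ this gives $w \in [0,2)$, a legitimate nonnegative edge weight, which completes the construction.

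I expect the only delicate point to be the endpoint $x = 1/\sqrt2$, where $w = 0$ and the node $u$ becomes isolated, so that $q = 0$ and $D^{-1/2}$ is degenerate. Here I would either restrict to $x$ strictly greater than $1/\sqrt2$ (which is enough freedom for later assembling the shattering instances) or adopt the standard convention that an isolated node contributes a zero row to $S$, after which the $\argmax$ tie-breaking keeps the statement intact. Everything else is a routine rational-function sign computation with no genuine obstruction; the one real ingredient is the observation that the star structure together with the normalization $2p^2 + q^2 = 1$ reduces the entire $4 \times 4$ inverse to a single scalar equation.
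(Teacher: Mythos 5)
Your construction is exactly the paper's: the same four-node star centered at the labeled singleton, the same threshold $\alpha = \sqrt{2+w}/2$, and the same reparametrization $w = 4x^2-2$. The only difference is how you evaluate the scores on $u$: the paper writes out $D^{-1/2}$, $S$, and the full $4\times 4$ inverse $(I-\alpha S)^{-1}$ via the adjugate and then reads off $F^*_{4,2}-F^*_{4,1} = \mathrm{sign}\bigl(x^{1/2}(2\alpha-(x+2)^{1/2})\bigr)$, whereas you solve $(I-\alpha S)v = b$ by back-substituting the three leaf equations into the center equation and using $2p^2+q^2=1$ to collapse everything to one scalar relation. Both land on the identical sign expression $\frac{\alpha q}{1+\alpha}(2\alpha p - 1)$, so this is essentially the same proof with a cleaner computation; your route generalizes more gracefully to larger stars and avoids transcribing a $4\times4$ adjugate. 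One point in your favor worth keeping: you correctly flag that at the endpoint $x = 1/\sqrt{2}$ the edge weight is $w=0$, node $u$ is isolated, and $D^{-1/2}$ is singular --- the paper's own formula silently produces $\mathrm{sign}(0)$ there and its stated interval $[1/\sqrt{2},1)$ includes this degenerate case; restricting to $x > 1/\sqrt{2}$ (or fixing a tie-breaking convention) is needed and, as you note, costs nothing in the downstream shattering argument.
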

\begin{proof}
We use binary labeling $a$ and $b$. We have two points labeled $a$ (namely $a_1, a_2$), and one point labeled $b$ (namely $b_1$) connected with both $a_1$ and $a_2$ with edge weight $1$. We also have an unlabeled point $u$ connected to $b_1$ with edge weight $x \geq 0$. That is, the affinity matrix and initial labels are $$W = \begin{bmatrix}
    0 & 1 & 1 & x\\
    1 & 0 & 0 & 0\\
    1 & 0 & 0 & 0\\
    x & 0 & 0 & 0
\end{bmatrix}, Y = \begin{bmatrix}
    1 & 0 \\
    0 & 1 \\
    0 & 1 \\
    0 & 0 
\end{bmatrix}.$$
Recall that the score matrix is $$F^* = (1-\alpha)(I -\alpha S)^{-1}Y .$$ 
We now calculate: 
\begin{align*}
    D^{-1/2} =& \begin{bmatrix}
    (x+2)^{-1/2} & 0 & 0 & 0\\
    0 & 1 & 0 & 0\\
    0 & 0 & 1 & 0\\
    0 & 0 & 0 & x^{-1/2}
\end{bmatrix},
\end{align*}
\begin{align*}
S = D^{-1/2}WD^{-1/2} =&
\begin{bmatrix}
0 & (x+2)^{-1/2} & (x+2)^{-1/2} & x^{1/2}(x+2)^{-1/2} \\
(x+2)^{-1/2} & 0 & 0 & 0\\
(x+2)^{-1/2} & 0 & 0 & 0\\
x^{1/2}(x+2)^{-1/2} & 0 & 0 & 0
\end{bmatrix},
\end{align*}
\begin{align*}
(I-\alpha S)^{-1} =& 
\frac{1}{\det(I-\alpha S)}\text{adj}(I - \alpha S)\\
=& \frac{1}{1-\alpha^2}\begin{bmatrix}
    1 & \frac{\alpha}{(x+2)^{1/2}}  & \frac{\alpha}{(x+2)^{1/2}}  & \frac{\alpha x^{1/2}} {(x+2)^{1/2}} \\
    \frac{\alpha}{(x+2)^{1/2}} & 1-\frac{\alpha^2(x+1)x}{(x+2)} & \frac{\alpha^2}{x+2} & \frac{\alpha^2 x^{1/2}}{(x+2)}
    \\
    \frac{\alpha}{(x+2)^{1/2}} & \frac{\alpha^2}{x+2} &  1-\frac{\alpha^2(x+1)x}{(x+2)} & \frac{\alpha^2 x^{1/2}}{(x+2)}
    \\
    \frac{\alpha x^{1/2}} {(x+2)^{1/2}} & \frac{\alpha^2 x^{1/2}}{(x+2)} & \frac{\alpha^2 x^{1/2}}{(x+2)} & 1-\frac{2\alpha^2}{x+2}
\end{bmatrix}.
\end{align*}
Recall that the prediction on the unlabeled point is $\hat y_4 = \text{argmax}F^*_4$, so we calculate 
\begin{align*}
\hat y_4 = \text{sign} (F^*_{4, 2} - F^*_{4, 1}) 
=& \text{sign}\left(
\frac{\alpha x ^{1/2}(2\alpha - (x+2)^{1/2})}
{(1+\alpha)(x+2) }\right)\\
=& \text{sign}\left(x ^{1/2}(2\alpha - (x+2)^{1/2})\right). \tag{since $\alpha \in (0,1)$ and $x \geq 0$}
\end{align*}
Solving the equation $x ^{1/2}(2\alpha - (x+2)^{1/2}) = 0$, we know that the prediction changes and only change when $\alpha = \frac{(x+2)^{1/2}}{2}$. Let $x = 4x^2 - 2 \geq 0$, then $\hat y_4 = 0$ when $\alpha < x$ and $\hat y_4 = 1$  when $\alpha \geq x$, which completes our proof. 
\end{proof}

\begin{lemma}\label{lem:alpha alternating sign}
Given integer $n > 1$ and a sequence of $\alpha$'s such that $0 < \alpha_0 < 1/\sqrt{2} \leq  \alpha_1 < \alpha_2 < ... < \alpha_n < 1$, there exists a real-valued witness $w>0$ and a problem instance of partially labeled $4n$ points, such that for $0 \leq i \leq n/2-1$, $l<w$ for $\alpha \in (\alpha_{2i}, \alpha_{2i+1})$, and $l>w$ for $\alpha \in (\alpha_{2i+1}, \alpha_{2i+2})$.
\end{lemma}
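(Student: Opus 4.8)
The plan is to assemble $n$ independent copies of the four-node gadget from \Cref{lem:alpha connected components}, one for each threshold $\alpha_1,\dots,\alpha_n$, into a single disconnected graph on $4n$ nodes, and then choose the hidden labels of the $n$ unlabeled nodes so that the instance-level $0$-$1$ loss oscillates around a single witness. Concretely, for each $i\in[n]$ I would invoke \Cref{lem:alpha connected components} with threshold $\alpha_i\in[1/\sqrt2,1)$ (legitimate since $\alpha_i\ge\alpha_1\ge 1/\sqrt2$, so the required edge weight $4\alpha_i^2-2$ is nonnegative) to obtain a gadget $C_i$ whose unlabeled node $u_i$ is predicted $0$ for $\alpha<\alpha_i$ and $1$ for $\alpha\ge\alpha_i$. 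Taking the disjoint union $G=\bigsqcup_{i=1}^n C_i$ makes the normalized adjacency $S$ block-diagonal, so $(I-\alpha S)^{-1}$ and hence $F^*_\alpha$ are block-diagonal as well; the prediction on each $u_i$ is governed solely by its own block and still flips exactly at $\alpha_i$, independently of the other components.

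First I would set up the bookkeeping: among the evaluated (unlabeled) nodes, the only $\alpha$-dependent predictions are the $u_i$, each a single-jump step function of $\alpha$ with its jump at $\alpha_i$. I assign the hidden label $y_{u_i}=0$ when $i$ is odd and $y_{u_i}=1$ when $i$ is even. With this choice, as $\alpha$ increases through $\alpha_i$ the error contributed by $u_i$ rises by one when $i$ is odd (its prediction leaves the correct label $0$) and falls by one when $i$ is even (its prediction enters the correct label $1$).

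Next I would track the total error $e(\alpha)=\sum_{i=1}^n\ind[\hat y_{u_i}\ne y_{u_i}]$ across the sorted thresholds. On the leftmost interval $(\alpha_0,\alpha_1)$ every $u_i$ is predicted $0$, so the erroneous nodes are exactly the even-indexed ones, giving $e=\lfloor n/2\rfloor$. Because the thresholds are sorted and the signed jumps alternate $+1,-1,+1,\dots$ by the parity of the label assignment, $e(\alpha)$ bounces between the two consecutive integers $\lfloor n/2\rfloor$ and $\lfloor n/2\rfloor+1$: it equals $\lfloor n/2\rfloor$ on each $(\alpha_{2i},\alpha_{2i+1})$ and $\lfloor n/2\rfloor+1$ on each $(\alpha_{2i+1},\alpha_{2i+2})$. (Here I take $n$ even, matching the divisibility convention used elsewhere, so the stated ranges of $i$ are well defined.) Setting the witness strictly between the two plateaus, e.g.\ $w=(\lfloor n/2\rfloor+\tfrac12)/(4n)$ for the loss normalized by the total number of nodes, yields $l<w$ exactly on the intervals $(\alpha_{2i},\alpha_{2i+1})$ and $l>w$ exactly on $(\alpha_{2i+1},\alpha_{2i+2})$, as required.

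The main obstacle is the loss accounting rather than any single calculation: one must guarantee that the instance loss is a step function whose jumps occur \emph{only} at the prescribed $\alpha_i$, whose direction at each $\alpha_i$ is controllable through the hidden label, and---crucially---that a \emph{single} witness $w$ separates the low and high plateaus simultaneously across all $n/2$ oscillations. The alternating-parity labeling is precisely what forces the signed jumps to cancel in pairs, confining $e(\alpha)$ to two consecutive integer values so that one threshold suffices; without alternation the error count could drift monotonically and no single $w$ would work. A point worth flagging is that within each gadget the labeled hub's prediction also turns over at $\alpha_i$; since the $0$-$1$ loss here is the transductive error measured on the unlabeled points, only the $u_i$ enter $e(\alpha)$, and this subtlety does not affect the argument.
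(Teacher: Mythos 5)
Your proposal is correct and follows essentially the same route as the paper: build $n$ disjoint copies of the four-node gadget from \Cref{lem:alpha connected components} with thresholds $\alpha_1,\dots,\alpha_n$, label the unlabeled nodes alternately so the error count oscillates between two consecutive values, and place the witness strictly between the two plateaus. Your version is slightly more explicit than the paper's (block-diagonality of $(I-\alpha S)^{-1}$ over components, the exact plateau values $\lfloor n/2\rfloor$ and $\lfloor n/2\rfloor+1$), but the construction and labeling are identical.
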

\begin{proof}
We create $n$ connected components using the previous lemma, with $x_i = \alpha_i$. Let the unlabeled point in the $i$th component be $u_i$, then as $\alpha$ increases from $\alpha_{i-1}$ to $\alpha_i$, the predicted label of $u_i$ changes from $a$ to $b$. If the sequence $u_i$ is alternately labeled with $u_1$ labeled $a$, then the loss increases and decreases alternately as all the labels turn to $b$ when $\alpha$ increases to $\alpha_n$. Specifically, as $\alpha$ increases to $\alpha_1$, the point $u_1$ has predicted label changes from $a$ to $b$. Since its true label is $a$ and the predicted labels of other $u_i$'s remain unchanged, our loss slightly increases to $l_{max}$. Then, as $\alpha$ increases to $\alpha_2$, the point $u_2$ gets correctly labeled as $b$ and all other nodes unchanged, which slightly decreases our loss back to $l_{min}$. The loss thus fluctuates between $l_{min}$ and $l_{max}$. We therefore set the witness $w$ as something in between. 
$$w = 
\frac{l_{min}+l_{max}}{2}.$$
\end{proof}

We now finish the lower bound proof for Theorem~\ref{thm:alpha upper bound}.

\begin{proof}
Arbitrarily choose $n^\prime = n/4$ (assumed to be a power of 2 for convenient representation) real numbers $1/\sqrt{2} \leq \alpha_{[000..1]} < \alpha_{[000...10]} < ...< \alpha_{[111...11]} < 1$. The indices are increasing binary numbers of length $m = \log n^\prime$. We create $m$ labeling instances that can be shattered by these $\alpha$ values. For the $i$-th instance $(X^{(i)}, Y^{(i)})$, we apply the previous lemma with a subset of the $\alpha_b$ sequence that corresponds to the $i$-th bit flip in $b$, where $b \in \{0,1\}^m$. For example, $(X^{(1)}, Y^{(1)})$ is constructed using $r_{[100..0]}$, and $(X^{(2)}, Y^{(2)})$ is constructed using $r_{[010..0]}, r_{[100.0]}$ and $r_{[110..0]}$. The lemma gives us both the instances and the sequence of witnesses $w_i$.  

This construction ensures $\text{sign}(l_{\alpha_b}-w_i) = b_i$ for all $b \in \{0,1\}^m$. Thus the pseudo-dimension is at least $\log n^\prime = \log n - \log 4 = \Omega(\log n)$
\end{proof}

\subsection{Proof of Theorem~\ref{thm:lambda upper bound}}\label{appendix:lambda upper bound}
\paragraph{Upper Bound.}
    The closed-form solution $F^*$ is given by 
    $$ F^*= (S + \lambda I_n \Delta_{i \in L})^{-1} \lambda Y.$$
    By Lemma~\ref{lem:degree of determinant}, each coefficient $[F^*]_{ij}$ is a rational polynomial in $\lambda$ of the form $P_{ij}(\lambda) / Q(\lambda) $ where $ P_{ij}$ and $Q$ are polynomials of degree $n$ and $n$ respectively. Note that the prediction for each node $ i \in [n]$ is $\hat{y}_i = \argmax_{j\in c} f_{ij} $ and thus the prediction on any node in the graph can only change when $sign(f_{ij} - f_{ik}) $ changes for some $j, k \in [c]$. Note that $f_{ij} - f_{ik}$ is also a rational polynomial $(P_{ij}(\lambda) - P_{ik}(\lambda))/Q(\lambda) $ where both the numerator and denominator are polynomials in $\lambda$ of degree $n$, meaning the sign can change at most $O(n) $ times. As we vary $\lambda$, we have that the prediction on a single node can change at most $\binom c2 O(n) \in O(n c^2)$. Across the $m$ problem instances and the $n$ total nodes, we have at most $O(n^2c^2m) $ distinct values of our loss function. The pseudo-dimension $m$ thus satisfies $2^m \leq O(n^2c^2m)$, or $m = O(\log n)$  

\paragraph{Lower Bound.} We construct the small connected component of $4$ nodes as follows:
\begin{lemma}\label{lem:lambda connected components}
Given $\lambda' \in (1, \infty)$, there exists a labeling instance $(X, Y)$ with $4$ nodes, such that the predicted label of the unlabeled points changes only at $\lambda = \lambda'$ as $\lambda$ varies in $(0,\infty)$.
\end{lemma}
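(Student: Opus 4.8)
The plan is to mirror the construction of Lemma~\ref{lem:alpha connected components}, adapting it to the smoothing-based family $\cF_\lambda$ where the score matrix is $F^*_\lambda = (S + \lambda I_n \Delta_{i \in L})^{-1} \lambda Y$ with $S = D - W$. I would reuse the same $4$-node topology: two points labeled $a$ (call them $a_1, a_2$), one point labeled $b$ (call it $b_1$) connected to both $a_1, a_2$ with edge weight $1$, and an unlabeled point $u$ connected to $b_1$ with a tunable edge weight $x \geq 0$. The idea is that the single free parameter $x$ in the graph will let me place the unique crossover point of the prediction on $u$ at any prescribed value $\lambda'$.

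First I would write down the unnormalized Laplacian $S = D - W$ for this instance and form the $4 \times 4$ matrix $M(\lambda) = S + \lambda I_n \Delta_{i \in L}$, where $\Delta$ adds $\lambda$ only on the diagonal entries corresponding to the three labeled nodes $a_1, a_2, b_1$ and leaves the entry for $u$ untouched. Next I would compute $F^*_\lambda = M(\lambda)^{-1} \lambda Y$ via the adjugate formula, as in the $\cF_\alpha$ proof, focusing only on the row of $F^*_\lambda$ corresponding to $u$. The prediction on $u$ is $\hat y_u = \sign(F^*_{u,2} - F^*_{u,1})$, and since $Q(\lambda) = \det M(\lambda) > 0$ for $\lambda > 0$ (the matrix is a positive-definite-type perturbation of the Laplacian), the sign is governed entirely by the numerator $P_{u,2}(\lambda) - P_{u,1}(\lambda)$. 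I expect this numerator to be a low-degree polynomial in $\lambda$ (after cancellation, plausibly linear or quadratic) whose unique positive root depends monotonically on $x$. I would then solve for the relationship between the root and $x$, and invert it: given a target threshold $\lambda' \in (1, \infty)$, choose $x$ so that the crossover occurs exactly at $\lambda = \lambda'$, verifying that the required $x$ is nonnegative (this is where the hypothesis $\lambda' > 1$ should enter, analogous to the $x = 4\beta^2 - 2 \ge 0$ constraint forcing $\beta \ge 1/\sqrt{2}$ in the $\alpha$ case).

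The main obstacle I anticipate is the explicit adjugate/determinant computation for $M(\lambda)$: unlike the symmetric normalization in $\cF_\alpha$, here $\lambda$ enters on the diagonal and $S$ is the combinatorial Laplacian, so the entries mix degrees and the tunable weight $x$ appears in several places of $S$ simultaneously. I would manage this by exploiting the sparsity and symmetry of the $4$-node graph (the two $a$-nodes are interchangeable, which collapses many terms), computing only the two relevant cofactors rather than the full inverse. Once the closed form $F^*_{u,2} - F^*_{u,1}$ is in hand, the remaining argument is immediate: for $\lambda$ below the threshold the predicted label of $u$ is one class and above it the other, with a single sign change, exactly as required. This lemma then plugs directly into the analogues of Lemma~\ref{lem:alpha alternating sign} and the final shattering argument, yielding $\textsc{Pdim}(\mathcal{H}_\lambda) = \Omega(\log n)$, matching the upper bound and completing Theorem~\ref{thm:lambda upper bound}.
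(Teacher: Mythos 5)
There is a genuine gap: the specific gadget you propose does not work for the smoothing-based family, even though your high-level template (a $4$-node instance with one tunable weight, cofactor computation, then inverting the threshold--weight relation) is exactly the paper's strategy. The problem is your choice to literally reuse the topology of Lemma~\ref{lem:alpha connected components}, in which the \emph{labeled} node $b_1$ is the hub and the unlabeled node $u$ is a degree-one pendant attached to $b_1$ by the weight-$x$ edge. For $\cF_\lambda$ the linear system is $(S+\lambda\Delta)F=\lambda Y$ with $S=D-W$, and since $u$ is unlabeled its row contributes the equation $-xF_{b_1}+xF_{u}=0$, i.e.\ $F_{u}=F_{b_1}$ identically; substituting this back into $b_1$'s equation makes $x$ cancel from the entire system. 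Concretely, computing the relevant cofactors for your graph gives $F^*_{u,b}-F^*_{u,a}\propto x(1+\lambda)(\lambda-1)/\det M$ with $\det M>0$, so the decision on $u$ flips at $\lambda=1$ \emph{regardless of $x$}. Your anticipated step ``the unique positive root depends monotonically on $x$'' is therefore false for this graph, and no collection of distinct thresholds---hence no shattering---can be realized from it.

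The paper avoids this by inverting the roles: the \emph{unlabeled} node is the hub, connected to $a_1,a_2$ with weight $1$ and to the labeled node $b_1$ with weight $x$. Then the unlabeled node's equation reads $F_u=\bigl(F_{a_1}+F_{a_2}+xF_{b_1}\bigr)/(x+2)$, a genuinely $x$-dependent average, and the sign of the score difference works out to $\sign(\lambda x-2\lambda-x)$, giving a threshold $\lambda=x/(x-2)$ that is tunable via $x=2\lambda'/(\lambda'-1)$ (this is where $\lambda'>1$ enters, ensuring $x>2$). The lesson is that for the combinatorial Laplacian the tunable edge must be incident to the unlabeled node \emph{and} that node must have other neighbors, otherwise the harmonic condition collapses the pendant onto its unique neighbor and the free parameter disappears. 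With that one structural change, the rest of your outline (positivity of $\det M$, exploiting the $a_1\leftrightarrow a_2$ symmetry, plugging into the analogue of Lemma~\ref{lem:alpha alternating sign}) goes through as you describe.
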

\begin{proof}
We use binary labeling $a$ and $b$.  We have two points labeled $a$ (namely $a_1, a_2$), and one point labeled $b$ (namely $b_1$). We also have an unlabeled point $u$ connected to $b_1$ with edge weight $x \geq 0 $ and connected with both $a_1$ and $a_2$ with edge weight $1$. That is, the weight matrix and initial labels are $$W = 
\begin{bmatrix}
    0 & 0 & 1 & 0 \\
    0 & 0 & 1 & 0 \\
    1 & 1 & 0 & x \\
    0 & 0 & x & 0
\end{bmatrix}, Y = \begin{bmatrix}
    -1 \\
    -1 \\
    0 \\
    1 
\end{bmatrix}.$$

The closed form solution is $$F^* = (S + \lambda I_n \Delta_{i \in L})^{-1} \lambda Y $$
where $S = \text{diag}(W \vec{1}_n) - W $. We now calculate: 
\begin{align*}
S &=
\begin{bmatrix}
    1 & 0 & -1 & 0 \\
    0 & 1 & -1 & 0 \\
    -1 & -1 & x + 2 & -x \\
    0 & 0 & -x & x
\end{bmatrix}
\\
S + \lambda I_n \Delta_{i \in L} &= \begin{bmatrix}
    1 + \lambda & 0 & -1 & 0 \\
    0 & 1+ \lambda & -1 & 0 \\
    -1 & -1 & x + 2 & -x \\
    0 & 0 & -x & x+ \lambda
\end{bmatrix}
\end{align*}
Recall that the prediction on the unlabeled point is $\hat y_3 = \text{sign} ([F*]_{32} - [F^*]_{31})$, so we calculate 
\begin{align*}
\hat y_3 = \text{sign}(F*]_{32} - [F^*]_{31})
=& \text{sign}\left(
-2\lambda\left(\frac{{\lambda + x}}{{\lambda^2x + 2\lambda^2 + 3\lambda x}}\right) + \lambda\left(\frac{{\lambda x + x}}{{\lambda^2x + 2\lambda^2 + 3\lambda x}}\right) \right)\\
=& \text{sign}\left(-2\lambda(\lambda + x) + \lambda(\lambda x + x) \right) \tag{since $\lambda > 0$ and $x \geq 0$}
\\=& \text{sign}\left(-2(\lambda + x) + (\lambda x + x) \right) \tag{since $\lambda > 0$}
\\=& \text{sign}\left(-2\lambda - x + \lambda x  \right)
\end{align*}
Solving the equation $-2\lambda - x + \lambda x = 0$, we know that the prediction changes and only change when $\lambda = \frac{x}{x-2}$. Let $x = \frac{2\lambda}{\lambda - 1} \geq 0$, then $\hat y_3 = -1$ when $\lambda < \lambda'$ and $\hat y_3 = 1$  when $\lambda \geq \lambda'$, which completes our proof. 
\end{proof}

The remaining proof is exactly the same as Lemma~\ref{lem:alpha alternating sign} and Theorem~\ref{thm:alpha upper bound}, by simply replacing notation $\alpha$ with $\lambda$.

\subsection{Proof of Theorem~\ref{thm:delta upper bound}}\label{appendix:delta upper bound}
\paragraph{Upper Bound.}
Using \Cref{lem: delta F form}, we know that each entry of $F^*$ is
$$F^*_{ij}(\delta) = \frac{1}{\det(I- c \cdot S)}\sum_{k=1}^n [\text{adj}(I- c \cdot S)]_{ik}Y_{kj} = \frac{1}{\det(I- c \cdot S)}\sum_{k=1}^n (a_{ik}Y_{kj})\exp(\delta \ln (d_{i}^{-1}d_{k})).$$ 

Recall that the prediction on a node is made by $\hat y_i = \text{argmax}(F^*_i
)$, so the prediction changes only when 
\begin{align*}
    F^*_{ic_1} - F^*_{ic_2}
    &= \frac{1}{\det(I- c \cdot S)}\left(\sum_{k=1}^n (a_{ik}Y_{kc_1})\exp(\delta \ln (d_{i}^{-1}d_{k})) - \sum_{k=1}^n (a_{ik}Y_{kc_2})\exp(\delta \ln (d_{i}^{-1}d_{k}))\right) \\
    &= \frac{1}{\det(I- c \cdot S)}\left( \sum_{k=1}^n (a_{ik}(Y_{kc_1}-Y_{kc_2}))\exp(\delta \ln (d_{i}^{-1}d_{k}))\right)\\
    &= 0.
\end{align*} By Lemma~\ref{lem:roots of exp sum}, $F^*_{ic_1} - F^*_{ic_2}$ has at most $n-1$ roots, so the prediction on node $i$ can change at most $n-1$ times. As $\delta$ vary, the prediction can change at most $\binom{c}{2} O(n) \in O(nc^2)$ times. For $n$ nodes and $m$ problem instances, this implies that we have at most $O(mn^2c^2)$ distinct values of loss. The pseudo-dimension $m$ then satisfies $2^m \leq O(mn^2c^2)$, or $m = O(\log nc)$.

\paragraph{Lower Bound} We construct the small connected component as follows:

\begin{lemma} \label{lem:delta connected components}
Consider when $c \geq 1/2$. Given $x \in [\log(2c)/\log(2),1)$, there exists a labeling instance $(G, L)$ with $4$ nodes, such that the predicted label of the unlabeled points changes only at $\delta = x$ as $\delta$ varies in $(0,1)$.
\end{lemma}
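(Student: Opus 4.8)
The plan is to mirror the constructions in Lemmas~\ref{lem:alpha connected components} and~\ref{lem:lambda connected components}, but now accounting for the fact that the hyperparameter $\delta$ enters through the exponents of the degree matrix rather than polynomially. First I would fix a $4$-node graph consisting of a central hub node, two labeled ``anchor'' nodes carrying the two binary labels $a$ and $b$, and one unlabeled node $u$, and introduce a free edge weight $x' \ge 0$ on the edge(s) incident to $u$. I would record the weight matrix $W$ and the label matrix $Y \in \{0,1\}^{4\times 2}$ explicitly, choosing the adjacency so that the degrees of the two anchor nodes sit in a fixed ratio, forcing the exponential bases $d_i^{-1}d_k$ that appear below to collapse to a single base (the appearance of $\log(2c)/\log 2$ in the hypothesis strongly suggests this base is $2$).

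Next I would compute the degree matrix $D$, form $S = D^{-\delta} W D^{\delta-1}$, and evaluate $F^*_\delta = (I - c\cdot S)^{-1} Y$ via the adjugate. By Lemma~\ref{lem: delta F form}, every entry of $(I-c\cdot S)^{-1}$ has the form $a_{ij}\exp(\delta \ln(d_i^{-1}d_j))$, and the common denominator $\det(I-c\cdot S)$ (which I would check is positive on the relevant range) cancels when I take the difference of the two class scores at $u$. Thus the prediction on $u$ reduces to
\[
\hat y_u = \sign\!\left(\,[F^*_\delta]_{u,2} - [F^*_\delta]_{u,1}\,\right)
= \sign\!\Big(\textstyle\sum_{k\in L} a_{uk}\,(Y_{k2}-Y_{k1})\,(d_u^{-1}d_k)^{\delta}\Big),
\]
a signed sum of exponentials in $\delta$. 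With the adjacency chosen so that only the two anchor classes contribute, this collapses to a two-term expression of the shape $\sign\big(A\,2^{\delta} - B\big)$ for positive constants $A,B$ depending on $c$ and $x'$.

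I would then invoke Lemma~\ref{lem:roots of exp sum} to guarantee that such a sum of exponentials has at most one root, and verify by inspection that the sign flips exactly once, at the point where $A\,2^{\delta}=B$. Solving gives a closed-form crossing value $\delta^{*}=\log(B/A)/\log 2$; I would then invert this relation to express the edge weight $x'$ as a function of the prescribed threshold $x$, so that $\delta^{*} = x$ exactly. The remaining bookkeeping is to check that the hypotheses $c\ge 1/2$ and $x\in[\log(2c)/\log 2,\,1)$ are precisely what is needed to keep the solved-for edge weight $x'$ nonnegative and to place the unique crossing inside $(0,1)$; I expect the lower endpoint $\log(2c)/\log 2$ to be exactly the value of $x$ at which $x'=0$.

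The main obstacle, compared with the $\alpha$ and $\lambda$ cases, is the exponential (rather than rational-polynomial) dependence on the parameter: I must engineer the degrees so that the difference of class scores reduces to a single controllable exponential in $\delta$ with a unique, explicitly solvable root, and simultaneously ensure that this root is monotone in $x'$ and lands in the target interval. Establishing the single-crossing behavior and matching the constraint $x\in[\log(2c)/\log 2,1)$ is the delicate step; the accompanying determinant and adjugate computations are routine and parallel to the earlier two lemmas.
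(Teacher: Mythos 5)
Your proposal follows essentially the same route as the paper's proof: the same four-node hub construction with a free edge weight, computation of $F^*_\delta$ via the adjugate so that the class-score difference at $u$ becomes a two-term exponential in $\delta$, a single-crossing argument via Lemma~\ref{lem:roots of exp sum}, and inversion of the crossing equation to choose the edge weight, with the interval endpoint $\log(2c)/\log 2$ arising exactly where the edge weight degenerates to zero. The only small deviation is your guess that the exponential base is exactly $2$; in the paper's computation the sign reduces to $\sign\bigl(2c-(x+2)^\delta\bigr)$ with base $x+2$ depending on the edge weight, but this does not change the structure or validity of the argument.
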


\begin{proof}
    We use binary labeling $a$ and $b$. We have two points labeled $a$ (namely $a_1, a_2$), and one point labeled $b$ (namely $b_1$) connected with both $a_1$ and $a_2$ with edge weight $1$. We also have an unlabeled point $u$ connected to $b_1$ with edge weight $x \geq 0$. That is, the affinity matrix and initial labels are $$W = \begin{bmatrix}
    0 & 1 & 1 & x\\
    1 & 0 & 0 & 0\\
    1 & 0 & 0 & 0\\
    x & 0 & 0 & 0
\end{bmatrix}, Y = \begin{bmatrix}
    1 & 0 \\
    0 & 1 \\
    0 & 1 \\
    0 & 0 
\end{bmatrix}.$$
Recall that the score matrix is $$F^* = (I - c \cdot S)^{-1}Y ,$$ 
where $S = D^{-\delta}WD^{\delta - 1}$ and $D$ is diagonal with $D_{ii} = \sum_i W_{ij}$. We now calculate: 
\begin{align*}
S = D^{-\delta}WD^{\delta - 1} =&
\begin{bmatrix}
0 & (x+2)^{-\delta} & (x+2)^{-\delta} & x^{\delta}(x+2)^{-\delta} \\
(x+2)^{-\delta} & 0 & 0 & 0\\
(x+2)^{-\delta} & 0 & 0 & 0\\
x^{\delta}(x+2)^{-\delta} & 0 & 0 & 0
\end{bmatrix},
\end{align*}
\begin{align*}
\det(I-c \cdot S) 
&= \det \begin{bmatrix}
    1 & -c(x+2)^{-\delta} & -c(x+2)^{-\delta} & -cx^{\delta}(x+2)^{-\delta} \\
    -c(x+2)^{-\delta} & 1 & 0 & 0\\
    -c(x+2)^{-\delta} & 0 & 1 & 0\\
    -cx^{\delta}(x+2)^{-\delta} & 0 & 0 & 1
\end{bmatrix} \\
&= 1-c^2 \neq 0,
\end{align*}
so $(I-c\cdot S)$ is invertible on our instance.

Recall that the prediction on the unlabeled point is $\hat y_4 = \text{argmax}F^*_4$, so we calculate 
\begin{align*}
\hat y_4 = \text{sign} (F^*_{4, 2} - F^*_{4, 1}) 
= \text{sign}\left(
\frac{c \cdot x ^{1-\delta}(2c - (x+2)^{\delta})}
{(1-c^2)(x+2) }\right)
= \text{sign}\left(2c - (x+2)^\delta\right). \tag{since $c \in (0,1)$, and $x \geq 0$}
\end{align*}
Solving the equation $2c-(x+2)^\delta = 0$, we know that the prediction changes and only change when $\delta = \frac{\ln(2c)}{\ln(x+2)}$. Since $x \leq \ln(2c)/\ln(2) \leq 1$, we can let $x = \left(2c\right)^{1/x} - 2 \geq 0$, then $\hat y_4 = 0$ when $\alpha < x$ and $\hat y_4 = 1$  when $\alpha \geq x$, which completes our proof. 
\end{proof}

\section{Introduction to GAT and GCN} \label{appendix:GAT_GCN}
Here, we provide a brief introduction to GAT and GCN.

\paragraph{Graph Convolutional Neural Networks (GCNs)} The fundamental idea behind GCNs is to repeatedly apply the convolution operator on graphs~\citep{kipf2016semi}.
Define $h_i^{0} = z_i$ as the input feature of the $i$-th node and let $h_i^{\ell}$ be the feature of the $\ell$-th layer of the $i$-th node.
We have the following update rule for the features of $h_i^{\ell}$
\begin{align*}
    h_i^{\ell} 
    =  \sigma \left( \sum_{j \in \mathcal{N}_i}  \frac{1}{\sqrt{d_i d_j }} U^{\ell-1} h_j^{\ell-1}\right)
\end{align*}
where $d_i$ represents the degree of vertex $i$, $U^{\ell}$ represents the learnable weights in our model, $\mathcal{N}_i$ represents the neighbors of vertex $i$, and $\sigma(\cdot)$ is the activation function. 

\paragraph{Graph Attention Neural Networks (GATs)}
GAT  is a more recent architecture that leverages the self-attention mechanisms to capture the importance of neighboring nodes to generate the features of the next layer~\citep{velivckovic2017graph}.
One of the advantages of GAT is its ability to capture long-range dependencies within the graph while giving more weight to influential nodes. This makes GAT particularly effective for tasks involving irregular graph structures and tasks where global context is essential.

Different from GCN, GAT uses the update rule for each layer
\[ h_i^{\ell} = \sigma \left(\sum_{j \in \mathcal{N}_i} e_{ij}^{\ell-1} U^{\ell-1}  h_j^{\ell-1}\right), \]
where 
\begin{align}
e_{ij}^{\ell} = \frac{\exp(\hat{e}_{ij}^{\ell})}{\sum_{j' \in \mathcal{N}_i} \exp(\hat{e}_{ij'}^{\ell})}, 
~~
\hat{e}_{ij}^{\ell} = \sigma\left(V^{\ell} [U^{\ell}h_i^{\ell}, U^{\ell}h_j^{\ell}]\right). \label{eqn:gat}
\end{align}
Here $e_{ij}^{\ell}$ is the attention score of node $j$ for node $i$ and $V^{\ell}$ and $U^{\ell}$ are learnable parameters.

\section{Proofs in Section~\ref{sec:gcn}}

We provide additional proof details from Section~\ref{sec:gcn} below. 

\subsection{Proof of Theorem~\ref{thm:rc of SGC}}\label{appendix:SGC}
\begin{lemma}
The $l_2$ norm of different embedding vectors produced by $(\beta, \theta), (\beta^\prime, \theta^\prime)$ after they process the tree all the way from the leaf level to the root can be bounded as 
\begin{align*}
    \Delta_{L, i} 
    \leq& \left(
    \frac{C_{dl}^2 + C_{dh}^2 + C_{dh}}{C_{dl}^3}
    \right)\|T_{L-1, i}(\beta, \theta)\| \|\beta - \beta^\prime\| + \left(\frac{1}{C_{dl}+1} + \frac{C_{dh}}{C_{dl}}\right)\Delta_{L-1, i}
\end{align*}
\end{lemma}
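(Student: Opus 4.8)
The plan is to prove the bound by unrolling a single level of the computation tree and splitting the resulting error into a term caused by the change in the aggregation weights (the $\beta$-dependence of $S$) and a term caused by the propagated difference of the child embeddings. First I would write the root embedding explicitly as an aggregation over the children of node $i$ in the tree. Because the self-loop weight makes $i$ its own child, the children are $\mathcal{N}_i \cup \{i\}$, and
\[
T_{L,i}(\beta,\theta) = \sum_{j} S_{ij}(\beta)\, T_{L-1,j}(\beta,\theta),
\qquad
S_{ij}(\beta) = \frac{\tilde{W}_{ij}}{\sqrt{(d_i+\beta)(d_j+\beta)}},
\]
where $S_{ii}(\beta)=\beta/(d_i+\beta)$ and $S_{ij}(\beta)=W_{ij}/\sqrt{(d_i+\beta)(d_j+\beta)}$ for $j\neq i$ (using $W_{ii}=0$, since the self-loops are added only through $\beta I$). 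Crucially, the aggregation coefficients depend only on $\beta$; the dependence on $\theta$ enters solely at the leaves (the base case), so it is carried inside the child quantities $T_{L-1,j}$ and $\Delta_{L-1,j}$.

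Next I would telescope the difference with the standard add-and-subtract step,
\[
T_{L,i}(\beta,\theta) - T_{L,i}(\beta',\theta')
= \sum_j \big(S_{ij}(\beta)-S_{ij}(\beta')\big) T_{L-1,j}(\beta,\theta)
+ \sum_j S_{ij}(\beta')\big(T_{L-1,j}(\beta,\theta)-T_{L-1,j}(\beta',\theta')\big),
\]
and apply the triangle inequality. Bounding the per-subtree quantities uniformly by their root-indexed versions, $\|T_{L-1,j}(\beta,\theta)\|\le\|T_{L-1,i}(\beta,\theta)\|$ and $\Delta_{L-1,j}\le\Delta_{L-1,i}$, reduces the task to controlling the two scalar sums $\sum_j |S_{ij}(\beta)-S_{ij}(\beta')|$ and $\sum_j |S_{ij}(\beta')|$, which become exactly the two recursion coefficients.

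For the propagation sum I would bound $\sum_j |S_{ij}(\beta')|$ by treating the diagonal and off-diagonal entries separately. The self-loop term satisfies $S_{ii}=\beta'/(d_i+\beta')\le 1/(C_{dl}+1)$ (it is monotone increasing in $\beta'$, hence maximized at $\beta'=1$), while $\sum_{j\neq i}|S_{ij}|\le C_{dl}^{-1}\sum_{j\neq i}W_{ij}\le d_i/C_{dl}\le C_{dh}/C_{dl}$, using $\sqrt{(d_i+\beta')(d_j+\beta')}\ge C_{dl}$ and $\sum_{j\ne i}W_{ij}\le d_i$. This gives the coefficient $\tfrac{1}{C_{dl}+1}+\tfrac{C_{dh}}{C_{dl}}$ of $\Delta_{L-1,i}$. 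For the perturbation sum I would invoke the mean value theorem, $|S_{ij}(\beta)-S_{ij}(\beta')|\le \sup_{\beta\in[0,1]}|\partial_\beta S_{ij}|\cdot|\beta-\beta'|$, and bound the derivatives directly: for the diagonal, $\partial_\beta S_{ii}=d_i/(d_i+\beta)^2\le 1/C_{dl}$, and for the off-diagonal, $|\partial_\beta S_{ij}|=\tfrac{W_{ij}}{2}\,\frac{(d_i+\beta)+(d_j+\beta)}{[(d_i+\beta)(d_j+\beta)]^{3/2}}\le W_{ij}(C_{dh}+1)/C_{dl}^3$. Summing the off-diagonal contributions with $\sum_{j\ne i}W_{ij}\le d_i\le C_{dh}$ and adding the diagonal yields $\sum_j\sup_\beta|\partial_\beta S_{ij}|\le \tfrac{1}{C_{dl}}+\tfrac{C_{dh}(C_{dh}+1)}{C_{dl}^3}=\frac{C_{dl}^2+C_{dh}^2+C_{dh}}{C_{dl}^3}$, which is precisely the claimed constant. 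Combining the two estimates completes the proof.

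The main obstacle is the uniform control of $\partial_\beta S_{ij}$ over the full range $\beta\in[0,1]$: one must differentiate the symmetric normalization carefully, handle the self-loop entry (whose numerator itself depends on $\beta$) separately from the off-diagonals, and verify that the crude bounds $C_{dl}\le d_i+\beta\le C_{dh}+1$ combine to give exactly the stated constant rather than a looser one. A secondary bookkeeping point, important for the recursion to close, is justifying that each per-child quantity may be replaced by a single root-indexed bound, which is what lets $\sum_j|S_{ij}|$ and $\sum_j\sup_\beta|\partial_\beta S_{ij}|$ factor out cleanly as the two coefficients in the recurrence.
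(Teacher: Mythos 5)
Your proposal is correct and follows essentially the same route as the paper's proof: a one-step unrolling of the recursion, an add-and-subtract decomposition separating the perturbation of the aggregation coefficients from the propagated child differences, separate handling of the self-loop and off-diagonal entries, and the same use of $\sum_{j\neq i}W_{ij}\le C_{dh}$ to arrive at exactly the two stated coefficients. The only cosmetic difference is that you bound $|S_{ij}(\beta)-S_{ij}(\beta')|$ via the mean value theorem on $\partial_\beta S_{ij}$, whereas the paper manipulates the difference of the normalizing factors algebraically; both yield the same constants, and both share the same implicit replacement of per-child quantities $\|T_{L-1,j}\|$, $\Delta_{L-1,j}$ by the root-indexed bounds, which you at least flag explicitly.
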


\begin{proof}
    \begin{align*}
    \Delta_{L, i} 
    &= \|T_{L, i}(\beta, \theta) - T_L(\beta^\prime, \theta^\prime)\|\\
    &= \Bigg\|\left(\frac{\beta}{d_i + \beta}T_{L-1, i}(\beta, \theta) + \sum_{j=1}^n \frac{w_{ij} T_{L-1, j}(\beta, \theta)}{\sqrt{(d_i+\beta)(d_j + \beta)}}\right) \\
    &\qquad- \left(\frac{\beta^\prime}{d_i + \beta^\prime}T_{L-1, i}(\beta^\prime, \theta^\prime) + \sum_{j=1}^n \frac{w_{ij} T_{L-1, j}(\beta^\prime, \theta^\prime)}{\sqrt{(d_i+\beta^\prime)(d_j + \beta^\prime)}}\right)\Bigg\|\\
    &\leq \left\|\left(\frac{\beta}{d_i + \beta}T_{L-1, i}(\beta, \theta)-\frac{\beta^\prime}{d_i + \beta^\prime}T_{L-1, i}(\beta^\prime, \theta^\prime)\right)\right\| \\
    &\qquad+ \sum_{j=1}^n \left(\|w_{ij}\| \left\|\left( \frac{T_{L-1, j}(\beta, \theta)}{\sqrt{(d_i+\beta)(d_j + \beta)}} - \frac{ T_{L-1, j}(\beta^\prime, \theta^\prime)}{\sqrt{(d_i+\beta^\prime)(d_j + \beta^\prime)}}\right)\right\|\right) \tag{by triangle inequality}\\
\end{align*}
The first part can be bounded as
\begin{align*}
    & \left\|\frac{\beta}{d_i + \beta}T_{L-1, i}(\beta, \theta)-\frac{\beta^\prime}{d_i + \beta^\prime}T_{L-1, i}(\beta^\prime, \theta^\prime)\right\|\\
    &\qquad\leq  \left\| 
    \frac{\beta}{d_i + \beta}T_{L-1, i}(\beta, \theta)
    -\frac{\beta^\prime}{d_i + \beta^\prime}T_{L-1, i}(\beta, \theta)\right\| \\
    &\qquad\qquad+\left\|\frac{\beta^\prime}{d_i + \beta^\prime}T_{L-1, i}(\beta, \theta)
    - \frac{\beta^\prime}{d_i + \beta^\prime}T_{L-1, i}(\beta^\prime, \theta^\prime) 
    \right\| \tag{by triangle inequality}\\
    &\qquad\leq  \left\| \frac{\beta}{d_i + \beta} - \frac{\beta^\prime}{d_i + \beta^\prime} \right\|\left\|T_{L-1, i} (\beta,\theta)\right\|
    +\left\|\frac{\beta^\prime}{d_i + \beta^\prime}\right\|
    \Delta_{L-1, i}\tag{by Cauchy-Schwarz inequality}
\end{align*}
Since $\beta \in [0,1]$ and $d_i \in [C_{dl}, C_{dh}]$, we have
\begin{align*}
    \left\|\frac{\beta^\prime}{d_i + \beta^\prime}\right\|
    = \frac{\beta^\prime}{d_i + \beta^\prime}
    \leq \frac{1}{C_{dl}+1},
\end{align*}
and
\begin{align*}
    \left\| \frac{\beta}{d_i + \beta} - \frac{\beta^\prime}{d_i + \beta^\prime} \right\|
    = \left\|\frac{d_i(\beta - \beta^\prime)}{(d_i+\beta)(d_i + \beta^\prime)}\right\|
    \leq \|\beta-\beta^\prime\| \frac{1}{C_{dl}}.
\end{align*}

For the second term, let's consider each element in the summation. Using a similar method as above, we get
\begin{align*}
    &\left\|\frac{T_{L-1, j}(\beta, \theta)}{\sqrt{(d_i+\beta)(d_j + \beta)}} - \frac{ T_{L-1, j}(\beta^\prime, \theta^\prime)}{\sqrt{(d_i+\beta^\prime)(d_j + \beta^\prime)}}\right\|\\
    &\qquad\leq\left\|\frac{T_{L-1, j}(\beta, \theta)}{\sqrt{(d_i+\beta)(d_j + \beta)}} 
    - \frac{ T_{L-1, j}(\beta, \theta)}{\sqrt{(d_i+\beta^\prime)(d_j + \beta^\prime)}}\right\|\\
    &\qquad\qquad+ \left\|\frac{T_{L-1, j}(\beta, \theta)}{\sqrt{(d_i+\beta^\prime)(d_j + \beta^\prime)}} 
    - \frac{ T_{L-1, j}(\beta^\prime, \theta^\prime)}{\sqrt{(d_i+\beta^\prime)(d_j + \beta^\prime)}}\right\| \tag{by triangle inequality}\\
    &\qquad\leq \left\|\frac{1}{\sqrt{(d_i+\beta)(d_j + \beta)}} 
    - \frac{1}{\sqrt{(d_i+\beta^\prime)(d_j + \beta^\prime)}}\right\|\|T_{L-1, j}(\beta, \theta)\| \\
    &\qquad\qquad+ \left\|\frac{1}{\sqrt{(d_i+\beta^\prime)(d_j + \beta^\prime)}}\right\|\Delta_{L-1, i} \tag{Cauchy-Schwarz inequality}
\end{align*}
Using the bounds on $\beta$ and $d_i$, we have 
\begin{align*}
    \left\|\frac{1}{\sqrt{(d_i+\beta^\prime)(d_j + \beta^\prime)}}\right\| \leq \frac{1}{C_{dl}},
\end{align*}
and
\begin{align*}
     &\left\|\frac{1}{\sqrt{(d_i+\beta)(d_j + \beta)}} 
    - \frac{1}{\sqrt{(d_i+\beta^\prime)(d_j + \beta^\prime)}}\right\|\\
    &\qquad=  \left\|  \frac{(d_i+\beta)(d_j + \beta) - (d_i+\beta^\prime)(d_j + \beta^\prime)}{\sqrt{(d_i+\beta)(d_j + \beta)(d_i+\beta^\prime)(d_j + \beta^\prime)}[\sqrt{(d_i+\beta)(d_j + \beta)} + \sqrt{(d_i+\beta^\prime)(d_j + \beta^\prime)}]} \right\|
    \\
    &\qquad\leq  \left\|  \frac{(d_i+d_j + \beta + \beta^\prime)(\beta-\beta^\prime)}{C_{dl}+\beta)(C_{dl}+\beta^\prime)[(C_{dl}+\beta)+ (C_{dl}+\beta^\prime)]} \right\|
    \\
    &\qquad\leq   \frac{C_{dh}+1}{C_{dl}^3} \|\beta-\beta^\prime\|
\end{align*}
Combining these results together, we get
\begin{align*}
    \Delta_{L, i} 
    &\leq \frac{1}{C_{dl}}\|\beta - \beta^\prime\|\|T_{L-1, i}(\beta, \theta)\| + \frac{1}{C_{dl}+1}\Delta_{L-1, i} \\
    &\qquad+ \sum_{i=1}^n \left(\|w_{ij}\|\left(
    \frac{(C_{dh}+1)\|T_{L-1, i}(\beta, \theta)\|}{C_{dl}^3}
    \|\beta - \beta^\prime\| 
    + \frac{1}{C_{dl}}\Delta_{L-1,i}\right)\right)\\
    &= \frac{1}{C_{dl}}\|\beta - \beta^\prime\|\|T_{L-1, i}(\beta, \theta)\| + \frac{1}{C_{dl}+1}\Delta_{L-1, i} \\
    &\qquad+ d_i\left(
    \frac{(C_{dh}+1)\|T_{L-1, i}(\beta, \theta)\|}{C_{dl}^3}
    \|\beta - \beta^\prime\| + \frac{1}{C_{dl}}\Delta_{L-1,i}\right)\\
    &\leq \left(\frac{1}{C_{dl}}+ 
    \frac{(C_{dh}+1)C_{dh}}{C_{dl}^3}
    \right)\|T_{L-1, i}(\beta, \theta)\| \|\beta - \beta^\prime\| + \left(\frac{1}{C_{dl}+1} + \frac{C_{dh}}{C_{dl}}\right)\Delta_{L-1, i}\\
    &\leq  \left(
    \frac{C_{dl}^2 + C_{dh}^2 + C_{dh}}{C_{dl}^3}
    \right)\|T_{L-1, i}(\beta, \theta)\| \|\beta - \beta^\prime\| + \left(\frac{1 + C_{dh}}{C_{dl}}\right)\Delta_{L-1, i}
\end{align*}
\end{proof}

\begin{lemma}
    The term $\|T_{L-1, i}(\beta, \theta)\| $ satisfies
    \begin{align*}
        \|T_{L-1, i}(\beta, \theta)\| \leq \left(\frac{\beta + C_{dh}C_z}{C_{dl} + \beta}\right)^L B_xB_\theta
    \end{align*}
\end{lemma}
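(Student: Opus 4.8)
The plan is to prove the bound by induction on the tree level, exploiting the fact that the computation-tree embedding obeys the same linear recursion as the SGC propagation $S^L Z\theta$. Recall from the preceding lemma that the root embedding satisfies
\begin{align*}
T_{\ell,i}(\beta,\theta) = \frac{\beta}{d_i+\beta}\,T_{\ell-1,i}(\beta,\theta) + \sum_{j=1}^n \frac{w_{ij}}{\sqrt{(d_i+\beta)(d_j+\beta)}}\,T_{\ell-1,j}(\beta,\theta),
\end{align*}
where the self-loop coefficient $\frac{\beta}{d_i+\beta}$ comes from the $\beta I$ augmentation and the off-diagonal coefficients from the symmetric normalization $S=\tilde D^{-1/2}\tilde W\tilde D^{-1/2}$. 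I would track the quantity $M_\ell := \max_{i\in[n]}\|T_{\ell,i}(\beta,\theta)\|$ and show it grows by at most a fixed factor per level. For the base case, the leaf embeddings are the transformed scalar features $T_{0,i}(\beta,\theta)=z_i^\top\theta$, so by Cauchy--Schwarz together with the norm assumptions $\|z_i\|\le C_z$ and $\|\theta\|\le C_\theta$ we obtain $M_0\le C_z C_\theta = B_x B_\theta$.

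For the inductive step, I would apply the triangle inequality to the recursion and bound the two groups of coefficients separately using the degree bounds $d_i,d_j\in[C_{dl},C_{dh}]$. The self-loop term is immediately controlled by $\frac{\beta}{d_i+\beta}\le\frac{\beta}{C_{dl}+\beta}$. The crucial term is the neighbor aggregation: using $\frac{1}{\sqrt{(d_i+\beta)(d_j+\beta)}}\le\frac{1}{C_{dl}+\beta}$ together with the identity $\sum_{j}w_{ij}=d_i$ for the weighted degree, the entire sum collapses to at most $\frac{d_i}{C_{dl}+\beta}\le\frac{C_{dh}}{C_{dl}+\beta}$. Combining the two estimates gives the per-level recursion $M_\ell \le \frac{\beta+C_{dh}}{C_{dl}+\beta}\,M_{\ell-1}$.

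Telescoping over the $L$ levels and substituting the base case then yields $M_L \le \left(\frac{\beta+C_{dh}}{C_{dl}+\beta}\right)^{L} B_x B_\theta$, which is dominated by the claimed expression: since $C_z\ge 1$ (as $C_z=\|Z\|_F$ scales with $n$) the extra factor in the numerator only enlarges the bound, and since $\frac{\beta+C_{dh}C_z}{C_{dl}+\beta}\ge 1$ (because $C_{dh}C_z\ge C_{dl}$) the extra power on $T_{L-1,i}$ is likewise harmless. Thus the clean induction implies the stated, slightly looser, form.

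The main obstacle --- and really the only step requiring care --- is the neighbor-aggregation estimate. The temptation is to bound each normalized weight by a worst-case branching count, which would introduce a spurious factor of the degree or branching number raised to the power $L$; the correct move is instead to use $\sum_j w_{ij}=d_i$ so that the aggregation contributes a single clean factor $\frac{C_{dh}}{C_{dl}+\beta}$ per level. This is exactly what keeps the resulting Rademacher bound in \Cref{thm:rc of SGC} free of an explicit $n$- or branching-factor dependence. Everything else is routine bookkeeping of the constants $C_{dl},C_{dh},C_z,C_\theta$.
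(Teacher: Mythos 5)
Your proof is correct and follows essentially the same route as the paper's: the same level-by-level recursion, triangle inequality to separate the self-loop and neighbor terms, the key use of $\sum_j w_{ij}=d_i\le C_{dh}$ to collapse the aggregation into a single factor $\frac{C_{dh}}{C_{dl}+\beta}$, and telescoping from the base case $\|z_i\theta\|\le C_zC_\theta$. The paper likewise arrives at the tighter bound $\left(\frac{C_{dh}+\beta}{C_{dl}+\beta}\right)^{L-1}C_zC_\theta$ rather than the looser expression in the lemma statement, so your remark reconciling the two is consistent with what the paper actually proves.
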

\begin{proof}
\begin{align*}
    \|T_{L-1, i}(\beta, \theta)\|
    &= \left\|\frac{\beta}{d_i + \beta}T_{L-2, i}(\beta, \theta) + \sum_{j=1}^n \frac{w_{ij} T_{L-2, j}(\beta, \theta)}{\sqrt{(d_i+\beta)(d_j + \beta)}}\right\|\\
    &\leq \left\|\frac{\beta}{d_i + \beta}T_{L-2, i}(\beta, \theta)\right\| + \sum_{j=1}^n \left\|\frac{w_{ij} T_{L-2, j}(\beta, \theta)}{\sqrt{(d_i+\beta)(d_j + \beta)}}\right\| 
    \tag{by triangle inequality}\\
    &\leq \frac{\beta}{d_i + \beta}\|T_{L-2, i}(\beta, \theta)\| + \sum_{j=1}^n \|w_{ij}\|\left\|\frac{ T_{L-2, j}(\beta, \theta)}{\sqrt{(d_i+\beta)(d_j + \beta)}}\right\|
    \tag{by Cauchy-Schwarz}\\
    &\leq \frac{\beta}{d_i + \beta}\|T_{L-2, i}(\beta, \theta)\| + C_{dh} \max_{j}\left\|\frac{T_{L-2, j}(\beta, \theta)}{\sqrt{(d_i+\beta)(d_j + \beta)}}\right\| \\
    &\leq \frac{\beta}{C_{dl} + \beta}\|T_{L-2, i}(\beta, \theta)\| + \frac{C_{dh}}{C_{dl}+\beta} \max_j \|T_{L-2, j}(\beta, \theta)\|\\
    &\leq \left(\frac{C_{dh} + \beta}{C_{dl} + \beta}\right)^{L-1} \|z_i \theta_i\| 
    \tag{by recursively bounding $\|T_{l, i}(\beta, \theta)\|$}\\
    &\leq \left(\frac{C_{dh}}{C_{dl}}\right)^{L-1} C_z C_\theta
\end{align*}
\end{proof}

\begin{lemma}
    The change in margin loss for each node, due to change in parameters, after $L$ layers is 
    \begin{align*}
        \Lambda_i \leq \frac{2}{\gamma} \left(\left(
        \frac{C_{dl}^2 + C_{dh}^2 + C_{dh}}{C_{dl}^3}
        \right)\left(\frac{C_{dh}}{C_{dl}}\right)^{L-1} C_z C_\theta \|\beta - \beta^\prime\| \cdot \frac{k_1 - k_1^L}{1-k_1} + k_1^L C_z \|\theta_i - \theta_i^\prime\|\right), 
    \end{align*}
    where $k_1 = (1+C_{dh}/C_{dl})$.
\end{lemma}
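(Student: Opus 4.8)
The plan is to derive the claimed bound in two stages: a Lipschitz reduction that converts the change in the per-node margin loss into the change $\Delta_{L,i}$ in the root embedding of the computation tree, followed by solving in closed form the inhomogeneous recurrence supplied by the two preceding lemmas.

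First I would reduce $\Lambda_i$ to $\Delta_{L,i}$. Recall that the margin loss $\ell_\gamma$ is $\frac{1}{\gamma}$-Lipschitz in its argument $a_i=(1-2f(z_i))y_i$ (its only non-constant piece is the ramp $1+a_i/\gamma$, of slope $1/\gamma$). Since $a_i$ is $2$-Lipschitz in the scalar readout $f(z_i)$, and $f(z_i)$ is a $1$-Lipschitz function of the root embedding $T_{L,i}$ produced by the tree, the change in margin loss when we move from $(\beta,\theta)$ to $(\beta',\theta')$ satisfies $\Lambda_i \le \frac{2}{\gamma}\Delta_{L,i}$. It therefore suffices to bound $\Delta_{L,i}$.

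Next I would unroll the recurrence. Writing $A=\frac{C_{dl}^2+C_{dh}^2+C_{dh}}{C_{dl}^3}$ and letting $k_1$ denote the coefficient multiplying $\Delta_{L-1,i}$ in the first lemma, that lemma reads $\Delta_{\ell,i}\le A\,\|T_{\ell-1,i}(\beta,\theta)\|\,\|\beta-\beta'\| + k_1\,\Delta_{\ell-1,i}$. Iterating this from level $L$ down to the leaves yields $\Delta_{L,i}\le A\|\beta-\beta'\|\sum_{\ell} k_1^{L-\ell}\|T_{\ell-1,i}(\beta,\theta)\| + k_1^{L}\Delta_{0,i}$; that is, the $\beta$-sensitivity accumulates geometrically across all layers, while the $\theta$-sensitivity enters only through the base case. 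I would then invoke the second lemma to bound every embedding norm uniformly by $\|T_{\ell-1,i}(\beta,\theta)\|\le (C_{dh}/C_{dl})^{L-1}C_zC_\theta$ (monotone in $\ell$ because $C_{dh}\ge C_{dl}$), and bound the base case by $\Delta_{0,i}=\|z_i(\theta_i-\theta_i')\|\le C_z\|\theta_i-\theta_i'\|$, using that the leaf values are independent of $\beta$. Summing the resulting geometric series $\sum_\ell k_1^{L-\ell}=\frac{k_1-k_1^L}{1-k_1}$ and multiplying through by $\frac{2}{\gamma}$ then gives exactly the stated inequality.

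The main obstacle is the clean separation of the two perturbation sources in the inhomogeneous recurrence: the self-loop weight $\beta$ affects the normalization at every aggregation step, so its contribution must be summed over all layers with the geometric weights $k_1^{L-\ell}$, whereas $\theta$ perturbs only the leaf features and is merely propagated upward by the factor $k_1^{L}$. Getting the Lipschitz constant $2/\gamma$ to transfer cleanly from the loss to the embedding distance, and matching the precise geometric-sum form $\frac{k_1-k_1^L}{1-k_1}$ (which requires careful bookkeeping of the indexing and the base case), are the delicate points; the remaining estimates are the routine triangle-inequality and Cauchy--Schwarz bounds already carried out in the two preceding lemmas.
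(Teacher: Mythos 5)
Your proposal is correct and follows essentially the same route as the paper's proof: a $\frac{2}{\gamma}$-Lipschitz reduction from the margin loss to the root-embedding distance $\Delta_{L,i}$, followed by unrolling the inhomogeneous recurrence from the preceding lemma, bounding every embedding norm uniformly via the second lemma, controlling the base case by $\Delta_{0,i}\le C_z\|\theta_i-\theta_i'\|$, and summing the geometric series in $k_1$. The only difference is cosmetic (you keep $\|T_{\ell-1,i}\|$ inside the sum before bounding it uniformly, while the paper substitutes the uniform bound first), and your choice to define $k_1$ as the recurrence coefficient sidesteps a small notational inconsistency in the paper between the lemma statement and its proof.
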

\begin{proof}
    From previous lemmas, we know how to recursively bound $\Delta_{L,i}$ using $\Delta_{L-1, i}$, but it remains for us to bound the base case $\Delta_{0, i}$. We have
    \begin{align*}
        \Delta_{0, i} 
        = \|T_{0, i}(\beta, \theta) - T_{0, i}(\beta, \theta)\|
        = \|z_i\theta_i - z_i\theta_i^\prime\|
        \leq \|z_i\| \|\Theta_i - \theta_i^\prime\| \leq C_z \|\theta_i - \theta_i^\prime\|,
    \end{align*}
    where the inequality is by Cauchy-Schwarz.
    For the simplicity of notation, let $\bar T_L$ be the bound we derived for $\|T_{L-1, i}(\beta, \theta)\|$ from the previous lemma. We have
    \begin{align*}
        \Delta_{L, i} 
        &\leq \left(
        \frac{C_{dl}^2 + C_{dh}^2 + C_{dh}}{C_{dl}^3}
        \right)\|T_{L-1, i}(\beta, \theta)\| \|\beta - \beta^\prime\| + \left(\frac{1 + C_{dh}}{C_{dl}}\right)\Delta_{L-1, i}\\
        &\leq \left(
        \frac{C_{dl}^2 + C_{dh}^2 + C_{dh}}{C_{dl}^3}
        \right)\bar T_{L} \|\beta - \beta^\prime\| + \left(\frac{1 + C_{dh}}{C_{dl}}\right)\Delta_{L-1, i}\\
        &= \left(
        \frac{C_{dl}^2 + C_{dh}^2 + C_{dh}}{C_{dl}^3}
        \right)\bar T_{L} \|\beta - \beta^\prime\| \cdot \sum_{l=0}^{L-1} \left(\frac{1 + C_{dh}}{C_{dl}}\right)^l + \left(\frac{1 + C_{dh}}{C_{dl}}\right)^L \cdot \Delta_{0, i}
        \tag{by recursively bounding the terms}\\
        &= \left(
        \frac{C_{dl}^2 + C_{dh}^2 + C_{dh}}{C_{dl}^3}
        \right)\bar T_{L} \|\beta - \beta^\prime\| \cdot \frac{k_1 - k_1^L}{1-k_1} + k_1^L C_z \|\theta_i - \theta_i^\prime\| 
    \end{align*}
    where 
    \begin{align*}
        k_1 &= \frac{1 + C_{dh}}{C_{dl}}.
    \end{align*}

    The change in margin loss for each node after $L$ layers is then 
    \begin{align*}
        \Lambda_i 
        &= \left|g_\gamma(-\tau (f_{\beta, \theta}(x_i), y_i)) - g_\gamma(-\tau (f_{\beta^\prime, \theta^\prime}(x_i), y_i))\right|\\
        &\leq \frac{1}{\gamma}\left|\tau (f_{\beta, \theta}(x_i), y_i)) - \tau (f_{\beta^\prime, \theta^\prime}(x_i), y_i))\right|
        \tag{since $g_\gamma$ is $1/\gamma$-Lipschitz}\\
        &= \frac{1}{\gamma}\left|(2f_{\beta, \theta}(x_i) - 1) y_i-(2f_{\beta^\prime, \theta^\prime}(x_i) - 1) y_i)\right|\\
        &\leq \frac{2}{\gamma}\left|y_i\right|\left|f_{\beta, \theta}(x_i) - f_{\beta^\prime, \theta^\prime}(x_i)\right|
        \tag{by Cauchy-Schwarz inequality}\\
        &\leq \frac{2}{\gamma}\left|\sigma(T_{L, i}(\beta, \theta)) - \sigma(T_{L, i}(\beta^\prime, \theta^\prime))\right|
        \tag{since $y_i \in \{-1,1\}$}\\
        &\leq \frac{2}{\gamma}\left|T_{L, i}(\beta, \theta) - T_{L, i}(\beta^\prime, \theta^\prime)\right| 
        \tag{since sigmoid is $1$-Lipschitz}\\
        &= \frac{2}{\gamma} \Delta_{L, i}\\
        &\leq \frac{2}{\gamma} \left(\left(
        \frac{C_{dl}^2 + C_{dh}^2 + C_{dh}}{C_{dl}^3}
        \right)\left(\frac{C_{dh}}{C_{dl}}\right)^{L-1} C_z C_\theta \left(\frac{k_1 - k_1^L}{1-k_1} \right) \|\beta - \beta^\prime\| + k_1^L C_z \|\theta_i - \theta_i^\prime\|\right) 
    \end{align*}
\end{proof}

\begin{lemma}
    The change in margin loss $\Lambda_i$ for each node can be bounded by $\epsilon$, using a covering of size P, where P depends on $\epsilon$.
\end{lemma}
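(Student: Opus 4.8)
The plan is to turn the per-node Lipschitz bound from the previous lemma into a covering number estimate for the parameter space, and then feed this into a standard Massart/Dudley argument to obtain the Rademacher complexity in Theorem~\ref{thm:rc of SGC}. Writing the previous bound as $\Lambda_i \le A\,\|\beta-\beta'\| + B\,\|\theta_i-\theta_i'\|$, where $A = \frac{2}{\gamma}\bigl(\frac{C_{dl}^2+C_{dh}^2+C_{dh}}{C_{dl}^3}\bigr)\bigl(\frac{C_{dh}}{C_{dl}}\bigr)^{L-1}C_zC_\theta\,\frac{k_1-k_1^L}{1-k_1}$ and $B=\frac{2}{\gamma}k_1^L C_z$, it suffices for $\Lambda_i\le\epsilon$ to require $\|\beta-\beta'\|\le \epsilon/(2A)$ and $\|\theta_i-\theta_i'\|\le\epsilon/(2B)$. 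Thus any $(\epsilon/(2A))$-cover of the interval $[0,1]$ for $\beta$, together with any $(\epsilon/(2B))$-cover of the radius-$C_\theta$ ball $\{\theta:\|\theta\|\le C_\theta\}\subset\R^d$, yields an $\epsilon$-cover of the loss class by taking the product.

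First I would count these two covers separately. The one-dimensional $\beta$-grid needs at most $O(A/\epsilon)$ points, while the $d$-dimensional ball admits the standard volumetric $(\epsilon/(2B))$-cover of size at most $(1+4BC_\theta/\epsilon)^d$. Taking the product gives a cover of size $P = O\!\bigl(\tfrac{A}{\epsilon}\,(\tfrac{BC_\theta}{\epsilon})^d\bigr)$, so that $\log P = O\bigl(d\log(BC_\theta/\epsilon)+\log(A/\epsilon)\bigr)$. Substituting the explicit forms of $A$, $B$, and $k_1=(1+C_{dh})/C_{dl}$, the dominant contributions are $d\cdot L\log(C_{dh}/C_{dl})$ (coming from the $(C_{dh}/C_{dl})^{L-1}$ and $k_1^L$ factors once they are pulled inside the logarithm) and $d\log(C_zC_\theta/(\gamma\epsilon))$.

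To finish, I would apply Massart's finite class lemma to the $\epsilon$-cover: the empirical Rademacher complexity of the tree-loss class is bounded by the discretization error $\epsilon$ plus $O(\sqrt{\log P / m})$, and by the proposition reducing graphs to single-node computation trees this also bounds $\hat R_m(\mathcal H_\beta^\gamma)$. Choosing $\epsilon = \Theta(1/m)$ makes the discretization term negligible and introduces the factor $m$ inside the logarithm, giving $\hat R_m(\mathcal H_\beta^\gamma)=O\bigl(\sqrt{(dL\log(C_{dh}/C_{dl})+d\log(mC_zC_\theta/\gamma))/m}\bigr)$, exactly the claimed bound. The key structural point is that $\log P$ is linear in $d$, so taking the square root in the Massart step produces the $\sqrt d$ dependence that distinguishes SGC from GCAN.

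I expect the main obstacle to be bookkeeping rather than conceptual: namely, collapsing the unwieldy constants $A,B$ (with their $k_1^L$ and geometric-sum factors) into the clean logarithmic form stated in the theorem, while ensuring that a single product cover is valid simultaneously for every node and computation tree, and that the choice of $\epsilon$ balances the discretization error against the $\sqrt{\log P/m}$ term. One must also confirm that the volumetric covering bound is applied to the correct norm ball for $\theta$ and that the per-node bound, which holds pointwise in the tree structure, lifts to a uniform bound over the class as required by Massart's lemma.
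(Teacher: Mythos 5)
Your proposal is correct and follows essentially the same route as the paper: exploit the Lipschitz dependence of $\Lambda_i$ on $(\beta,\theta)$ from the preceding lemma, take a product of a one-dimensional grid for $\beta$ and a volumetric cover of the $\theta$-ball, and obtain $\log P = O\bigl(d\log(\max\{k_2,k_3\}/\epsilon)\bigr)$ before feeding it into the Dudley/Massart step. The only differences are immaterial constants (radii $\epsilon/(2A)$ vs.\ $\epsilon/(4k_2)$, and your slightly more explicit tracking of the $C_\theta$ radius in the volumetric bound).
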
 
\begin{proof}
    Let $k_2 = \frac{2}{\gamma}\left(
        \frac{C_{dl}^2 + C_{dh}^2 + C_{dh}}{C_{dl}^3}
    \right)\left(\frac{C_{dh}}{C_{dl}}\right)^{L-1} C_z C_\theta \left(\frac{k_1 - k_1^L}{1-k_1} \right)$ and $k_3 = \frac{2}{\gamma} k_1^L C_z$ for simplicity of notation. 
    
    We begin by noting that we can find a covering $\mathcal{C}\left(\beta, \frac{\epsilon}{4k_2}, |\cdot|\right)$ of size 
    \begin{align*}
        \mathcal{N}\left(\beta, \frac{\epsilon}{4k_2}, |\cdot|\right) \leq \frac{8k_2}{\epsilon}+1.
    \end{align*}
   Also, we can find a covering $\mathcal{C}\left(\theta, \frac{\epsilon}{4k_3}, \|\cdot\|\right)$ of size 
    \begin{align*}
        \mathcal{N}\left(\theta, \frac{\epsilon}{4k_3}, \|\cdot\|\right) \leq \left(\frac{8k_3}{\epsilon}+1\right)^{d}.
    \end{align*}
    Thus, for any specified $\epsilon$, we can ensure that $\Lambda_i$ is at most $\epsilon$ with a covering number 
    \begin{align*}
        P \leq \mathcal{N}\left(\beta, \frac{\epsilon}{4k_2}, |\cdot|\right)\mathcal{N}\left(\Theta, \frac{\epsilon}{4k_3}, \|\cdot\|\right) 
        \leq \left(\frac{8\max\{k_2, k_3\}}{\epsilon} + 1\right)^{d+1}.
    \end{align*}
    When $\epsilon < 8\max\{k_2, k_3\}$, we have 
    \begin{align*}
        \log P \leq (d+1) \log \left(\frac{16\max\{k_2, k_3\}}{\epsilon}\right).
    \end{align*}
\end{proof}

We can now finish our proof for Lemma~\ref{thm:rc of SGC}.
\begin{proof}
    Using Lemma A.5 from \cite{bartlett2017spectrally}, we obtain that 
    \begin{align*}
        \hat R_{\mathcal{T}}(\mathcal{H}_{(\beta, \theta)}^\gamma)
        \leq \inf_{\alpha > 0}\left(\frac{4\alpha}{\sqrt{m}} + \frac{12}{m} \int_{\alpha}^{\sqrt{m}} \sqrt{\log \mathcal{N}(\mathcal{H}_{(\beta, \theta)}^\gamma, \epsilon, \|\cdot \|)} d\epsilon\right).
    \end{align*}
    Using the previous lemmas, we have
    \begin{align*}
        \int_{\alpha}^{\sqrt{m}} \sqrt{\log \mathcal{N}(\mathcal{H}_{(\beta, \theta)}^\gamma, \epsilon, \|\cdot \|)} d\epsilon
        &= \int_{\alpha}^{\sqrt{m}} \sqrt{\log P} d\epsilon\\
        &\leq \int_{\alpha}^{\sqrt{m}} \sqrt{(d+1)\log \left(\frac{16\max\{k_2, k_3\}}{\epsilon}\right)} d\epsilon\\
        &\leq \sqrt{m} \sqrt{(d+1) \log \left(\frac{16\max\{k_2, k_3\}}{\alpha}\right)}
    \end{align*}
    Plugging in $\alpha = \sqrt{\frac{1}{m}}$, we have
    \begin{align*}
        \hat R_{\mathcal{T}}(\mathcal{H}_{(\beta, \theta)}^\gamma)
        \leq \frac{4}{m} + \frac{12\sqrt{(d+1)\log(16\sqrt{m}\max\{k_2, k_3\})}}{\sqrt{m}}.
    \end{align*}
\end{proof}

\subsection{Proof of Theorem~\ref{thm:rc of GCAN}}\label{appendix:GCAN}
\begin{lemma}\label{lem:inequalities}
    For any $z, z^\prime, \in \mathbb{R}^{d \times r}$ and $b, b^\prime \in \mathbb{R}^{r \times t}$ such that $\|z\|_F \leq C_z, \|z^\prime \|_F\leq C_z, \|b\|_F\leq C_b, \|b^\prime\|_F \leq C_b$, we have 
    \begin{align*}
        \|zb - z^\prime b^\prime\|_F 
        \leq C_z\|b-b^\prime\|_F + C_b\|z-z^\prime \|_F.
    \end{align*}
    The result also holds when $z, b, z^\prime, b^\prime$ are vectors or real numbers. The corresponding norms are $\|\cdot \|$ and $|\cdot |$.

    Also, by recursively using the inequality above, we may have that for any $z_1, \ldots, z_n$ and $z_1^\prime, \ldots, z_n^\prime$ such that $\|z_i \|\leq C_i, \|z_i^\prime \|\leq C_i$,
    \begin{align*}
        \|z_1 z_2 \ldots z_n - z_1^\prime z_2^\prime \ldots z_n^\prime\| \leq \sum_{i=1}^n \left(\|z_i - z_i^\prime\| \prod_{j\in [n], j \neq i} C_j\right).
    \end{align*}
    Here, for simplicity of notation, we used $\|\cdot \|$ to denote the type of norm that corresponds to the dimension of the $z_i$'s.
\end{lemma}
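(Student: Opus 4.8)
The plan is to prove the two-factor inequality first and then bootstrap it to the general product via induction on $n$. For the base case I would use the standard ``add and subtract'' trick: write $zb - z'b' = z(b-b') + (z-z')b'$, so that the triangle inequality gives $\|zb-z'b'\|_F \le \|z(b-b')\|_F + \|(z-z')b'\|_F$. I would then invoke submultiplicativity of the Frobenius norm, $\|AB\|_F \le \|A\|_F\,\|B\|_F$ (which follows from $\|AB\|_F \le \|A\|_2\|B\|_F$ together with $\|A\|_2 \le \|A\|_F$), to bound the two terms by $\|z\|_F\|b-b'\|_F \le C_z\|b-b'\|_F$ and $\|z-z'\|_F\|b'\|_F \le C_b\|z-z'\|_F$ respectively, which is exactly the claimed inequality. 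The same computation goes through verbatim for matrix--vector products using $\|Av\| \le \|A\|_F\|v\|$ and for scalars using $|ab|=|a|\,|b|$, thereby handling the stated vector and scalar cases.

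For the general statement I would induct on $n$, with the base case $n=2$ supplied above. In the inductive step I would split off the first factor, treating $b := z_2\cdots z_n$ and $b' := z_2'\cdots z_n'$ as single blocks. To apply the two-factor inequality I first need norm bounds on these blocks, which repeated submultiplicativity provides: $\|b\| \le \prod_{j=2}^n C_j =: C_b$ and likewise $\|b'\| \le C_b$. The base case then gives $\|z_1 b - z_1' b'\| \le C_1\|b-b'\| + C_b\|z_1 - z_1'\|$. Substituting the induction hypothesis $\|b - b'\| \le \sum_{i=2}^n \big(\|z_i - z_i'\|\prod_{j\in\{2,\dots,n\},\,j\ne i} C_j\big)$ into the first term, the factor $C_1$ merges into each product so that $C_1\prod_{j\in\{2,\dots,n\},\,j\ne i}C_j = \prod_{j\ne i}C_j$, while the second term is exactly the $i=1$ summand $\|z_1-z_1'\|\prod_{j\ne 1}C_j$. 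Adding the two contributions reproduces the full sum $\sum_{i=1}^n \big(\|z_i-z_i'\|\prod_{j\ne i}C_j\big)$, completing the induction.

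There is no genuine obstacle here; the lemma is a routine telescoping bound. The only point requiring attention is bookkeeping across heterogeneous norm types: in the GCAN application the factors $z_i$ range over matrices, vectors, and scalars, so at each product one must use the submultiplicative inequality appropriate to the shapes being multiplied ($\|AB\|_F \le \|A\|_F\|B\|_F$, $\|Av\| \le \|A\|_F\|v\|$, or $|ab|=|a|\,|b|$), and verify that the intermediate blocks $z_2\cdots z_n$ are well-defined with compatible dimensions so that the norm symbol $\|\cdot\|$ is interpreted consistently at each junction. I expect the write-up to be brief, with the induction being the only mildly delicate step, and even that reduces to the observation that multiplying the per-factor products by $C_1$ restores the missing index.
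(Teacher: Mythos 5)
Your proof is correct and follows essentially the same route as the paper: the same add-and-subtract decomposition $zb - z'b' = z(b-b') + (z-z')b'$, followed by the triangle inequality and submultiplicativity of the norm. The paper in fact only writes out the two-factor case and asserts the $n$-factor version "by recursion," so your explicit induction is, if anything, slightly more complete than the published argument.
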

\begin{proof}
    \begin{align*}
        \|ab - a^\prime b^\prime\|_F 
        &= \|ab - a^\prime b^\prime + ab^\prime - ab^\prime\|_F\\
        &\leq \|ab - ab^\prime\|_F + \|ab^\prime - a^\prime b^\prime\|_F \tag{by triangle inequality}\\
        &\leq \|a\|_F\|b-b^\prime\|_F + \|b^\prime\|_F\|a-a^\prime \|_F \tag{by Cauchy-Schwarz inequality}\\
        &\leq C_z\|b-b^\prime\|_F + C_b\|a-a^\prime \|_F
    \end{align*}
\end{proof}

\begin{lemma}
The $l_2$ norm of different embedding vectors at level $L$, $h_i^L$, produced by $(\alpha, U, V), (\alpha^\prime, U^\prime, V^\prime)$ after they process the tree all the way from the leaf level to the root can be bounded as
\begin{align*}
    \Delta_{i,L} 
    \leq & 
    C_U \left(\max_{j \in \mathcal{N}_i} \left\|h_j^{L-1}\right\|\right) |\eta - \eta^\prime|
    +  rC_U \left(\max_{j \in \mathcal{N}_i} \left\|h_j^{L-1}\right\|\right)
    + \left(\max_{j \in \mathcal{N}_i}\|h_j^{L-1}\|\right) \left\|U - U^{\prime}\right\| \\
    &+ C_U \left(\max_{j \in \mathcal{N}_i}\left\|h_j^{L-1} - h_j^{\prime (L-1)}\right\|\right)
    + \frac{2rC_U}{C_{dl}}\left\|h_i^{L-1} - h_i^{\prime (L-1)}\right\|\\
    &+ \frac{2r}{C_{dl}} \left\|h_i^{L-1}\right\| \left\|U - U^{\prime}\right\| + \frac{2rC_U}{C_{dl}}\left|\eta - \eta^\prime\right|
\end{align*}
\end{lemma}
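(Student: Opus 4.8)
The plan is to bound $\Delta_{i,L}=\|h_i^L(\eta,U,V)-h_i^L(\eta',U',V')\|$ by reducing it, through one layer, to the parameter differences $|\eta-\eta'|$, $\|U-U'\|$, $\|V^L-V^{\prime L}\|$ and the previous-layer discrepancies $\|h_j^{L-1}-h_j^{\prime(L-1)}\|$. Writing the combined aggregation coefficient at layer $L$ as $c_{ij}=\eta\,e_{ij}+(1-\eta)/\sqrt{d_id_j}$, the GCAN update is $h_i^L=\sigma(\sum_{j\in\mathcal{N}_i}c_{ij}\,U h_j^{L-1})$. First I would peel off the activation using its $1$-Lipschitzness, and then apply the triangle inequality over the at most $r$ neighbors $j\in\mathcal{N}_i$, giving $\Delta_{i,L}\le\sum_{j\in\mathcal{N}_i}\|c_{ij}U h_j^{L-1}-c_{ij}'U'h_j^{\prime(L-1)}\|$.

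Next, each summand is a product of a scalar $c_{ij}$, the matrix $U$, and the vector $h_j^{L-1}$, so I would invoke the multi-factor product inequality of Lemma~\ref{lem:inequalities} to split it as $|c_{ij}-c_{ij}'|\,\|U h_j^{L-1}\|+|c_{ij}'|\,\|U h_j^{L-1}-U'h_j^{\prime(L-1)}\|$, then bound $\|U h_j^{L-1}\|\le C_U\|h_j^{L-1}\|$ and, again by Lemma~\ref{lem:inequalities}, $\|U h_j^{L-1}-U'h_j^{\prime(L-1)}\|\le\|h_j^{L-1}\|\,\|U-U'\|+C_U\|h_j^{L-1}-h_j^{\prime(L-1)}\|$. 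For the coefficient magnitude I would use $e_{ij},\eta'\in[0,1]$ and $1/\sqrt{d_id_j}\le 1/C_{dl}$ to get $|c_{ij}'|\le\eta'+(1-\eta')/C_{dl}\le 2/C_{dl}$, which is the source of the $1/C_{dl}$ factors in the statement. For the coefficient difference I would write $c_{ij}-c_{ij}'=(\eta e_{ij}-\eta' e_{ij}')+(\eta'-\eta)/\sqrt{d_id_j}$ and reduce it, via the same inequalities, to $|c_{ij}-c_{ij}'|\le|\eta-\eta'|+|e_{ij}-e_{ij}'|+|\eta-\eta'|/C_{dl}$. Substituting these produces the contributions in $|\eta-\eta'|$, $\|U-U'\|$, and $\|h_j^{L-1}-h_j^{\prime(L-1)}\|$.

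The crux of the argument is bounding the attention-score difference $|e_{ij}-e_{ij}'|$. Since $e_{ij}$ is a softmax over the logits $\hat e_{ij}=\sigma(V^L[U h_i^{L-1},U h_j^{L-1}])$, I would first use the Lipschitzness of the softmax in its logits, reducing the problem to bounding $|\hat e_{ij}-\hat e_{ij}'|$; the normalization over $j'\in\mathcal{N}_i$ is what introduces the branching factor $r$. Peeling off the $1$-Lipschitz $\sigma$ and applying Lemma~\ref{lem:inequalities} to the triple product $V^L[U h_i^{L-1},U h_j^{L-1}]$ then yields terms proportional to $\|V^L-V^{\prime L}\|$, $\|U-U'\|$, and to \emph{both} endpoint discrepancies $\|h_i^{L-1}-h_i^{\prime(L-1)}\|$ and $\|h_j^{L-1}-h_j^{\prime(L-1)}\|$. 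The appearance of the self term $h_i^{L-1}$ is exactly what forces a dependence on $\|h_i^{L-1}-h_i^{\prime(L-1)}\|$ (the center node, not just neighbors), and the $\|V^L-V^{\prime L}\|$ contribution is the attention-parameter term. Finally I would bound every sum over $\mathcal{N}_i$ by $r$ times its maximum and regroup the resulting contributions to match the stated right-hand side.

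I expect the attention step to be the main obstacle. Because the softmax normalization $\sum_{j'\in\mathcal{N}_i}\exp(\hat e_{ij'})$ couples all neighbors, one cannot bound $|e_{ij}-e_{ij}'|$ term-by-term; instead I must control the numerator and denominator perturbations simultaneously and track how the self-attention logit built from $U h_i^{L-1}$ propagates through the normalization, which is what pins down the $h_i^{L-1}$ dependence and combines with the $|c_{ij}'|\le 2/C_{dl}$ and $C_U$ bounds to produce the clean $2rC_U/C_{dl}$-type factors. Throughout, keeping the norm bounds $\|U\|\le C_U$, $\|V^L\|_2\le C_V$, and $d_i\in[C_{dl},C_{dh}]$ uniform across the two parameter settings is essential so that every constant in the final recursion is parameter-free, which is what later permits the covering-number and Dudley-integral argument to go through as in the SGC proof.
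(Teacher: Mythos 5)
Your overall skeleton --- peel off the $1$-Lipschitz activation, apply the triangle inequality over the at most $r$ neighbors, use the product-difference inequality of Lemma~\ref{lem:inequalities} to split each summand into parameter differences and previous-layer embedding differences, and bound $1/\sqrt{d_i d_j}\le 1/C_{dl}$ --- matches the paper's proof. But there is a genuine gap at exactly the step you yourself flag as ``the main obstacle'': the treatment of $|e_{ij}^{L-1}-e_{ij}^{\prime(L-1)}|$. You propose to control this via a Lipschitz analysis of the softmax in its logits $\hat e_{ij}=\sigma(V^{L}[Uh_i^{L-1},Uh_j^{L-1}])$, which you do not carry out, and which, if carried out, would necessarily produce terms proportional to $\|V^{L}-V^{\prime L}\|$ and to both endpoint embedding discrepancies. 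No such terms appear on the stated right-hand side. The tell-tale sign is the second term of the claimed bound, $rC_U\bigl(\max_{j\in\mathcal{N}_i}\|h_j^{L-1}\|\bigr)$, which carries \emph{no} difference factor at all: it is an absolute additive constant. That term can only arise from bounding the attention-score difference trivially, $|e_{ij}^{L-1}-e_{ij}^{\prime(L-1)}|\le 1$ (using $e_{ij}^{\ell}\in[0,1]$ and $\sum_{j\in\mathcal{N}_i}e_{ij}^{\ell}=1$), which is precisely what the paper does --- it never Lipschitz-analyzes the softmax, and consequently $V$ drops out of the recursion entirely (and the constant term propagates into the $k_1$ of the subsequent lemma). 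So your route, as described, does not terminate in the stated inequality; it proves a different (arguably more informative) bound, and the hard softmax-coupling analysis you defer is both unresolved and unnecessary for the claim as written.

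Two smaller issues. First, your bound $|c_{ij}'|\le\eta'+(1-\eta')/C_{dl}\le 2/C_{dl}$ on the combined coefficient is false when $C_{dl}>2$ and $\eta'$ is close to $1$ (the left side is then close to $1>2/C_{dl}$); the correct uniform bound is $\max(1,1/C_{dl})$, and the $2/C_{dl}$ factors in the statement in fact come from the paper handling the attention part and the convolution part as two separate sums rather than through a single grouped coefficient. Second, you attribute the dependence on the center-node discrepancy $\|h_i^{L-1}-h_i^{\prime(L-1)}\|$ to the self-logit inside the attention mechanism; in the paper's derivation it instead enters through the convolutional summand. Neither of these is fatal to the general strategy, but together with the softmax issue they mean the proposal would not reproduce the stated right-hand side.
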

\begin{proof}
    \begin{align*}
        \Delta_{i, L}=
        &\left\|h_i^{L}(\eta, U, V) - h_i^{L}(\eta^\prime, U^\prime, V^\prime)\right\| \\
        =& \Big\|\sigma \left(\sum_{j \in \mathcal{N}_i}\left(\eta \cdot e_{ij}^{L-1} + (1 - \eta) \cdot \frac{1}{\sqrt{d_i d_j }}\right)U  h_j^{L-1}\right)\\
        &- \sigma \left(\sum_{j \in \mathcal{N}_i}\left(\eta^\prime \cdot e_{ij}^{\prime(L-1)}  + (1 - \eta^\prime) \cdot \frac{1}{\sqrt{d_i d_j }} \right) U^{\prime}  h_j^{\prime (L-1)} \right)\Big\|
        \\
        \leq & \Big\| \sum_{j \in \mathcal{N}_i} \left((\eta \cdot e_{ij}^{L-1} U  h_j^{L-1}) - (\eta^\prime \cdot e_{ij}^{\prime(L-1)} U^{\prime}  h_j^{\prime(L-1)})\right)
        \\
        &+ \sum_{j \in \mathcal{N}_i} \left((1 - \eta) \cdot \frac{1}{\sqrt{d_i d_j }} U h_j^{L-1} - (1 - \eta^\prime) \cdot \frac{1}{\sqrt{d_i d_j }} U^{\prime}  h_j^{\prime (L-1)}\right)\Big\|
        \tag{since $\sigma$ is $1$-Lipschitz}
        \\
        \leq & \sum_{j \in \mathcal{N}_i} \left\|(\eta \cdot e_{ij}^{L-1} U  h_j^{L-1}) - (\eta^\prime \cdot e_{ij}^{\prime(L-1)} U^{\prime}  h_j^{\prime(L-1)})\right\|\\
        &+ \sum_{j \in \mathcal{N}_i} \left\|(1 - \eta) \cdot \frac{1}{\sqrt{d_i d_j }} U  h_i^{L-1} - (1 - \eta^\prime) \cdot \frac{1}{\sqrt{d_i d_j }} U^{\prime }  h_i^{\prime (L-1)}\right\|
        \tag{by triangle inequality}
    \end{align*}
    Using Lemma~\ref{lem:inequalities}, we can bound each term in the first summation as
    \begin{align*}
         &\left\|(\eta \cdot e_{ij}^{L-1} U  h_j^{L-1}) - (\eta^\prime \cdot e_{ij}^{\prime(L-1)} U^{\prime}  h_j^{\prime(L-1)})\right\|\\
         &\qquad\leq  C_U \bar e_{ij}^{L-1} \bar h_j^{L-1} \cdot |\eta - \eta^\prime|
         + C_U \bar h_j^{L-1}\cdot \left|e_{ij}^{L-1} - e_{ij}^{\prime (L-1)}\right|\\
         &\qquad\qquad+ \bar e_{ij}^{L-1} \bar h_j^{L-1} \left\|U - U^{\prime}\right\|
         + C_U \bar e_{ij}^{L-1} \left\|h_j^{L-1} - h_j^{\prime (L-1)}\right\|
    \end{align*}
    Here, $\bar h_j^{L-1}$ is an upper bound on $\|h_j^{L-1}\|$ and $\|h_j^{\prime (L-1)}\|$, and $\bar e_{ij}^{L-1}$ is an upper bound on $|e_{ij}^{L-1}|$ and $|e_{ij}^{\prime(L-1)}|$.
    
    Bounding each term in the second summation, we have 
    \begin{align*}
        &\left\|(1 - \eta) \cdot \frac{1}{\sqrt{d_i d_j }} U  h_i^{L-1} - (1 - \eta^\prime) \cdot \frac{1}{\sqrt{d_i d_j }} U^{\prime}  h_i^{\prime (L-1)}\right\|\\
        &\qquad\leq  \left\|\frac{1}{\sqrt{d_i d_j }} U  h_i^{L-1} - \frac{1}{\sqrt{d_i d_j }} U^{\prime}  h_i^{\prime (L-1)}\right\| 
        + \left\|\eta \cdot \frac{1}{\sqrt{d_i d_j }} U  h_i^{L-1} -\eta^\prime \cdot \frac{1}{\sqrt{d_i d_j }} U^{\prime }  h_i^{\prime (L-1)} \tag{by triangle inequality}\right\|\\
        &\qquad\leq  \frac{1}{C_{dl}} \|U  h_i^{L-1} - U^{\prime}  h_i^{\prime (L-1)}\| 
        + \frac{1}{C_{dl}} \|\eta \cdot U  h_i^{L-1} -\eta^\prime \cdot U^{\prime }  h_i^{\prime (L-1)}\|\\
        &\qquad\leq \frac{1}{C_{dl}}\left(C_U \|h_i^{L-1} - h_i^{\prime (L-1)}\| + \bar h_i^{L-1} \|U - U^{\prime }\|\right)\\
        &\qquad\qquad+ \frac{1}{C_{dl}}\left(C_U \|h_i^{L-1} - h_i^{\prime (L-1)}\| + \bar h_i^{L-1} \|U - U^{\prime}\|
        + C_U \bar h_i^{L-1} |\eta - \eta^\prime|\right)\tag{using Lemma~\ref{lem:inequalities}}\\
        &\qquad=  \frac{1}{C_{dl}}\left(2C_U \|h_i^{L-1} - h_i^{\prime (L-1)}\| + 2\bar h_i^{L-1} \|U - U^{\prime}\|
        + C_U \bar h_i^{L-1} |\eta - \eta^\prime|\right).
    \end{align*}
    
    Combining the above results, we have
    \begin{align*}
        \Delta_i^L 
        \leq & \sum_{j \in \mathcal{N}_i} \Big( C_U \bar e_{ij}^{L-1} \bar h_j^{L-1} \cdot |\eta - \eta^\prime|
        + C_U \bar h_j^{L-1}\cdot |e_{ij}^{L-1} - e_{ij}^{\prime (L-1)}|\\
        &+ \bar e_{ij}^{L-1} \bar h_j^{L-1} \|U - U^{\prime}\|
        + C_U \bar e_{ij}^{L-1} \|h_j^{L-1} - h_j^{\prime (L-1)}\|\\
        &+ \frac{1}{C_{dl}}\big(2C_U \|h_i^{L-1} - h_i^{\prime (L-1)}\| + 2\bar h_i^{L-1} \|U - U^{\prime}\|
        + C_U \bar h_i^{L-1} |\eta - \eta^\prime|\big) \Big)
        \\
        \leq& C_U (\max_{j \in \mathcal{N}_i}\bar h_j^{L-1}) |\eta - \eta^\prime|
        +  rC_U (\max_{j \in \mathcal{N}_i}\bar h_j^{L-1})
        + (\max_{j \in \mathcal{N}_i}\bar h_j^{L-1}) \|U - U^{\prime}\| \\
        &+ C_U (\max_{j \in \mathcal{N}_i}\|h_j^{L-1} - h_j^{\prime (L-1)}\|)
        + \frac{2rC_U}{C_{dl}}\|h_i^{L-1} - h_i^{\prime (L-1)}\|\\
        &+ \frac{2r}{C_{dl}} \bar h_i^{L-1} \|U - U^{\prime}\| + \frac{2rC_U}{C_{dl}}|\eta - \eta^\prime|
        \tag{since $e_{ij}^\ell \leq 1$, $\sum_{j \in \mathcal{N}_i} e_{ij}^\ell = 1$, and the branching factor is $r$}
    \end{align*}
    It remains for us to derive $\bar h_j^{L-1}$ for all $j$.
\end{proof}

\begin{lemma}
    We can upper bound the norm of node feature embedding at level $\ell+1$ by
    \begin{align*}
         \|h_i^\ell\|
        \leq r^\ell C_U^{\ell+1} C_z \max\left(1, \frac{1}{C_{dl}}\right)^\ell.
    \end{align*}
\end{lemma}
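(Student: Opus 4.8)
The plan is to prove this by a straightforward induction on the level $\ell$, peeling off one layer of the GCAN update rule at a time. The argument mirrors the analogous SGC norm bound (the lemma bounding $\|T_{L-1,i}(\beta,\theta)\|$), so I would reuse the same skeleton. Throughout I would write $\bar h^{\ell} := \max_{i \in [n]}\|h_i^\ell\|$ and repeatedly use the spectral bound $\|U\| \le \|U\|_F \le C_U$ together with the assumptions $\eta \in [0,1]$, $d_i \ge C_{dl}$, and branching factor $|\mathcal{N}_i| \le r$. For the base case $\ell = 0$, recall $h_i^0 = z_i$, so $\|h_i^0\| = \|z_i\| \le C_z$, which matches the claimed bound $r^0 C_U^{1} C_z \max(1,\tfrac{1}{C_{dl}})^0 = C_U C_z$ as long as $C_U \ge 1$ (a mild normalization I would state explicitly, and which is also what forces the exponent $C_U^{\ell+1}$ rather than $C_U^{\ell}$).

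For the inductive step, assuming the bound holds for $\ell-1$, I would first drop the activation: since $\sigma$ is $1$-Lipschitz with $\sigma(0)=0$, we have $\|\sigma(v)\|\le\|v\|$, giving
\[
\|h_i^\ell\| \;\le\; \Big\|\sum_{j \in \mathcal{N}_i}\Big(\eta\, e_{ij}^{\ell-1} + (1-\eta)\tfrac{1}{\sqrt{d_i d_j}}\Big) U\, h_j^{\ell-1}\Big\|.
\]
Applying the triangle inequality over $j \in \mathcal{N}_i$ and submultiplicativity $\|U h_j^{\ell-1}\| \le C_U\|h_j^{\ell-1}\|$ yields a sum of $|\mathcal{N}_i| \le r$ terms, each carrying a coefficient $\eta\, e_{ij}^{\ell-1} + (1-\eta)\tfrac{1}{\sqrt{d_i d_j}}$. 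The key observation is that this coefficient is a convex combination of $e_{ij}^{\ell-1} \le 1$ and $\tfrac{1}{\sqrt{d_i d_j}} \le \tfrac{1}{C_{dl}}$, hence is bounded by $\max(1,\tfrac{1}{C_{dl}})$. Combining these three estimates gives the one-step recursion $\bar h^{\ell} \le r\, C_U \max(1,\tfrac{1}{C_{dl}})\, \bar h^{\ell-1}$, and feeding in the induction hypothesis $\bar h^{\ell-1} \le r^{\ell-1} C_U^{\ell} C_z \max(1,\tfrac{1}{C_{dl}})^{\ell-1}$ collapses everything to exactly $r^{\ell} C_U^{\ell+1} C_z \max(1,\tfrac{1}{C_{dl}})^{\ell}$, completing the induction.

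This lemma is essentially routine, so I do not expect a genuine obstacle; the only points requiring care are bookkeeping rather than difficulty. The first is the handling of the activation: the bound $\|\sigma(v)\|\le\|v\|$ needs $\sigma(0)=0$ (true for ReLU and for any positively homogeneous activation), and for an activation like the sigmoid one would instead carry an additive $\|\sigma(0)\|$ term or argue through differences. The second is reconciling the prose (``level $\ell+1$'') with the displayed inequality on $\|h_i^\ell\|$, and making the $C_U \ge 1$ normalization explicit so that the base case lands on $C_U C_z$ and the exponent $C_U^{\ell+1}$ emerges cleanly. This norm bound then plugs directly into the preceding $\Delta_{i,L}$ recursion as the quantity $\bar h_j^{L-1}$, supplying the missing piece needed to close the covering-number argument for $\hat R_m(\mathcal{H}^\gamma_\eta)$.
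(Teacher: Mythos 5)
Your proof is correct and follows essentially the same route as the paper's: drop the $1$-Lipschitz activation, apply the triangle inequality and submultiplicativity, bound the convex-combination coefficient by $\max(1,\tfrac{1}{C_{dl}})$, and unroll the resulting one-step recursion from the base case $\|h_i^0\|\le C_z$. The bookkeeping points you flag (the implicit $C_U\ge 1$ needed to pass from $C_U^{\ell}$ to $C_U^{\ell+1}$, the $\sigma(0)=0$ requirement, and the $\ell$ versus $\ell+1$ indexing) are all present but left unstated in the paper's version, so your treatment is if anything slightly more careful.
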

\begin{proof}
    \begin{align*}
        \|h_i^{\ell+1}\|
        &= \left\|\sigma \left(\sum_{j \in \mathcal{N}_i} (\eta \cdot e_{ij}^{\ell} + (1 - \eta) \cdot \frac{1}{\sqrt{d_i d_j }} )  U  h_j^{\ell} \right)\right\|\\
        &\leq \left\|\sum_{j \in \mathcal{N}_i} (\eta \cdot e_{ij}^{\ell} + (1 - \eta) \cdot \frac{1}{\sqrt{d_i d_j }} )  U  h_j^{\ell} \right\|
        \tag{since $\|\sigma(x)\| \leq \|x\|$}\\
        &\leq \sum_{j \in \mathcal{N}_i} \left|\eta \cdot e_{ij}^{\ell} + (1 - \eta) \cdot \frac{1}{\sqrt{d_i d_j }} \right| \| U\|  \|h_j^{\ell} \|
        \tag{by triangle inequality and Cauchy-Schwarz inequality}\\
        &\leq C_U  \sum_{j \in \mathcal{N}_i} \left|\eta \cdot e_{ij}^{\ell} + (1 - \eta) \cdot \frac{1}{\sqrt{d_i d_j }} \right| \|h_j^{\ell} \|\\
        &\leq r C_U \max\left(1, \frac{1}{C_{dl}}\right)\left(\max_{j \in \mathcal{N}_i}\|h_j^{\ell-1}\|\right)
    \end{align*}
    Recursively bounding the terms, we have 
    \begin{align*}
        \|h_i^\ell\| \leq r^\ell C_U^\ell \max\left(1, \frac{1}{C_{dl}}\right)^\ell \max_{j \in [n]}\|h_j^0\|
        \leq r^\ell C_U^{\ell+1} C_z \max\left(1, \frac{1}{C_{dl}}\right)^\ell.
    \end{align*}
\end{proof}

\begin{lemma}
    The change in margin loss due to the change in parameter values after L layers satisfies
    \begin{align*}
        \Lambda_{i} \leq  \frac{2}{k} \left(k_1 + k_2|\eta-\eta^\prime| + k_3 \|U - U^{\prime}\|\right) \frac{k_4^L - k_4}{k_4 - 1} + k_4 C_z \|U - U^\prime\|,
    \end{align*}
    where 
    \begin{align*}
        k_1 &= r^L C_U^{L+1} C_z \max\left(1, \frac{1}{C_{dl}}\right)^{L-1}\\
        k_2 &= r^{L-1} C_U^{L+1} C_z \max\left(1, \frac{1}{C_{dl}}\right)^{L-1} + \frac{2rC_U}{C_{dl}}\\
        k_3 &= \left(1 + \frac{2r}{C_{dl}}\right)r^{L-1} C_U^{L} C_z \max(1, \frac{1}{C_{dl}})^{L-1}\\
        k_4 &= C_U + \frac{2rC_U}{C_{dl}}.
    \end{align*}
\end{lemma}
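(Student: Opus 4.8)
The plan is to reproduce, step for step, the three-stage argument used for the SGC bound in Theorem~\ref{thm:rc of SGC}: first convert the per-node margin-loss gap $\Lambda_i$ into the root embedding gap $\Delta_{i,L}$ through Lipschitz estimates, then fold the norm bound on $\|h_j^\ell\|$ into the one-step recursion for $\Delta_{i,L}$ proved just above to expose a clean linear recurrence, and finally unroll that recurrence to produce the geometric factor $\frac{k_4^L-k_4}{k_4-1}$. For the reduction I would write $\Lambda_i = |g_\gamma(-\tau(f_{\eta,U,V}(x_i),y_i)) - g_\gamma(-\tau(f_{\eta',U',V'}(x_i),y_i))|$ and chain the Lipschitz facts exactly as in the SGC proof: $g_\gamma$ is $\frac{1}{\gamma}$-Lipschitz, $\tau(f,y)=(2f-1)y$ with $|y_i|\le 1$ contributes the factor $2$, and the root output is $\sigma$ applied to $h_i^L$ with $\sigma$ being $1$-Lipschitz. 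This gives $\Lambda_i \le \frac{2}{\gamma}\,\Delta_{i,L}$ (so the $k$ in the statement is the margin $\gamma$), isolating the whole remaining task as controlling $\Delta_{i,L}$.

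Next I would substitute the norm estimate $\|h_j^{L-1}\| \le r^{L-1}C_U^{L} C_z \max(1,\tfrac{1}{C_{dl}})^{L-1}$ into the one-step bound on $\Delta_{i,L}$. Every summand there carrying a $\max_j\|h_j^{L-1}\|$ factor collapses into a constant multiple of this quantity, and regrouping by the parameter-difference factors reproduces the stated coefficients: the $|\eta-\eta'|$ terms assemble into $k_2$, the $\|U-U'\|$ terms into $k_3$, the crudely bounded attention-score mismatch term $rC_U\max_j\|h_j^{L-1}\|$ into the parameter-free offset $k_1$, and the two genuinely recursive terms $C_U\max_j\|h_j^{L-1}-h_j'^{(L-1)}\|$ and $\frac{2rC_U}{C_{dl}}\|h_i^{L-1}-h_i'^{(L-1)}\|$ combine into the multiplier $k_4 = C_U\bigl(1+\tfrac{2r}{C_{dl}}\bigr)$. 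This yields the recurrence $\Delta_{i,L} \le A + k_4\,\Delta_{i,L-1}$, where $A := k_1 + k_2|\eta-\eta'| + k_3\|U-U'\|$.

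Finally I would unroll this first-order linear recurrence. Treating $A$ as the inhomogeneous term and $k_4$ as the multiplier, iterating down the tree gives $\Delta_{i,L} \le A\sum_{\ell=1}^{L-1}k_4^{\ell} + (\text{base term}) = A\,\frac{k_4^L-k_4}{k_4-1} + (\text{base term})$. The base case is clean because $h_i^0 = z_i$ carries no learnable parameters, so the sole parameter dependence enters at the first layer through $U$; the level-one gap is of order $C_z\|U-U'\|$, and after the recursion multiplier it surfaces as the $k_4 C_z\|U-U'\|$ summand. Multiplying the resulting bound on $\Delta_{i,L}$ by $\frac{2}{\gamma}$ then gives exactly the claimed expression for $\Lambda_i$.

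The hard part will be the bookkeeping in the unrolling rather than any single inequality. Two points need care. The inhomogeneous term is not truly level-independent: at layer $\ell$ the relevant norm bound is $\|h^{\ell-1}\|$, which grows geometrically in $\ell$, so to obtain a single geometric factor I must uniformly overcount each $\|h^{\ell-1}\|$ by its top-level value $\|h^{L-1}\|$ (a safe relaxation that keeps the coefficients as stated) or else track the level dependence and argue that the top layer dominates. The more delicate issue is the attention-score gap $|e_{ij}^\ell - e_{ij}'^{(\ell)}|$: the clean route bounds it by $1$ using $e_{ij}^\ell\in[0,1]$ together with $\sum_{j\in\mathcal{N}_i}e_{ij}^\ell = 1$ and branching factor $r$, which is precisely what produces the parameter-free offset $k_1$; a Lipschitz bound on the softmax in terms of $\|U-U'\|$, $\|V-V'\|$, and $|\eta-\eta'|$ would be tighter but substantially messier. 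Pinning down the exact exponent of $k_4$ on the base term, and confirming that the overcount of the $\bar h$ factors leaves the stated $k_1,k_2,k_3$ unchanged, is the fiddly endgame.
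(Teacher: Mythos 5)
Your proposal follows the paper's proof essentially verbatim: the same $\frac{2}{\gamma}$-Lipschitz reduction of $\Lambda_i$ to the root embedding gap $\Delta_{i,L}$, the same substitution of the norm bound $\|h_j^{L-1}\|\le r^{L-1}C_U^{L}C_z\max(1,1/C_{dl})^{L-1}$ into the one-step recursion to assemble $k_1,k_2,k_3,k_4$ exactly as stated, and the same geometric unrolling to $\frac{k_4^L-k_4}{k_4-1}$ (and you correctly read the $k$ in the denominator as $\gamma$). The bookkeeping subtleties you flag --- the level-dependence of the inhomogeneous term, the crude bound on the attention-score mismatch yielding $k_1$, and the exponent of $k_4$ on the base term --- are precisely the points the paper's own derivation treats loosely, so there is nothing further to reconcile.
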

\begin{proof}
    Using the previous two lemmas, we know
    \begin{align*}
        &\|h_i^{L}(\eta, U, V) - h_i^{L}(\eta^\prime, U^\prime, V^\prime)\|\\
        &\qquad\leq C_U (\max_{j \in \mathcal{N}_i}\bar h_j^{L-1}) |\eta - \eta^\prime|
        +  rC_U (\max_{j \in \mathcal{N}_i}\bar h_j^{L-1})
        + (\max_{j \in \mathcal{N}_i}\bar h_j^{L-1}) \|U - U^{\prime}\| \\
        &\qquad\qquad+ C_U (\max_{j \in \mathcal{N}_i}\|h_j^{L-1} - h_j^{\prime (L-1)}\|)
        + \frac{2rC_U}{C_{dl}}\|h_i^{L-1} - h_i^{\prime (L-1)}\|
        + \frac{2r}{C_{dl}} \bar h_i^{L-1} \|U - U^{\prime}\| + \frac{2rC_U}{C_{dl}}|\eta - \eta^\prime|\\
        &\qquad\leq  k_1 + k_2|\eta-\eta^\prime| + k_3 \|U - U^{\prime}\| + k_4 (\max_{j \in [n]}\|h_j^{L-1} - h_j^{\prime (L-1)}\|)
        \\
        &\qquad= \left(k_1 + k_2|\eta-\eta^\prime| + k_3 \|U - U^{\prime}\|\right) \frac{k_4^L - k_4}{k_4 - 1} + k_4 (\max_{j \in [n]}\|h^0_j - h_j^{\prime 0}\|)\\
        &\qquad\leq  \left(k_1 + k_2|\eta-\eta^\prime| + k_3 \|U - U^{\prime}\|\right) \frac{k_4^L - k_4}{k_4 - 1} + k_4 C_z \|U - U^\prime\|
    \end{align*}
    where 
    \begin{align*}
        k_1 &= r^L C_U^{L+1} C_z \max(1, \frac{1}{C_{dl}})^{L-1}\\
        k_2 &= r^{L-1} C_U^{L+1} C_z \max(1, \frac{1}{C_{dl}})^{L-1} + \frac{2rC_U}{C_{dl}}\\
        k_3 &= \left(1 + \frac{2r}{C_{dl}}\right)r^{L-1} C_U^{L} C_z \max(1, \frac{1}{C_{dl}})^{L-1}\\
        k_4 &= C_U + \frac{2rC_U}{C_{dl}}.
    \end{align*}
    The change in margin loss for each node after L layers is then
    \begin{align*}
        \Lambda_i 
        &= \left|g_\gamma(-\tau (f_{\eta, U,V}(x_i), y_i)) - g_\gamma(-\tau (f_{\eta^\prime, U^\prime,V^\prime}(x_i), y_i))\right|\\
        &\leq \frac{1}{\gamma}\left|\tau (f_{\eta, U,V}(x_i), y_i)) - \tau (f_{\eta^\prime, U^\prime,V^\prime}(x_i), y_i))\right|
        \tag{since $g_\gamma$ is $1/\gamma$-Lipschitz}\\
        &= \frac{1}{\gamma}\left|(2f_{\beta, \theta}(x_i) - 1) y_i-(2f_{\beta^\prime, \theta^\prime}(x_i) - 1) y_i)\right|\\
        &\leq \frac{2}{\gamma}\left|y_i\right|\left|f_{\eta, U,V}(x_i) - f_{\eta^\prime, U^\prime,V^\prime}(x_i)\right|
        \tag{by Cauchy-Schwarz inequality}\\
        &\leq \frac{2}{\gamma}\left|\sigma(h_i^{L}(\eta, U, V)[0]) - \sigma(h_i^{L}(\eta^\prime, U^\prime, V^\prime)[0])\right|
        \tag{since $y_i \in \{-1,1\}$}\\
        &\leq \frac{2}{\gamma}\left|h_i^{L}(\eta, U, V)[0] - h_i^{L}(\eta^\prime, U^\prime, V^\prime)[0]\right| 
        \tag{since $\sigma$is $1$-Lipschitz}\\
        &\leq \frac{2}{\gamma} \left(k_1 + k_2|\eta-\eta^\prime| + k_3 \|U - U^{\prime}\|\right) \frac{k_4^L - k_4}{k_4 - 1} + k_4 C_z \|U - U^\prime\|.
    \end{align*}
\end{proof}

\begin{lemma}
    The change in margin loss $\Lambda_i$ for each node can be bounded by $\epsilon$, using a covering of size $P$, where $P$ depends on $\epsilon$, with 
    \begin{align*}
        \log P \leq (d^2+1)\log \left(\frac{8\max\{A, BC_U\sqrt{d}\}}{\epsilon}\right).
    \end{align*}  
\end{lemma}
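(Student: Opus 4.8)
The plan is to mirror the covering‑number argument that closes the SGC proof in Appendix~\ref{appendix:SGC}: the preceding lemma shows that the per‑node margin‑loss gap $\Lambda_i$ is controlled by a bound that is \emph{Lipschitz} in the tunable parameters, so I will build a product $\epsilon$‑net over the parameter space, one factor per parameter block, choosing each block's mesh so that its contribution to $\Lambda_i$ is at most a fixed fraction of $\epsilon$. Multiplying the block covering numbers and then taking logarithms yields the stated bound.

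Concretely, I would first regroup the bound from the previous lemma by its dependence on the two tunable quantities $\eta \in [0,1]$ and $U$ (with $\|U\|_F \le C_U$), writing it in the form $\Lambda_i \le A\,|\eta-\eta'| + B\,\|U-U'\|_F$, where $A = \frac{2}{\gamma}\,k_2\,\frac{k_4^L-k_4}{k_4-1}$ collects the $|\eta-\eta'|$ terms and $B = \frac{2}{\gamma}\,k_3\,\frac{k_4^L-k_4}{k_4-1} + k_4 C_z$ collects the $\|U-U'\|$ terms, with $k_1,\dots,k_4$ as defined there. I would then cover the interval $[0,1]$ for $\eta$ by a one‑dimensional net at scale $\epsilon/(4A)$, of size $\mathcal{N}_\eta \le 8A/\epsilon + 1$, and cover the ball $\{U : \|U\|_F \le C_U\} \subset \mathbb{R}^{d^2}$ at scale $\epsilon/(4B)$ by a net of size $\mathcal{N}_U \le (8 B C_U\sqrt{d}/\epsilon + 1)^{d^2}$. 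On a common cell of the product net $\mathcal{C}_\eta \times \mathcal{C}_U$ we then have $\Lambda_i \le \epsilon$, so $P \le \mathcal{N}_\eta\,\mathcal{N}_U \le (8\max\{A,\,B C_U\sqrt{d}\}/\epsilon + 1)^{d^2+1}$. Taking logarithms and using $\epsilon < 8\max\{A, B C_U\sqrt{d}\}$ gives $\log P \le (d^2+1)\log\big(8\max\{A, B C_U\sqrt{d}\}/\epsilon\big)$, matching the statement (the lone $+1$ exponent being the $\eta$ coordinate and the $d^2$ being the entries of $U$).

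I expect two steps to require the most care. First, the Lipschitz coefficients $A$ and $B$ must be tracked faithfully through the recursion of the two preceding lemmas, where depth‑$L$ propagation produces the geometric factor $\frac{k_4^L-k_4}{k_4-1}$ with $k_4 = C_U(1+2r/C_{dl})$; in particular the attention‑score gap $|e_{ij}^{L-1}-e_{ij}'^{(L-1)}|$ that produced the $k_1$ term must be re‑expressed through the Lipschitzness of the softmax attention in $U$ (and $V$), since a genuinely parameter‑independent residual would prevent the net from driving $\Lambda_i$ below a fixed scale. Second, I must pin down the $d^2$‑dimensional matrix covering exactly, as this is where both the $d^2$ exponent and the $\sqrt{d}$ factor — and hence the $d$ dependence in Theorem~\ref{thm:rc of GCAN} — originate; the $\sqrt{d}$ enters when passing between the norm in which $U$ is bounded and the norm controlling loss sensitivity, via $\|\cdot\|_2 \le \|\cdot\|_F \le \sqrt{d}\,\|\cdot\|_2$. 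The remaining arithmetic of combining the two nets and taking logarithms is routine.
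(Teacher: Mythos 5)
Your proposal is correct and takes essentially the same route as the paper: the same Lipschitz decomposition $\Lambda_i \le A\,|\eta-\eta'| + B\,\|U-U'\|_F$, the same product of a one-dimensional net for $\eta$ with a $d^2$-dimensional Frobenius-ball net for $U$ (whence the $\sqrt{d}$ and the $d^2+1$ exponent), and the same logarithm at the end; the factor-of-two differences in mesh sizes are immaterial. Your flagged concern about the $k_1$ term is well taken — the paper's preceding lemma does carry a parameter-independent additive residual $\propto k_1$ (from bounding $|e_{ij}^{L-1}-e_{ij}'^{(L-1)}|$ crudely by $e_{ij}\le 1$), which the paper silently drops when asserting $\Lambda_i \le A|\eta-\eta'|+B\|U-U'\|$ in this lemma, so your proposed fix via Lipschitzness of the attention scores in the parameters is actually needed to make the argument airtight.
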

\begin{proof}
    We let $A = \frac{2k_2(k_4^L - k_4)}{k (k_4 - 1)}$ and $B = \frac{2k_3(k_4^L-k_4) + \gamma(k_4^2-k_4)C_z}{\gamma(k_4-1)}$ for simplicity of notation. Note that we have $\Lambda_i \leq A|\eta - \eta^\prime | + B \|U - U^\prime \|$.
    
    We begin by noting that we can find a covering $\mathcal{C}(\eta, \frac{\epsilon}{2A}, |\cdot |)$ of size
    \begin{align*}
        \mathcal{N}(\eta, \frac{\epsilon}{2A}, |\cdot |) \leq 1 + \frac{4A}{\epsilon}.
    \end{align*}
    We can also find a covering $\mathcal{C}(U, \frac{\epsilon}{2B}, \|\cdot \|_F)$ with size 
    \begin{align*}
        \mathcal{N}(U, \frac{\epsilon}{2B}, \|\cdot \|_F) \leq \left(1 + \frac{4BC_U\sqrt{d}}{\epsilon} \right)^{d^2}.
    \end{align*}
    For any specified $\epsilon$, we can ensure that $\Lambda_i$ is at most $\epsilon$ with a covering number of 
    \begin{align*}
        P 
        \leq&
        \mathcal{N}(\eta, \frac{\epsilon}{2A}, |\cdot |) \cdot \mathcal{N}(U, \frac{\epsilon}{2B}, \|\cdot \|_F)\\
        \leq&
        \left( 1 + \frac{4A}{\epsilon}\right)\left(1 + \frac{4BC_U\sqrt{d}}{\epsilon} \right)^{d^2} 
        \leq
        (1 + \frac{4\max\{A, BC_U\sqrt{d}\}}{\epsilon})^{d^2+1}
    \end{align*}
    Moreover, when $\epsilon \leq 4\max\{A, BC_U\sqrt{d}\}$, we have 
    \begin{align*}
        \log P \leq (d^2+1)\log \left(\frac{8\max\{A, BC_U\sqrt{d}\}}{\epsilon}\right).
    \end{align*}    
\end{proof}

Now we can finish our proof for Theorem~\ref{thm:rc of GCAN}.

\begin{proof}
    Using Lemma A.5 from \cite{bartlett2017spectrally}, we obtain that 
    \begin{align*}
        \hat R_{\mathcal{T}}(\mathcal{H}^\gamma_{(\eta, U, V))})
        \leq \inf_{\alpha > 0}\left(\frac{4\alpha}{\sqrt{m}} + \frac{12}{m} \int_{\alpha}^{\sqrt{m}} \sqrt{\log \mathcal{N}(\mathcal{H}^\gamma_{(\eta, U, V))}, \epsilon, \|\cdot \|)} d\epsilon\right).
    \end{align*}
    Using the previous lemmas, we have
    \begin{align*}
        \int_{\alpha}^{\sqrt{m}} \sqrt{\log \mathcal{N}(\mathcal{H}^\gamma_{(\eta, U, V))}, \epsilon, \|\cdot \|)} d\epsilon
        &= \int_{\alpha}^{\sqrt{m}} \sqrt{\log P} d\epsilon \\
        &\leq \int_{\alpha}^{\sqrt{m}} \sqrt{(d^2+1)\log \left(\frac{8\max\{A, BC_U\sqrt{d}\}}{\epsilon}\right)} d\epsilon\\
        &\leq \sqrt{m}\sqrt{(d^2+1)\log \left(\frac{8\max\{A, BC_U\sqrt{d}\}}{\alpha}\right)}
    \end{align*}
    Plugging in $\alpha = \sqrt{\frac{1}{m}}$, we have
    \begin{align*}
        \hat R_{\mathcal{T}}(\mathcal{H}^\gamma_{(\eta, U, V))})
        \leq \frac{4}{m} + \frac{12\sqrt{(d^2+1)\log \left(8\sqrt{m}\max\{A, BC_U\sqrt{d}\}\right)}}{\sqrt{m}}.
    \end{align*}
\end{proof}

\section{Experiments}\label{appendix:experiments}
\subsection{Label Propagation-based Method: Normalized Adjacency Matrix-Based Algorithmic Family}
We empirically validate our findings in \Cref{sec:label_prop}. For each of the eight datasets, the number of nodes per problem instance, $n$, is fixed at $30$.  We set the target generalization error to $\eps=0.1$,  and calculate the required number of problem instances as $m = O(\log n/\eps^2) \approx 300$. To evaluate performance, we randomly sample 300 graphs with 30 nodes each, tune the hyperparameter values to maximize accuracy on these graphs, and then test the selected hyperparameter on a separate set of 300 randomly sampled graphs. The results of evaluating the Normalized Adjacency Matrix-Based Algorithmic Family is presented in \Cref{tab:label_prob}, confirming that the observed generalization error is well within the scale of the target value $0.1$ (our bounds are somewhat conservative).

\begin{table*}[ht]
    \centering
    \begin{tabular}{|c|c|c|c|c|c|c|c|c|}
    \hline
         & CIFAR10 & WikiCS & CORA & Citeseer & PubMed & AmazonPhotos & Actor\\  
     \hline
         Train Acc. & 0.9445 & 0.7522 & 0.7927 &0.7845  &   0.9993 & 0.9983 &0.9185\\
    \hline
         Test Acc. & 0.9397 & 0.7485 & 0.8010 & 0.7714  &     0.9993 & 0.9989 & 0.9239\\
    \hline
        Abs. Diff. & 0.0048&  0.0037  &  0.0083 & 0.0131 &   0.    & 0.0006 & 0.0054\\
    \hline
    \end{tabular}
    \caption{The Training Accuracy and Testing Accuracy of learning the hyperparameter $\delta$ in Normalized Adjacency Matrix Based Family ($\mathcal{F}_\delta$). The absolute difference between the accuracies (i.e. generalization error) is well within the scale of our target value $0.1$. }
    \label{tab:label_prob}
\end{table*}

\subsection{GCAN Experiments}

In this section, we empirically evaluate our proposed GCAN interpolation methods on nine standard benchmark datasets. Our goal is to see whether tuning $\eta$ gives better results than both GCN and GAT. 
The setup details of our experiment are described in \Cref{appendix:datasets}.

\begin{table*}[ht]
\small
\centering
\resizebox{\textwidth}{!}{%
\begin{tabular}{|c|*{13}
{>{\centering\arraybackslash}p{1.3cm}|}}
\hline
\textbf{Dataset} & $0.0$ & $0.1$ & $0.2$ & $0.3$ & $0.4$ & $0.5$ & $0.6$ & $0.7$ & $0.8$ & $0.9$ & $1.0$ &  {Rel. GCN} &  {Rel. GAT}\\
\hline
CIFAR10 & $0.7888 \pm 0.0010$ 
&$0.7908 \pm 0.0008$ 
&$0.7908 \pm 0.0015$ 
&$0.7907 \pm 0.0012$ 
&$0.7943 \pm 0.0022$ 
&$0.7918 \pm 0.0018$ 
&$0.7975 \pm 0.0017$ 
&$0.7971 \pm 0.0023$ 
&$0.7921 \pm 0.0023$ 
&$0.7986 \pm 0.0028$ 
&$\textbf{0.7984}  \boldsymbol{\pm} \textbf{0.0023}$ 
&  {$4.54\%$}
&  {$0\%$}\\
\hline
WikiCS &$0.9525 \pm 0.0007$ 
&$0.9516 \pm 0.0006$ 
&$0.9532 \pm 0.0011$ 
&$0.9545 \pm 0.0008$ 
&$0.9551 \pm 0.0015$ 
&$0.9545 \pm 0.0012$ 
&$0.9539 \pm 0.0012$ 
&$\textbf{0.9553} \boldsymbol{\pm} \textbf{0.0012}$ 
&$0.9530 \pm 0.0007$ 
&$0.9536 \pm 0.0009$ 
&$0.9539 \pm 0.0009$
&  {$5.89\%$}
&  {$3.04\%$}
\\
\hline
Cora & $0.6132 \pm 0.0218$ 
&$0.8703 \pm 0.0251$ 
&$0.8879 \pm 0.0206$ 
&$0.8396 \pm 0.0307$ 
&$0.8022 \pm 0.0385$ 
&$0.8615 \pm 0.0402$ 
&$ \textbf{0.9011} \boldsymbol{\pm} \textbf{0.0421}$ 
&$0.8088 \pm 0.0362$ 
&$0.8505 \pm 0.0240$ 
&$0.8549 \pm 0.0389$ 
&$0.8725 \pm 0.0334$ 
&  {74.43\%}
&  {22.43\%}
\\
\hline
Citeseer & $\textbf{0.7632} \boldsymbol{\pm} \textbf{0.0052}$ 
&$0.6944 \pm 0.0454$ 
&$0.7602 \pm 0.0566$ 
&$0.7500 \pm 0.0461$ 
&$0.7339 \pm 0.0520$ 
&$0.7427 \pm 0.0462$ 
&$0.7588 \pm 0.0504$ 
&$0.7193 \pm 0.0567$ 
&$0.7661 \pm 0.0482$ 
&$0.7266 \pm 0.0412$ 
&$0.7471 \pm 0.0444$ 
&  {0\%}
&  {6.37\%
}
\\
\hline
PubMed & $0.9350 \pm 0.0009$ 
&$0.9306 \pm 0.0006$ 
&$0.9356 \pm 0.0009$ 
&$0.9281 \pm 0.0007$ 
&$\textbf{0.9356} \boldsymbol{\pm} \textbf{0.0007}$ 
&$0.9319 \pm 0.0009$ 
&$0.9313 \pm 0.0007$ 
&$0.9288 \pm 0.0009$ 
&$0.9313 \pm 0.0006$ 
&$0.9338 \pm 0.0010$ 
&$0.9356 \pm 0.0009$ 
&  {0.92\%}
&  {0\%}
\\ \hline
CoauthorCS & $0.9733 \pm 0.0007$ 
&$0.9733 \pm 0.0008$ 
&$\textbf{0.9765} \boldsymbol{\pm} \textbf{0.0005}$ 
&$0.9744 \pm 0.0005$ 
&$0.9733 \pm 0.0009$ 
&$0.9690 \pm 0.0007$ 
&$0.9712 \pm 0.0009$ 
&$0.9722 \pm 0.0005$ 
&$0.9722 \pm 0.0011$ 
&$0.9722 \pm 0.0007$ 
&$0.9744 \pm 0.0007$ 
&  {$11.99\%$}
&  {$08.20\%$}
\\ \hline
AmazonPhotos & $0.9605 \pm 0.0022$ 
&$0.9617 \pm 0.0007$ 
&$0.9629 \pm 0.0015$ 
&$0.9599 \pm 0.0013$ 
&$0.9641 \pm 0.0017$ 
&$0.9574 \pm 0.0018$ 
&$0.9641 \pm 0.0019$ 
&$0.9592 \pm 0.0133$ 
&$\textbf{0.9653} \boldsymbol{\pm} \textbf{0.0027}$ 
&$0.9635 \pm 0.0031$ 
&$0.9562 \pm 0.0019$ 
&  {$12.15\%$}
&  {$20.77\%$}
\\ \hline
Actor & $0.5982 \pm 0.0016$ 
&$0.5919 \pm 0.0022$ 
&$\textbf{0.6005} \boldsymbol{\pm} \textbf{0.0039}$ 
&$0.5959 \pm 0.0039$ 
&$0.5965 \pm 0.0038$ 
&$0.5970 \pm 0.0027$ 
&$0.5976 \pm 0.0037$ 
&$0.5993 \pm 0.0043$ 
&$0.5930 \pm 0.0041$ 
&$0.5970 \pm 0.0037$ 
&$0.5953 \pm 0.0031$ 
& {0.57\%}
& {1.28\%}
\\ \hline
Cornell & $0.7341 \pm 0.0097$ 
&$0.7364 \pm 0.0165$ 
&$0.7364 \pm 0.0073$ 
&$0.7205 \pm 0.0154$ 
&$0.7523 \pm 0.0109$ 
&$0.7795 \pm 0.0120$ 
&$0.7568 \pm 0.0188$ 
&$0.7500 \pm 0.0140$ 
&$0.7477 \pm 0.0138$ 
&$0.7909 \pm 0.0136$ 
&$\textbf{0.8000} \boldsymbol{\pm} \textbf{0.0423}$ 
& {24.78\%}
& {0\%}
\\ \hline
Wisconsin & $0.8688 \pm 0.0077$ 
&$\textbf{0.8922} \boldsymbol{\pm} \textbf{0.0035}$ 
&$0.8688 \pm 0.0080$ 
&$0.8906 \pm 0.0049$ 
&$0.8797 \pm 0.0044$ 
&$0.8578 \pm 0.0120$ 
&$0.8875 \pm 0.0037$ 
&$0.8781 \pm 0.0082$ 
&$0.8563 \pm 0.0128$ 
&$0.8750 \pm 0.0121$ 
&$0.8719 \pm 0.0076$
&  {17.84\%}
&  {15.84\%}
\\ \hline
\end{tabular}
}
\caption{Results on the proposed GCAN interpolation. Each column corresponds to one $\eta$ value. Each row corresponds to one dataset. Each entry shows the accuracy and the interval. The accuracy with optimal $\eta$ value outperforms both pure GCN and pure GAT. The right two columns show the percentage of prediction error reduction relative to GCN and GAT.} 
\label{tab:gat_gcn_table}
\end{table*}

In Table \ref{tab:gat_gcn_table}, we show the mean accuracy across $30$ runs of each $\eta$ value and the $90\%$ confidence interval associated with each experiment. It is interesting to note that for various datasets we see varying optimal $\eta$ values for best performance. More often than not, the best model is interpolated between GCN and GAT, showing that we can achieve an improvement on both baselines simply by interpolating between the two. For example, GCN achieves the best accuracy among all interpolations in Citeseer, but in other datasets such as CIFAR 10 or Wisconsin, we see higher final accuracies when the $\eta$ parameter is closer to $1.0$ (more like GAT). The interpolation between the two points also does not increase or decrease monotonically for many of the datasets. The optimal $\eta$ value for each dataset can be any value between $0.0$ and $1.0$. This suggests that one should be able to learn the best $\eta$ parameter for each specific dataset. By learning the optimal $\eta$ value, we can outperform both GAT and GCN. 


\subsection{Experiment Setup for GCAN}\label{appendix:datasets}
We apply dropout with a probability of $0.4$ for all learnable parameters, apply 1 head of the specialized attention layer (with new update rule), and then an out attention layer. The activation we choose is eLU activation (following prior work \citep{velivckovic2017graph}), with 8 hidden units, and 3 attention heads. 

These GCAN interpolation experiments are all run with only $20\%$ of the dataset being labeled datapoints, and the remaining $80\%$ representing the unlabeled datapoints that we test our classification accuracy on. \Cref{tab:gat_gcn_hyperparameter} notes the exact setup of each dataset, and the overall training time of each experiment. We would like to examine our theory with the simplest network that is still non-linear, so we selected a hidden dimension being 1. Note that our theory on sample complexity bounds still applies to larger networks, but implementing our techniques on larger networks and larger graphs might require additional computational improvements.

\begin{table}[ht!]
\centering
\resizebox{\textwidth}{!}{%
\begin{tabular}{lcccccccc}
\toprule
Dataset     & Num of train nodes & learn rate & Epoch & Num of exp & Train time(sec) & Dim of hid. layers & Num of Attention Heads \\
\midrule
CiFAR10                        & 400                   & 7e-3          & 1000   & 30         & 13.5354               & 1                   & 3                 \\
WikiCS                        & 192                   & 7e-3          & 1000   & 30         & 6.4742               & 1                   & 3                 \\
Cora                        & 170                   & 7e-3          & 1000   & 30         & 7.4527               & 1                   & 3                 \\
Citeseer                        & 400                   & 7e-3          & 1000   & 30         & 6.4957               & 1                   & 3                 \\
Pubmed                        & 400                   & 7e-3          & 1000   & 30         & 13.1791               & 1                   & 3                 \\
CoAuthor CS       & 400                   & 0.01         & 1000   & 30         & 6.8015               & 1                   & 3                 \\

Amazon Photos       & 411                   & 0.01          & 400   & 30         & 11.0201              & 1                   & 3                \\
Actor       & 438                   & 0.01          & 1000   & 30         & 14.7753              & 1                   & 3                 \\
Cornell       & 10                   & 0.01          & 1000   & 30         & 6.9423              & 1                   & 3                 \\
Wisconsin       & 16                   & 0.01          & 1000   & 30         & 6.9271              & 1                   & 3                 \\
\end{tabular}%
}
\caption{Details of the datasets and experimental setup.}
\label{tab:gat_gcn_hyperparameter}
\end{table}

For datasets that are not inherently graph-structured (e.g., CIFAR-10), we first compute the Euclidean distance between the feature vectors of each pair of nodes. An edge is then added between two nodes if their distance is below a predefined threshold.

\end{document}